\newtheorem{theorem}{Theorem}[section]
\newtheorem{corollary}{Corollary}[theorem]
\newtheorem{lemma}[theorem]{Lemma}
\newtheorem{definition}[theorem]{Definition}
\newtheorem{assumption}[theorem]{Assumption}
\newtheorem{remark}[theorem]{Remark}
\newtheorem{proposition}[theorem]{Proposition}
\newenvironment{sketchproof}{\noindent\textit{Proof Idea:}}{\hfill$\square$}
\title{Characterizing Overfitting in Kernel Ridgeless Regression Through the Eigenspectrum}
\author{%
  Tin Sum Cheng  \quad Aurelien Lucchi \\
  Department of Mathematics and Computer Science\\
  University of Basel\\
  \texttt{\{tinsum.cheng, aurelien.lucchi\}@unibas.ch} \\
  \And
 Anastasis Kratsios\\
 Department of Mathematics and Statistics, \\
 McMaster University and Vector Institute, \\ 
 \texttt{kratsioa@mcmaster.ca}
 \And
 David Belius\\
Faculty of Mathematics and Computer Science\\
UniDistance Suisse\\
\texttt{david.belius@cantab.ch}
}
\begin{document}
\maketitle

\begin{abstract}
We derive new bounds for the condition number of kernel matrices, which we then use to enhance existing non-asymptotic test error bounds for kernel ridgeless regression (KRR) in the over-parameterized regime for a fixed input dimension. For kernels with polynomial spectral decay, we recover the bound from previous work; for exponential decay, our bound is non-trivial and novel.
Our contribution is two-fold: (i) we rigorously prove the phenomena of tempered overfitting and catastrophic overfitting under the sub-Gaussian design assumption, closing an existing gap in the literature; (ii) we identify that the independence of the features plays an important role in guaranteeing tempered overfitting, raising concerns about approximating KRR generalization using the Gaussian design assumption in previous literature.
\end{abstract}
\section{Introduction} \label{section:introduction}
\begin{figure}[th]
    \centering
    \includegraphics[width = 0.48\columnwidth]{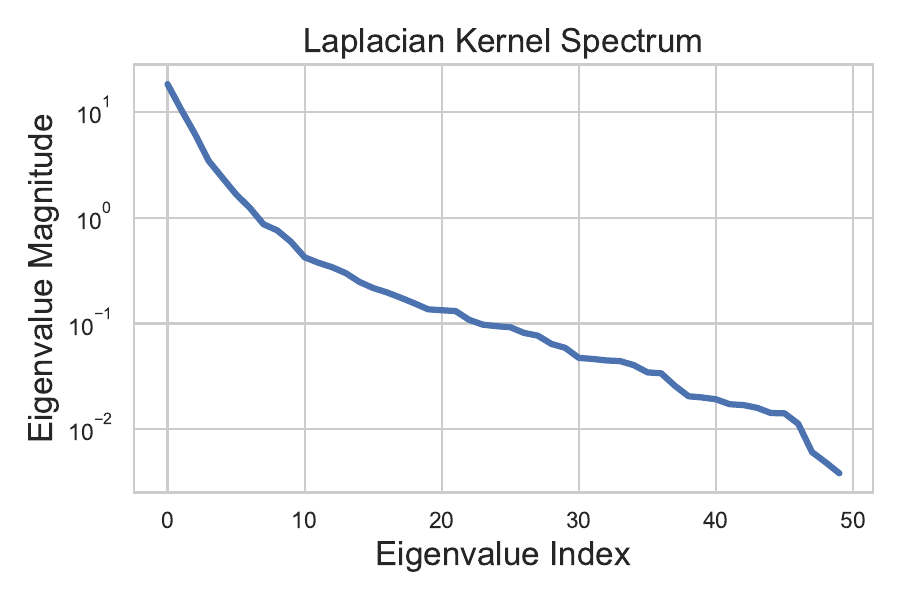}
    \includegraphics[width = 0.48\columnwidth]{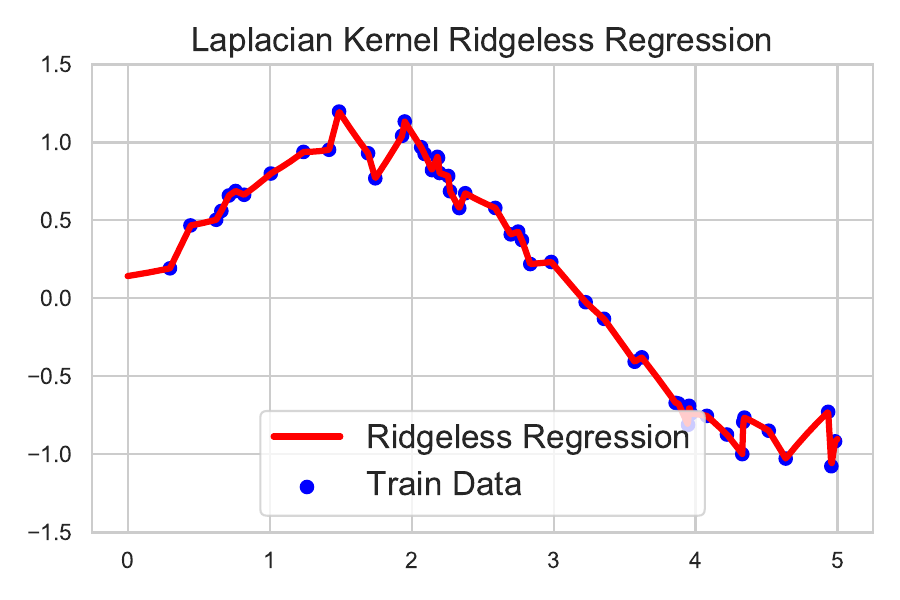}
        \includegraphics[width = 0.48\columnwidth]{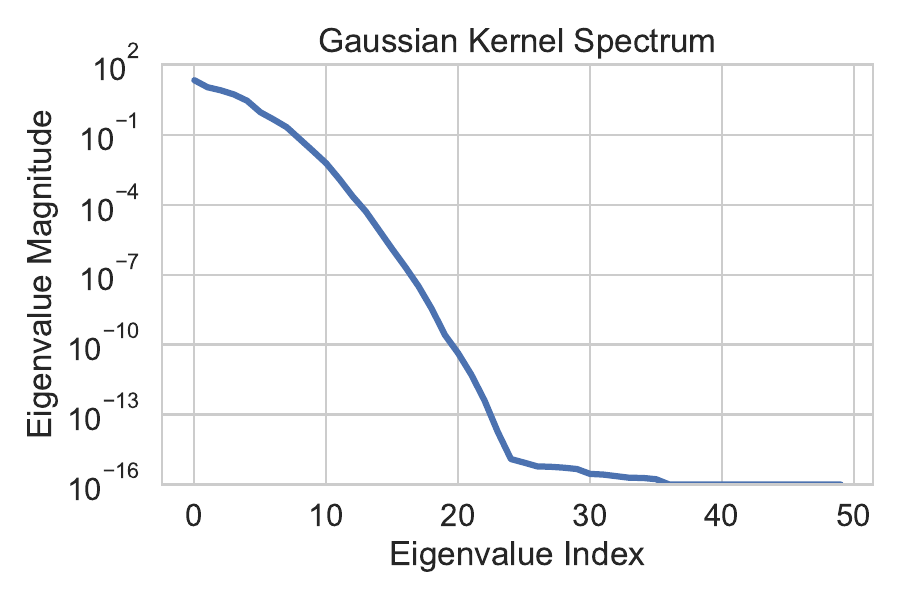}
    \includegraphics[width = 0.48\columnwidth]{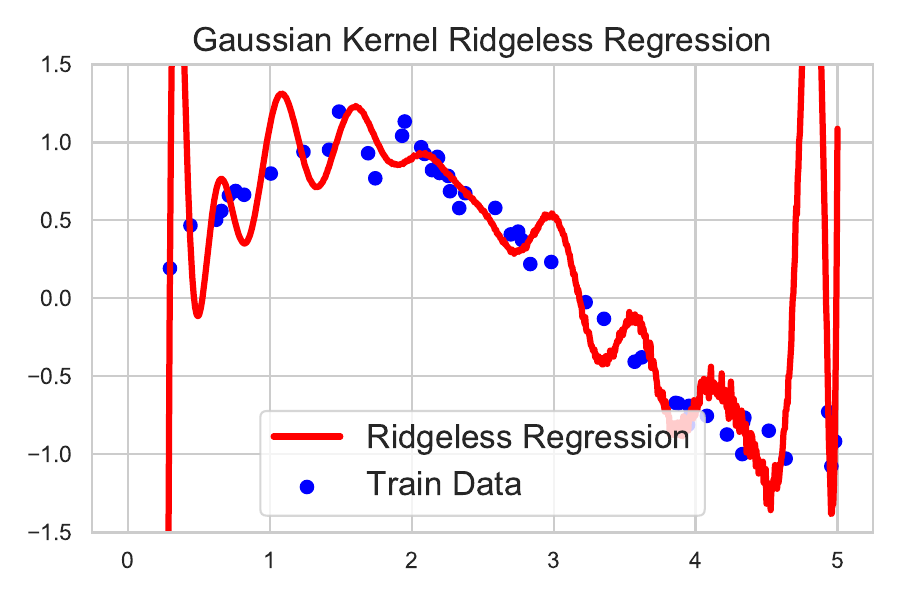}
    \caption{Kernel spectra for Laplacian and Gaussian kernels and their overfitting behaviours.  
    \hfill\\
    \textbf{Tempered Overfitting}: The empirical kernel spectrum of the Laplacian kernel decays moderately (top left), and so does the quality of its test-set performance as one departs from the training data (top right). 
    \hfill\\
    \textbf{Catastrophic Overfitting}: The Gaussian kernel exhibits rapid spectral decay (bottom left), and so does the reliability of its test-set performance for inputs far from the training data (bottom right). 
    }
    \label{fig:kernel_example}
\end{figure}

Kernel ridge regression (KRR) plays a pivotal role in machine learning since it offers an expressive and rapidly trainable framework for modeling complex relationships in data. In recent years, kernels have regained significance in deep learning theory since many deep neural networks (DNNs) can be understood as converging to certain kernel limits. 

Its significance has been underscored by its ability to approximate deep neural network (DNN) training under certain conditions, providing a tractable avenue for analytical exploration of test error and robust theoretical guarantees \cite{jacot2018neural,arora2019fine,bordelon2020spectrum}. The adaptability of kernel regression positions it as a crucial tool in various machine learning applications, making it imperative to comprehensively understand its behavior, particularly concerning overfitting.

Despite the increasing attention directed towards kernel ridge regression, the existing literature predominantly concentrates on overfitting phenomena in either the high input dimensional regime or the asymptotic regime \cite{liang2020just,mei2022generalization,misiakiewicz2022spectrum}, also known as the ultra-high dimensional regime \cite{zou2009adaptive,FanSamwothWu_2009_UltraHighDim}. Notably, the focus on asymptotic bounds, requiring the input dimension to approach infinity, may not align with the finite nature of real-world datasets and target functions. Similarly, classical Rademacher-based bounds, e.g.~\cite{BartlettMendelson_2002_JMLR_GausRadComplex}, require that the weights of the kernel regressor satisfy data-independent bounds, a restriction that is also not implemented in standard kernel ridge regression algorithms.  These mismatches between idealized mathematical assumptions and practical implementation standards necessitate a more nuanced exploration of overfitting in kernel regression in a fixed input dimension.
s
\paragraph{Contributions}
This work aims to understand the overfitting behaviour for kernel ridge regression (KRR).
Our main contributions are summarized as follows:
\begin{enumerate}
    \item We rigorously derive \textit{tight} non-asymptotic upper and lower bounds for the test error of the minimum norm interpolant under a sub-Gaussian design assumption on the features. While this assumption assumes independence of the features, this point will be relaxed in contribution \#3.
    \item Consequently, we show that a polynomially decaying spectrum yields \textit{tempered overfitting} (Theorem \ref{theorem:main:2}), whereas an exponentially decaying spectrum leads to \textit{catastrophic overfitting} (Theorem \ref{theorem:main:3}), filling a gap in the existing literature.
    \item We extend our analysis to the case of sub-Gaussian but possibly dependent features. We discover a qualitative difference in overfitting behavior that was previously unknown in the literature using the Gaussian design assumption to approximate KRR test error. This raises concerns that previous literature may have oversimplified the KRR setting by relying on the Gaussian design assumption.
\end{enumerate}

\paragraph{Motivation}
This paper is motivated by observations made in \cite{mallinar2022benign}, where the Laplacian kernel (with a polynomial spectrum) does not suffer from catastrophic overfitting even without ridge regularization, whereas the Gaussian kernel (with an exponential spectrum) does. The correspondence between polynomial and exponential spectral decay rates and the tempered and catastrophic overfitting regimes is illustrated in Figure \ref{fig:kernel_example}. However, \cite{mallinar2022benign} relied on findings from \cite{simon2021eigenlearning}, which inevitably depend on the Gaussian design assumption. We aim to explore whether it is possible to characterize overfitting behavior solely based on the kernel eigen-spectrum under a weaker assumption. The first step, which is undertaken in this paper, is to relax the assumption to sub-Gaussian.

\paragraph{Organization of the Paper}
The structure of this paper is as follows:
\begin{enumerate}
\item In Section \ref{section:previous_work}, we discuss how our work differs from previous studies and complements their results. A summary for comparison can be found in Table \ref{table:comparison}.
\item In Section \ref{section:setting}, we state the definitions and assumptions for this paper.
\item In Section \ref{section:main_result}, we present our main results (Theorems \ref{theorem:main:1}, \ref{theorem:main:2}, and \ref{theorem:main:3}) and interpret their significance, novelty, and improvement compared to previous work. 
\item In Section \ref{section:experiments}, we showcase the empirical results of a simple experiment to validate our findings. 
\item In Section \ref{section:discussion}, we discuss the implications of our contributions in-depth, including their limitations and potential directions for future research.

\item In Section \ref{section:sub_gaussianity}, we present our proof under the Sub-Gaussian design assumption \ref{assumption:sub_gaussian_design}.
\item In Section \ref{section:technical_lemmata}, we list the technical lemmata used in this paper.
\end{enumerate}
\section{Previous Work} \label{section:previous_work}
Traditional statistical wisdom has influenced classical machine learning models to focus on mitigating overfitting with the belief that doing so maximizes the ability of a model to generalize beyond the training data.  However, these traditional ideas have been challenged by the discovery of the ``benign overfitting'' phenomenon, see e.g~\cite{liang2020just, bartlett2020benign, tsigler2023benign, haas2023mind}, in the context of KRR.  
A key factor is that traditional statistics operate in the under-parameterized setting where the number of training instances exceeds the number of parameters.  This assumption is rarely applicable to modern machine learning, where models depend on vastly more parameters than their training instances, and thus, classical statistical thought no longer applies.

\subsection{(Sub-)Gaussian Design Assumption} \label{subsection:gaussian_assumption}

Many previous works \cite{jacot2020implicit, bordelon2020spectrum, simon2021eigenlearning, loureiro2021learning, cui2021generalization} require the so-called Gaussian design assumption, where isotropic kernel feature vectors are replaced by Gaussian vectors, to prove their results on KRR generalization. (See Assumption \ref{assumption:gaussian_design} in Section \ref{section:setting} for details.) In contrast, we obtain tight bounds on test error under the weaker sub-Gaussian design assumption, where we replace the isotropic kernel feature vectors with sub-Gaussian vectors. This seemingly simple extension yields surprisingly many fundamental differences compared to previous work:
\begin{enumerate}
    \item Generally, a random vector $\z=(z_k)_{k=1}^p\in\R^p$ is isotropic if $\Expect{}{\z\z^\top}=\I_p$, meaning the entries are uncorrelated but possibly dependent on each other. However, for a Gaussian vector, uncorrelatedness implies independence. Therefore, the Gaussian design assumption implicitly requires the independence of features, which, as we will show later, is crucial to the phenomenon of tempered overfitting with a polynomially decaying spectrum.
    \item Also, the argument of many previous works relies heavily on the Gaussian design assumption, which cannot be generalized to sub-Gaussian design by any universality argument. For instance, \cite{simon2021eigenlearning} utilizes rotational invariant property of Gaussian vectors; \cite{bordelon2020spectrum, loureiro2021learning, cui2021generalization} inevitably require the Gaussian design assumption in the Replica method. By relaxing to the sub-Gaussian assumption, we show that many nice properties of Gaussian vectors, such as rotational invariant property, smoothness/continuity, anti-concentration, are not important to the overfitting behaviour of the ridgeless regression, extending previous results to a more general setting.
    \item Last but not least, both Gaussian and sub-Gaussian design assumptions require the feature dimension $M$ to be finite, while it should be infinite in the case of a kernel. For the sake of completeness, we provide a result showing that the overfitting behavior of an infinite rank kernel can be approximated by its finite rank truncation. See Proposition \ref{proposition:finite_rank} for details.
\end{enumerate}

\subsection{Test Error on Ridgeless Regression} \label{subsection:ridgeless_regression}
Many previous works \cite{arora2019fine, liang2020just, bordelon2020spectrum, bartlett2020benign, simon2021eigenlearning, mei2021generalization, misiakiewicz2022spectrum, bach2023high, cheng2023theoretical} are devoted to bounding the KRR test error in different settings. In the context of benign overfitting,
a recent related paper \cite{tsigler2023benign} gives tight non-asymptotic bounds on the ridgeless regression test error  \textbf{\textit{under the assumption that the condition number of kernel matrix is bounded by some constant}}. 
Our random matrix theoretic arguments successfully allow us to derive tight non-asymptotic bounds for the condition number of the empirical kernel matrix (see Theorem \ref{theorem:main:1}) and to apply some of their technical tools without their stylized assumptions.  

\subsection{Overfitting} \label{subsection:overfitting}

Recently, \cite{mallinar2022benign} characterized previous results on overfitting, especially in the context of KRR, and classified them into three categories
1) \textit{benign overfitting} meaning that the learned model interpolates the noisy training data while exhibiting a negligible reduction in test performance decline, 
2) \textit{tempered overfitting}, which happens when the learned model exhibits a bounded reduction in test set performance due resulting from an interpolation of the training data, 
and 3) \textit{catastrophic overfitting} which covers the case where the test error is unbounded due to the learned model having interpolated the training data. 
In this paper, we characterize the tempered and catastrophic overfitting cases, omitting benign overfitting which has been characterized in prior work. According to \cite{mallinar2022benign}, even with Gaussian design assumption, benign overfitting occurs only when the spectral eigen-decay is slower than any polynomial decay $\lambda_k=\bigtheta{}{k^{-1-\epsilon}}$ for any constant $\epsilon>0$, for instance the linear-poly-logarithmic decay $\lambda_k=\bigtheta{}{k^{-1}\log^{-a}(k)}$ for some constant $a>0$. Such spectral eigen-decay, to the best of our knowledge, does not appear in commonly-known kernels.

\subsection{Comparison to other Results} \label{subsection:concurrent_work}
A comparison of our results to the state-of-the-art in the literature is detailed in Table~\ref{table:comparison}. Especially relevant is the comparison to \cite{barzilai2023generalization}. We note that this paper is in fact a concurrent work, as it was published on arXiv just four weeks prior to the submission deadline for ICML. The strength of \cite{barzilai2023generalization} is the general setting under which they perform their analysis. However, our analysis yields tighter bounds than theirs for the class of kernels to which our analysis applies, achieved via tighter bounds on the involved kernel eigenspectrum. Importantly, unlike their results, our analysis provides upper and matching lower bounds on the test error. Additionally, we address the catastrophic behavior with exponential eigen-decay, which they have not considered

\newcommand{\tick}{{\color{green}{\boldsymbol{\checkmark}}}}
\newcommand{\cross}{{\color{red}{\boldsymbol{\times}}}}
\begin{table*}[ht]
\caption{Comparison with prior works}
\label{table:comparison}
\begin{center}
\resizebox{\textwidth}{!}{
\begin{tabular}{ccccc}
\toprule
    & \cite{mallinar2022benign} & \cite{tsigler2023benign} & \cite{barzilai2023generalization} & \textbf{This paper}  \\
\midrule
Assumption on kernel  & Gaussian feature & Bound on condition number & Kernel cont. and bdd. & Sub-Gaussian feature \\
Non-asymptotic bounds   & $\cross$ & $\tick$ & $\tick$ & $\tick$  \\
Overfitting for poly. decay  & $\tick$ & $\cross$ & $\tick$ & $\tick$ \\
Overfitting for exp. decay   & $\tick$  & $\cross$ & $\cross$ & $\tick$ \\
\bottomrule
\end{tabular}
}
\end{center}
\vskip -0.1in
\end{table*}

\section{Setting} \label{section:setting}
Given a kernel $K$ with reproducing kernel Hilbert space (RKHS) $\mathcal{H}$, we consider the kernel ridge regression (KRR) problem:
\begin{equation*}
    \min_{f\in\mathcal{H}} 
    \,
    \sum_{i=1}^N \left(f(x_i)-y_i\right)^2 + \lambda \|f\|_\mathcal{H}^2.
\end{equation*}
The solution $\hat{f}$ to the KRR problem, called the kernel ridge regressor, is unique whenever $\lambda>0$. For $\lambda=0$ and $\text{dim}(\mathcal{H})>N$, with minor abuse of notation, we write $\hat{f}$ the norm-minimizing interpolant:
\begin{equation*}
    \hat{f} \in \argmin_{f(x_i)=y_i,\forall i} \|f\|_\mathcal{H}. 
\end{equation*}
Given a data-distribution $\mu$ on the input space $\mathcal{X}$, using the Mercer theorem we decompose:
\begin{equation*}
    K(x,x')=\sum_{k=1}^M \lambda_k\psi_k(x)\psi_k(x'),
\end{equation*}
where $M\in\N\cup\{\infty\}$ is the kernel rank, $\lambda_k$'s are the eigenvalues indexed in decreasing order with corresponding eigenfunctions $\psi_k$'s. 
Hence the (random) kernel matrix $\mathbf{K}=[K(x_i,x_j)]_{i,j}$ can be written concisely in matrix form
\begin{equation*}
    \mathbf{K} = \bm{\Psi}^\top  \bm{\Lambda}\bm{\Psi},
\end{equation*}
where $\bm{\Psi}=[\psi_k(x_i)]\in\R^{M\times N}$ is the design block. 

Next, we introduce two important assumptions in this paper.
\begin{assumption}[Interpolation] \label{assumption:interpolation}
    Assume $M\in\N$ and there exists an integer constant $\eta>1$ (to be determined) such that $M\geq\eta N$. Also, we assume that $\lambda=0$ and hence $\hat{f}$ denotes the norm-minimizing interpolant.
\end{assumption}
\begin{remark}
    Note that Assumption \ref{assumption:interpolation} only requires the feature dimension $M$ to be larger than the threshold $\eta N$, and $M$ does not necessarily need to be linear in $N$. This is different from the so-called proportional regime in \cite{liang2020just, liu2020kernel}, where the proportion $\frac{M}{N}$ converges to some constant $\gamma>0$.
\end{remark}
Next, we assume sub-Gaussianity of the eigenfunctions, which is very standard in KRR literature (to name a few, \cite{liang2020just, bartlett2020benign, tsigler2023benign, bach2023high}):
\begin{assumption}[Sub-Gaussian design] \label{assumption:sub_gaussian_design}
    Let $M\in\N$. For every $k=1,...,M$, the random variable $\psi_k(x)$ is replaced by an independent sub-Gaussian variable with uniformly bounded sub-Gaussian norm.
\end{assumption}
This is a relaxation of the Gaussian design assumption, which is used in \cite{bordelon2020spectrum, cui2021generalization, loureiro2021learning, simon2021eigenlearning}:
\begin{assumption}[Gaussian design] \label{assumption:gaussian_design}
    Let $M\in\N$. For every $k=1,...,M$, the random variable $\psi_k(x)$ is replaced by an independent standard Gaussian variable.
\end{assumption}
Under Assumption~\ref{assumption:gaussian_design}, the learning task is simply linear regression with the feature vectors $\psi_k(x)$'s replaced by $M$-dimensional Gaussian inputs $\bm{\phi}_k\eqdef  \bm{\Lambda}^{1/2}\bm{\psi}_k\sim\mathcal{N}(0,\bm{\Lambda}^{1/2})$ for all $k$. 
\section{Main Result} \label{section:main_result}
The analysis of our main result consists of three steps. First, we bound the condition number of the kernel matrix $\mathbf{K}$ under the interpolation assumption (Assumption \ref{assumption:interpolation}) and sub-Gaussian design assumption (Assumption (Assumption \ref{assumption:sub_gaussian_design})) in Theorem \ref{theorem:main:1}. Next, we use this result to give a tight bound of the test error and conclude the effect of the spectral decay on overfitting in Theorem \ref{theorem:main:2}. Lastly, we demonstrate the necessity of feature independence by Theorem \ref{theorem:main:3}. The formal versions of the main theorems can be found in Section \ref{section:sub_gaussianity}.
\subsection{Condition Number}
First, we show that the condition number of the kernel matrix is bounded with polynomial and exponential decays differently.
\begin{theorem}[Bounding the Condition Number]\label{theorem:main:1}
    Suppose $M,N\in\N$ such that $M\geq \eta N$ for some constant $\eta>1$ that is large enough. Let $\Psi\in\R^{M\times N}$ be a matrix with i.i.d. isotropic random vectors $\Psi_i$'s with independent sub-Gaussian entries as columns. Let $\La=\diag(\lambda_k)_{k=1}^M\in\R^{M\times M}$ be a diagonal matrix. Then with high probability, the condition number $\frac{s_{\max}(\K)}{s_{\min}(\K)}$ of the matrix $\K=\Ps^\top\La\Ps\in\R^{M\times N}$ is bounded by:
    \begin{enumerate}
        \item 
        $
            \frac{s_{\max}(\K)}{s_{\min}(\K)}
            =
            \bigo{N}{\frac{\lambda_1}{\lambda_N}}, 
        $
        if $\lambda_k$'s decay polynomially;
        \item 
        $
            \frac{s_{\max}(\K)}{s_{\min}(\K)}
            =
            \bigtheta{N}{\frac{\lambda_1}{\lambda_N}N}, 
        $
        if $\lambda_k$'s decay exponentially and furthermore $\Psi\in\R^{M\times N}$ is a Gaussian random matrix with $M=\eta N$.
    \end{enumerate}
\end{theorem}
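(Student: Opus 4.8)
The plan is to bound $s_{\max}(\K)$ and $s_{\min}(\K)$ separately, relating both to the eigenvalues $\lambda_k$, and then take the ratio. Write $\K = \Ps^\top \La \Ps = \sum_{k=1}^M \lambda_k \, \mathbf{u}_k \mathbf{u}_k^\top$ where $\mathbf{u}_k \in \R^N$ is the $k$-th row of $\Ps$, i.e.\ the vector of values of the $k$-th feature across the $N$ data points. Since the entries of $\Ps$ are i.i.d.\ sub-Gaussian and isotropic, each $\mathbf{u}_k$ is a sub-Gaussian isotropic vector in $\R^N$, and the $\mathbf{u}_k$ are independent across $k$. The key structural observation is to split the sum at the threshold $N$: the ``head'' $\K_{\le N} = \sum_{k\le N}\lambda_k \mathbf{u}_k\mathbf{u}_k^\top$ is a weighted sum of $N$ rank-one terms whose combined operator norm is $O(\lambda_1 N)$ (and which would be degenerate-ish if taken alone), while the ``tail'' $\K_{>N} = \sum_{k>N}\lambda_k \mathbf{u}_k\mathbf{u}_k^\top$ is a sum of $M-N \geq (\eta-1)N$ independent terms that concentrates and provides the crucial lower bound on $s_{\min}(\K)$.

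For the \emph{upper bound} on $s_{\max}(\K)$: by standard sub-Gaussian matrix concentration (a Bernstein-type / Matrix Chernoff argument, or simply $\|\Ps\|_{\mathrm{op}}^2 = O(M)$ with high probability when $M \gtrsim N$), one gets $s_{\max}(\K) = \|\Ps^\top\La\Ps\|_{\mathrm{op}} \le \lambda_1 \|\Ps\|_{\mathrm{op}}^2$; but this is too crude. The sharper route is to again split: $s_{\max}(\K) \le \lambda_1 \|\Ps_{\le N}\|_{\mathrm{op}}^2 + \|\K_{>N}\|_{\mathrm{op}}$. The first term is $O(\lambda_1 N)$ since $\Ps_{\le N} \in \R^{N\times N}$ has operator norm $O(\sqrt N)$ with high probability (largest singular value of a square sub-Gaussian matrix). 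For the tail term, $\Expect{}{\K_{>N}} = (\sum_{k>N}\lambda_k)\I_N$, and a matrix Bernstein inequality gives $\|\K_{>N}\|_{\mathrm{op}} = O(\sum_{k>N}\lambda_k + \lambda_{N+1}\sqrt{N\log N} + \ldots)$, which in both the polynomial and exponential cases is $O(\lambda_1 N)$ (indeed much smaller in the exponential case). So $s_{\max}(\K) = O(\lambda_1 N)$ with high probability.

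For the \emph{lower bound} on $s_{\min}(\K)$: drop the head (it is positive semidefinite, so $\K \succeq \K_{>N}$), and lower-bound $s_{\min}(\K_{>N})$. Here $\K_{>N} = \sum_{k>N}\lambda_k \mathbf{u}_k\mathbf{u}_k^\top$ with $\Expect{}{\K_{>N}} = (\sum_{k>N}\lambda_k)\,\I_N$, and matrix Bernstein controls the deviation: $s_{\min}(\K_{>N}) \ge \sum_{k>N}\lambda_k - C(\text{deviation})$. The deviation scales with $\lambda_{N+1}$ and lower-order spectral sums; for polynomial decay $\lambda_k \asymp k^{-\beta}$ with $\beta>1$ one has $\sum_{k>N}\lambda_k \asymp N^{1-\beta} \asymp \lambda_N N$, dominating the fluctuations, so $s_{\min}(\K) = \Theta(\lambda_N N)$ and the ratio is $\frac{s_{\max}}{s_{\min}} = O\!\big(\frac{\lambda_1 N}{\lambda_N N}\big) = O\!\big(\frac{\lambda_1}{\lambda_N}\big)$, giving part 1. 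For exponential decay $\lambda_k \asymp e^{-ck}$, the tail $\sum_{k>N}\lambda_k \asymp \lambda_{N+1} \asymp \lambda_N$, so $s_{\min}(\K_{>N}) = \Theta(\lambda_N)$ — but now one must rule out that the head $\K_{\le N}$ boosts $s_{\min}$ by more than a constant factor, i.e.\ prove a matching \emph{upper} bound $s_{\min}(\K) = O(\lambda_N)$. This is where Gaussianity and $M=\eta N$ are used: with $\Ps$ Gaussian, $\Ps_{\le N}$ is a Gaussian matrix whose smallest singular value is $\Theta(1/\sqrt N)$ with high probability (Rudelson–Vershynin / Edelman-type estimates), so testing $\K$ against a unit vector in the bottom singular direction of the head, together with independence of head and tail, yields $s_{\min}(\K) = O(\lambda_N \cdot 1 + \lambda_N) = O(\lambda_N)$; hence $\frac{s_{\max}}{s_{\min}} = \Theta\!\big(\frac{\lambda_1 N}{\lambda_N}\big) = \Theta\!\big(\frac{\lambda_1}{\lambda_N}N\big)$, giving part 2. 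The main obstacle is this last step: establishing the matching lower bound on the condition number in the exponential case, i.e.\ showing the head cannot accidentally lift the smallest eigenvalue — this requires careful use of the anti-concentration of the smallest singular value of a square Gaussian matrix and the independence of the head and tail blocks, and is the reason the theorem only claims $\Theta$ (rather than $O$) under the stronger Gaussian, $M = \eta N$ hypotheses. $\square$
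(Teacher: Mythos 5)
There is a genuine and fatal gap in your treatment of the exponential-decay case, and smaller issues elsewhere.

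The core problem is your lower bound on $s_{\min}(\K)$ for exponential decay. You drop the head, keep $\K_{>N}$, and assert $s_{\min}(\K_{>N}) = \Theta(\lambda_N)$ because $\Expect{}{\K_{>N}} = (\sum_{k>N}\lambda_k)\I_N \asymp \lambda_N \I_N$. This is false. For exponential decay the effective rank of $\La_{>N}$ is $\sum_{k>N}\lambda_k / \lambda_{N+1} = O(1)$, so the $N\times N$ matrix $\K_{>N}$ is dominated by a handful of rank-one terms and its bottom eigenvalue is far below its mean trace share. In fact $\K_{>N}$ is itself a kernel Gram matrix with spectrum $(\lambda_{N+1},\lambda_{N+2},\dots)$, so by the very scaling the theorem proves its smallest singular value is of order $\lambda_{2N} = e^{-aN}\lambda_N$, exponentially smaller than $\lambda_N$; matrix Bernstein cannot give otherwise, since the first term $\lambda_{N+1}\mathbf{u}_{N+1}\mathbf{u}_{N+1}^\top$ alone has operator norm $\approx \lambda_N N$, completely swamping the target scale $\lambda_N$. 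The paper instead lower-bounds $s_{\min}(\K)$ via the \emph{head} $\K_N = \sum_{k\leq N}\lambda_k\bm{\psi}_k\bm{\psi}_k^\top \preceq \K$ using Tao's negative second moment identity, $s_{\min}(\K_N)^{-1} \leq \sum_{k\leq N}\bigl(\sqrt{\lambda_k}\bm{\psi}_k^\top\mathbf{N}_k\bigr)^{-2}$, where the geometric weights $\sum_{k\leq N}\lambda_N/\lambda_k = O(1)$ for exponential decay make the sum tractable, and then controls each $(\bm{\psi}_k^\top\mathbf{N}_k)^{-2}$ with Gaussian anti-concentration. This is where the Gaussian hypothesis and the $\delta^{-2}$ loss actually enter — for the \emph{lower} bound, not only the upper bound as you suggest.

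Two further problems. First, your upper bound $s_{\min}(\K) = O(\lambda_N)$ via the bottom singular direction $v$ of $\Ps_{\leq N}$ gives only $v^\top\K_{\leq N}v \leq \lambda_1\lVert\Ps_{\leq N}v\rVert^2 \asymp \lambda_1/N$, which for exponential decay is vastly larger than $\lambda_N = \lambda_1 e^{-aN}$; the paper instead takes $v_0$ in the kernel of $\Ps_{\leq N-1}$ (exactly annihilating the first $N-1$ rows), leaving $v_0^\top\K v_0 = \sum_{k\geq N}\lambda_k(\bm{\psi}_k^\top v_0)^2$, then uses rotational invariance to reduce to $\chi^2$ tails. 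Second, your matrix-Bernstein argument for the polynomial lower bound is not sound as stated: each $\lambda_k\mathbf{u}_k\mathbf{u}_k^\top$ has operator norm $\approx \lambda_k N \asymp \lambda_N N$, so the range term in Bernstein is of the same order as the mean $\sum_{k>N}\lambda_k \asymp \lambda_N N$ and the tail bound does not decay. The paper instead restricts to the block $\Ps_{N:\theta N}$ where all eigenvalues are within a constant factor of $\lambda_N$, normalizes the rows to fixed length, and applies a tall sub-Gaussian smallest-singular-value bound (Vershynin, Theorem 5.58) directly to $\Ps_{N:\theta N}$, yielding $s_{\min}(\K) \gtrsim N\lambda_N$ cleanly.
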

\begin{sketchproof}
    It is well known that, with high probability, $s_{\max}(\K)\asymp N\lambda_1$ for both types of eigen-decay. The major difference between the polynomial and exponential decay is the lower bound of $s_{\min}$. In the former case, we apply random matrix concentration from \cite{vershynin2010introduction} to obtain $s_{\min}(\K)\gtrsim N\lambda_N$; in the latter case, we apply a lemma from \cite{tao2012topics} and the anti-concentration of Gaussian to obtain $s_{\min}(\K)\asymp N\lambda_N$. The full proof can be found in Lemmata \ref{lemma:condition_number:poly} and Lemmata \ref{lemma:condition_number:exp} in the appendix.
\end{sketchproof}
Intuitively, one might suppose that $\frac{s_{\max}(\K)}{s_{\min}(\K)}\approx\frac{\lambda_1}{\lambda_N}$.
From Theorem \ref{theorem:main:1} and the experiments in Section \ref{section:experiments}, we can see that polynomial spectral eigen-decays yields the intuitive bound  $\frac{s_{\max}(\K)}{s_{\min}(\K)}\approx\frac{\lambda_1}{\lambda_N}$ on the condition number. In the latter case, however, the intuitive bound does not hold even if we restrict the feature dimension $M=\eta N$ and the random matrix $\Ps$ to Gaussian. 
\\
While we apply Theorem \ref{theorem:main:1} to bound the test error in the rest of the paper, the bound of the condition number can be of independent interest, for instance when studying the convergence properties of gradient-based methods.

\subsection{Classifying Overfitting Regimes}
The bound on the condition number of the kernel matrix in Theorem \ref{theorem:main:1} can be applied to bound the test error of kernel ridgeless regression under the interpolation assumption (Assumption \ref{assumption:interpolation}) and sub-Gaussian design assumption (Assumption \ref{assumption:sub_gaussian_design}). 
To this end, we formally define the test error as follows: \\
Let $y_i=f^{\star}(x_i)+\epsilon_i$ for all $i=1,...,N$, where $f^{\star}\in\mathcal{H}$ is the target function and $\epsilon_i$'s are draws from a centered sub-Gaussian random variable $\epsilon$ with variance $\Expect{}{\epsilon^2}=\sigma^2>0$. 
\\
We define the test error (or excess risk) $\mathcal{R}$ to be the mean square error (MSE) between the target function $f^{\star}$ and the norm-minimizing interpolant $\hat{f}$ of a given fixed dataset of size $N$ averaging out the noise in the dataset:
\begin{equation}
    \mathcal{R}\eqdef \Expect{x,\epsilon}{(f^{\star}(x)-\hat{f}(x))^2}.
\end{equation}
Now, we present our result on overfitting with polynomial and exponential decays. 

\begin{theorem}[Overfitting with Polynomial and Exponential Eigen-Decay] \label{theorem:main:2}
    Suppose the interpolation assumption (Assumption \ref{assumption:interpolation}) and the sub-Gaussian design assumption (Assumption (Assumption \ref{assumption:sub_gaussian_design})) hold.
    \footnote{We suppose Assumption (Assumption (Assumption \ref{assumption:sub_gaussian_design})) to hold in the sense that the distributions of the regressor $\hat{f}(x)=\K_x^\top\K^{-1}\y$ and the target function $f^*(x)$ evaluated on a random test point $x$ are replaced by those with random variables $\ps^\top\La^{1/2}(\Ps^\top\La\Ps)^{-1}(\Ps^\top\th^* + \ep)$ and $\ps^\top\th^*$ for some fixed target vector $\th^*\in\R^M$, noise vector $\ep\in\R^N$ and i.i.d. random vectors $\ps$, $\Ps_i$'s.}
    Then there exists a constant $C\in(0,1)$ independent of $M,N$, such that with high probability, the followings hold:
    \begin{enumerate}
        \item if $\lambda_k$'s decays polynomially, then $C\leq\mathcal{R}\leq C^{-1}$. In other words, a kernel with polynomial decay exhibits tempered overfitting.
        \item if $\lambda_k$'s decays exponentially, then $\mathcal{R}\geq C N$. In other words, a kernel with polynomial decay exhibits catastrophic overfitting.
    \end{enumerate}
\end{theorem}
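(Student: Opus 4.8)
The plan is to express the test error $\mathcal{R}$ in the bias-variance decomposition that is standard for ridgeless linear regression, and then to control each term using Theorem \ref{theorem:main:1} together with the concentration tools imported from \cite{tsigler2023benign}. Writing $\hat{f}(x) = \ps^\top \La^{1/2}(\Ps^\top\La\Ps)^{-1}(\Ps^\top\th^\star + \ep)$ and $f^\star(x) = \ps^\top \th^\star$, and taking the expectation over the independent test feature $\ps$ (which is isotropic), the error splits as $\mathcal{R} = \mathcal{B} + \mathcal{V}$ where the bias term is $\mathcal{B} = \|\th^\star - \La^{1/2}(\Ps^\top\La\Ps)^{-1}\Ps^\top\La \th^\star\|_2^2$ (the squared norm of the residual of projecting $\th^\star$ onto the row space of the design, in the $\La$-weighted geometry) and the variance term is $\mathcal{V} = \sigma^2 \,\|\La^{1/2}(\Ps^\top\La\Ps)^{-1}\Ps^\top\La^{1/2}\|_F^2$ up to the isotropy constant. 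The key structural observation is that $\Ps^\top\La\Ps = \K$, so both terms are governed by the spectrum of $\K$, and in particular by $s_{\min}(\K)$ and $s_{\max}(\K)$, which Theorem \ref{theorem:main:1} pins down.

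For the polynomial case (part 1), I would first lower-bound $\mathcal{R}$ by the variance term alone. Using $\mathcal{V} \asymp \sigma^2\,\mathrm{tr}\big((\Ps^\top\La\Ps)^{-1}\Ps^\top\La\Ps(\Ps^\top\La\Ps)^{-1}\cdot(\text{something})\big)$ — more carefully, $\mathcal{V} = \sigma^2\,\mathrm{tr}(\K^{-1}\Ps^\top\La\Ps\K^{-1})\cdot$(const)$= \sigma^2\,\mathrm{tr}(\K^{-1})\cdot$(const) after the cancellation, so one needs $\mathrm{tr}(\K^{-1}) \asymp N/(N\lambda_N) = 1/\lambda_N$ — wait, this needs the full spectrum of $\K$, not just $s_{\min}$. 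The cleaner route, which I expect the authors take, is: the condition-number bound from Theorem \ref{theorem:main:1} says all $N$ eigenvalues of $\K$ lie in $[c\,N\lambda_N,\, C\,N\lambda_1]$, and for polynomial decay $\lambda_1/\lambda_N = \Theta(N^{a})$ for the decay exponent, so all eigenvalues are comparable up to a $\mathrm{poly}(N)$ factor; combined with the sub-Gaussian concentration of $\|\Ps^\top\La^{1/2}v\|$ for fixed $v$ one gets matching upper and lower constants on both $\mathcal{B}$ and $\mathcal{V}$ — the bias saturates at a constant (it cannot exceed $\|\th^\star\|^2$ and, because of the limited number of effective directions, cannot vanish either), and the variance is of constant order because the noise energy $\sigma^2 N$ spread over $N$ comparable eigendirections contributes $\Theta(1)$ to the projected error on a fresh test point. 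For the exponential case (part 2), I would isolate the variance term and use part 2 of Theorem \ref{theorem:main:1}: $s_{\min}(\K)$ is a factor $N$ smaller than the naive $N\lambda_N$, so $\K^{-1}$ has an eigenvalue of order $1/(\lambda_N)$ (not $1/(N\lambda_N)$), and the corresponding eigendirection, when hit by the fresh isotropic test feature $\ps$, contributes $\sigma^2 \cdot \Theta(N)$ to $\mathcal{V}$ because the noise vector has squared norm $\Theta(\sigma^2 N)$ and it aligns an $\Omega(1)$ fraction of its energy with that ill-conditioned direction.

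The main obstacle, I expect, is the bias term in the polynomial case — showing the lower bound $\mathcal{B} + \mathcal{V} \geq C$ rather than just $\leq C^{-1}$. The upper bounds follow from operator-norm control and concentration, but the lower bound requires arguing that the norm-minimizing interpolant genuinely fails to recover $f^\star$ by a non-vanishing amount, which means one must exhibit a direction (either in $\th^\star$ space or in the noise) on which $\hat f$ and $f^\star$ provably disagree with constant probability. This is where the independence of the features (as opposed to mere isotropy) is used: one needs that the fresh test feature $\ps$ has $\Omega(1)$ inner product, in a second-moment sense, with the "missed" subspace, and that the interpolation constraint forces a genuine misfit off the training points — this is precisely the anti-concentration / lower-isometry type statement, and handling it for general sub-Gaussian (rather than Gaussian) features without rotational invariance is the delicate part. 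I would handle it by a direct second-moment computation: compute $\Expect{\ps}{(f^\star(x)-\hat f(x))^2}$ exactly as a quadratic form in $\th^\star$ and $\ep$, then lower-bound that quadratic form using the eigenvalue localization of $\K$ from Theorem \ref{theorem:main:1} and a lower bound on $\mathrm{tr}(\K^{-1})$, which again reduces to the condition-number control.
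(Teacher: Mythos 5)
Your high-level plan — decompose $\mathcal{R}=\mathcal{B}+\mathcal{V}$, control each piece through the kernel-matrix spectrum, and import concentration tools from \cite{tsigler2023benign} — matches the paper's strategy, but the two places you wave your hands are exactly where the paper's argument lives, and your proposed fill-ins would not close the gaps.

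\paragraph{The shift trick is essential for the polynomial upper bound.} You argue that Theorem~\ref{theorem:main:1} places all $N$ eigenvalues of $\K$ in an interval $[c\,N\lambda_N, C\,N\lambda_1]$, note $\lambda_1/\lambda_N=\Theta(N^a)$, and assert that this, ``combined with sub-Gaussian concentration,'' gives matching upper and lower constants. It does not. A condition number that is polynomial in $N$ does not by itself bound the test error by a constant. What the paper actually does (Theorem~\ref{theorem:test_error:poly}) is invoke Tsigler--Bartlett's Theorem~2.5 (\ref{theorem:tsigler}), in which the bias and variance are expressed through $s_1(\K_l^{-1})$ and $s_N(\K_l^{-1})$ for the \emph{truncated} kernel $\K_l = \Ps_{>l}^\top\La_{>l}\Ps_{>l}$, and then chooses $l=\lfloor N/c\rfloor$. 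After this shift, the relevant ratio is $\lambda_{l+1}/\lambda_{l+N}\asymp(1+N/l)^a\asymp(1+c)^a$, a \emph{constant}, so Lemma~\ref{lemma:condition_number:poly} (applied to the right-shifted spectrum) gives a constant condition number for $\K_l$ and hence constant bias and variance. Your argument never performs this shift, and without it the $\Theta(N^a)$ spread of $\K$'s spectrum propagates into the error bound.

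\paragraph{The lower bounds come from effective rank, not from spectral alignment.} For both the $\Omega(1)$ lower bound in the polynomial case and the $\Omega(N)$ lower bound in the exponential case, the paper computes the normalized effective rank $\rho_l = \frac{1}{N\lambda_{l+1}}\sum_{k>l}\lambda_k$ and applies Theorem~\ref{theorem:tsigler:2} (Tsigler--Bartlett Lemma~7 / Theorem~10): for polynomial decay $\rho_l\asymp 1$, which feeds into $\mathcal{V}\geq C'\bigl(\tfrac{l}{N}+\tfrac{N\sum_{k>l}\lambda_k^2}{(\sum_{k>l}\lambda_k)^2}\bigr)=\Omega(1)$; for exponential decay $\rho_l\asymp 1/N$, which yields $\mathcal{V}=\Omega(N)$. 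Your alternative route — extracting a large eigenvalue of $\K^{-1}$ from part~2 of Theorem~\ref{theorem:main:1} and arguing that ``an $\Omega(1)$ fraction'' of the noise aligns with the ill-conditioned direction — has two problems. First, $\mathcal{V}=\sigma^2\tr[(\Ps^\top\La^2\Ps)(\Ps^\top\La\Ps)^{-2}]$, so the large eigenvalue of $\K^{-1}$ only produces a large trace if the matrix $\Ps^\top\La^2\Ps$ has non-vanishing weight in that eigendirection; establishing this alignment would require a genuinely new argument that you have not supplied. Second, part~2 of Theorem~\ref{theorem:main:1} is proved only under the \emph{Gaussian} design (it uses Gaussian anti-concentration, Lemma~\ref{lemma:anti_concentration_gaussian}), whereas Theorem~\ref{theorem:main:2} is stated for sub-Gaussian design; the paper avoids this mismatch precisely because it never uses the exponential condition-number result in the catastrophic proof — it goes straight through the effective-rank lower bound, which holds in the sub-Gaussian setting. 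Your proposed ``direct second-moment computation'' for the polynomial lower bound is likewise sidestepped by the same effective-rank computation.
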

\begin{sketchproof}
    The proof proceeds similarly to \cite{tsigler2023benign}, where we first use the upper bound on the condition number of the kernel matrix in Theorem \ref{theorem:main:1} together with the result from \cite{tsigler2023benign} to bound the KRR test error from above. Then we apply the result of the matching lower bound from Theorem \cite{tsigler2023benign} to conclude the statement. The full proof can be found in Corollary \ref{corollary:tempered} and Theorem \ref{theorem:catastrophic} in the appendix.
\end{sketchproof}
\paragraph{Extension of Previous Results}
Although the two results in Theorem \ref{theorem:main:2} are not new in the literature, we have provided a qualitatively better analysis:
1) the upper bound of the test error of Theorem \ref{theorem:main:2} is a result we can recover from \citep[Theorem 2]{barzilai2023generalization}, but our probability is of exponential decay which is faster than the Markov type bound of \cite{barzilai2023generalization}; 2) the tempered overfitting behaviour of kernel with polynomial decay, that is reported in \cite{cui2021generalization,simon2021eigenlearning,mallinar2022benign}, which used the Gaussian design Assumption \ref{assumption:gaussian_design}. We replace this with the more general sub-Gaussian design assumption (Assumption (Assumption \ref{assumption:sub_gaussian_design})).

\paragraph{Benign Overfitting}
As reported in \cite{bartlett2020benign, mallinar2022benign, barzilai2023generalization}, if the spectral eigen-decay is much slower than polynomial decay, say $\lambda_k = \bigtheta{k}{k^{-1}\log^{-a} k}$ for some $a > 0$, then the overfitting is benign under the Gaussian design assumption. However, to the best of our knowledge, there is no natural kernel exhibiting such eigen-decay $\lambda_k = \Theta(k^{-1}\log^{-a} k)$. Thus, we only consider polynomial and exponential eigen-decays, which represent realistic scenarios as in Figure \ref{fig:overfitting}. Hence, by Theorem \ref{theorem:main:2}, the discussion of benign overfitting is out of the scope of this paper.
\subsection{Independent versus Dependent Features}
We further investigate the reason behind tempered overfitting with polynomial eigen-decay and discover that the independence between the entries of the feature vector $\ps$ plays an important role in bounding the smallest singular value $s_{\min}(\K)$ of the kernel matrix $K$. Specifically, we have
\begin{theorem}[Smallest singular value with dependent features] \label{theorem:main:3}
    Suppose $M,N\in\N$ such that $M\geq \eta N$ for some constant $\eta>1$ large enough. Let $\Psi\in\R^{M\times N}$ be a matrix with i.i.d. isotropic random vectors $\Psi_i$'s with (possibly dependent) sub-Gaussian entries as columns. Let $\La=\diag(\lambda_k)_{k=1}^M\in\R^{M\times M}$ be a diagonal matrix with $\lambda_k$'s decaying polynomially. Then with high probability, we have
    \begin{equation*}
        s_{\min}(\K) \geq P N\lambda_N,
    \end{equation*}
    where $P$ is a positive random variable depending on $\Ps$. If the entries of each $\Psi_i$ are furthermore independent of each other, that is, if the sub-Gaussian design assumption (Assumption \ref{assumption:sub_gaussian_design}) holds, then there exists a constant $C>0$, such that with high probability, $P\geq C$, recovering the result in Theorem \ref{theorem:main:1}.
\end{theorem}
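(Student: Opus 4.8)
The plan is to reduce the bound to the smallest singular value of a rectangular sub-matrix of $\Ps$, exploiting polynomial decay to keep the retained eigenvalues all comparable to $\lambda_N$. Fix a constant $\beta\in(1,\eta]$, set $m\eqdef\lfloor\beta N\rfloor$ (so $m\le\beta N\le\eta N\le M$), and write $r_k\in\R^N$ for the $k$-th row of $\Ps$, so that $\K=\Ps^\top\La\Ps=\sum_{k=1}^M\lambda_k r_kr_k^\top$. Dropping the tail terms gives the PSD inequality $\K\succeq\sum_{k=1}^m\lambda_k r_kr_k^\top$. Polynomial decay — that is, $\lambda_k$ bounded above and below by constant multiples of $k^{-\alpha}$ — provides a constant $c'=c'(\alpha,\beta)>0$ with $\lambda_k\ge c'\lambda_N$ for all $k\le m$; this is exactly the step that fails for exponential decay, where $\lambda_m/\lambda_N$ is exponentially small, which is why the statement is confined to the polynomial case. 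Hence $\K\succeq c'\lambda_N\sum_{k=1}^m r_kr_k^\top=c'\lambda_N\,(\Ps^{(m)})^\top\Ps^{(m)}$, where $\Ps^{(m)}\in\R^{m\times N}$ is the submatrix formed by the first $m$ rows of $\Ps$; taking smallest eigenvalues, $s_{\min}(\K)\ge c'\lambda_N\,s_{\min}(\Ps^{(m)})^2$.

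This already proves the first claim with $P\eqdef\tfrac{c'}{N}\,s_{\min}(\Ps^{(m)})^2$: the inequality $s_{\min}(\K)\ge PN\lambda_N$ then holds deterministically, $P\ge0$, and the scale $c'$ is independent of $M,N$. That $P$ is a genuinely positive random variable follows because $\Ps^{(m)}$ has $N$ i.i.d.\ columns $\Psi_i^{(m)}\in\R^m$, each isotropic (a coordinate restriction of an isotropic vector is isotropic), and $m\ge N$, so $\Ps^{(m)}$ has full column rank with high probability (almost surely for a non-degenerate design), i.e.\ $s_{\min}(\Ps^{(m)})>0$.

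For the second claim, suppose additionally that the entries within each $\Psi_i$ are independent. Two things improve. First, each column $\Psi_i^{(m)}$ has i.i.d.\ sub-Gaussian coordinates with uniformly bounded sub-Gaussian norm, hence is itself a sub-Gaussian random \emph{vector} in $\R^m$ with bounded vector sub-Gaussian norm. Second, and decisively, all $mN$ entries of $\Ps$ are now jointly independent, so the $m$ rows $r_1,\dots,r_m\in\R^N$ of $\Ps^{(m)}$ are independent, each isotropic sub-Gaussian with norm controlled by the uniform bound. Thus $\Ps^{(m)}$ is an $m\times N$ matrix ($m\ge N$) with independent isotropic sub-Gaussian rows, so the non-asymptotic lower bound for smallest singular values of such matrices (e.g.\ \citep[Theorem 5.39]{vershynin2010introduction}) gives $s_{\min}(\Ps^{(m)})\ge\sqrt m-C_K\sqrt N-t$ with probability at least $1-2\exp(-ct^2)$. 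Choosing $\beta\le\eta$ large enough that $\sqrt\beta\ge 2C_K$ — admissible because $\eta$ is assumed large enough — and $t=\tfrac14\sqrt m$ yields $s_{\min}(\Ps^{(m)})\ge\tfrac14\sqrt{\beta N}$ with probability at least $1-2\exp(-c\beta N/16)$, whence $P\ge\tfrac{c'\beta}{16}\eqdef C>0$ with high probability, recovering $s_{\min}(\K)\gtrsim N\lambda_N$ as in Theorem \ref{theorem:main:1}.

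The crux — and the main obstacle — is seeing why independence cannot be dropped from the last step: column-independence alone, which \emph{does} hold in the dependent case, does not concentrate $s_{\min}(\Ps^{(m)})$. A covering-number argument would require $\|\Ps^{(m)}v\|^2$ to concentrate around its mean (of order $m$) for each fixed unit $v$, uniformly over an exponentially large net of the sphere; but $\Ps^{(m)}v=\sum_i v_i\Psi_i^{(m)}$ is a sum of independent vectors whose \emph{coordinates} are sub-Gaussian while the vectors themselves need not be — their squared norms may fluctuate on the scale $m$ rather than $\sqrt m$ — so no union bound over such a net closes, and the argument genuinely stalls in the dependent setting. Remaining bookkeeping is routine: checking $m=\lfloor\beta N\rfloor\le M$ (from $\beta\le\eta$ and $M\ge\eta N$); noting that only the lower tail of \citep{vershynin2010introduction} is used, so $s_{\max}(\Ps^{(m)})$ needs no separate control; and observing that isotropy of the rows $r_k$ supplies exactly the normalization $\mathbb{E}[r_kr_k^\top]=\I_N$ that theorem requires.
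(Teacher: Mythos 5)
Your argument is correct, but it takes a genuinely different route from the paper's, and the comparison is instructive. For the independent-entries case you apply the independent-\emph{rows} concentration result (Vershynin's Theorem~5.39, restated here as Theorem~\ref{theorem:vershynin:2}), observing that once all $mN$ entries are jointly independent the $m$ rows of $\Ps^{(m)}$ are independent isotropic sub-Gaussian vectors in $\R^N$. The paper's Lemma~\ref{lemma:s_min:lb:poly:sub_gaussian} instead stays with the column structure throughout and invokes the fixed-norm independent-\emph{columns} result (Vershynin's Theorem~5.58, Theorem~\ref{theorem:vershynin:3} here), which remains available even with dependent entries after each column of $\bm{\Psi}_{>N}^\top$ is rescaled to have fixed Euclidean norm. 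The diagonal rescaling factor $\mathbf{P}=\diag(P_i)$ with $P_i=\sqrt{\tfrac{\sum_{k>N}\psi_{ki}^2}{M-N}}$ is then precisely the random variable $P$ of the statement (up to a constant), and independence of the entries within a column enters only in a second, cleanly separated step: it is what makes each $P_i^2$ concentrate around $1$ via a sub-exponential Bernstein bound and a union over $i$. That unified decomposition is the point of the theorem --- it localizes the failure mode for dependent features to a single interpretable quantity, $\min_i P_i^2$, which is exactly what Figures~\ref{fig:sub_gaussian} and~\ref{fig:co_sine} track. Your treatment of the first claim, by contrast, is tautological: setting $P\eqdef c'N^{-1}s_{\min}(\Ps^{(m)})^2$ makes $s_{\min}(\K)\ge PN\lambda_N$ automatic for \emph{any} lower bound and carries no content beyond what the second claim already supplies, so while it formally satisfies the loosely phrased statement it does not exhibit the specific $P$ the paper is after. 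Your informal explanation of why a net argument stalls without entry-independence (that $\Psi_i^{(m)}$ need not be sub-Gaussian as a vector with an $O(1)$ constant) is sound in spirit, but it is a looser diagnostic than the paper's $\min_iP_i^2$. One cosmetic difference worth flagging: you truncate to indices $1$ to $m$, whereas the paper truncates to indices $N$ to $\theta N$; both work here since either window has eigenvalues comparable to $\lambda_N$ under polynomial decay, but the paper's choice aligns better with its downstream use of $\K_l=\Ps_{>l}^\top\La_{>l}\Ps_{>l}$ in Theorem~\ref{theorem:test_error:poly}.
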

\begin{sketchproof}
    The argument follows from random matrix concentration from \cite{vershynin2010introduction}. The full proof can be found in Lemma \ref{lemma:s_min:lb:poly:sub_gaussian} in the appendix.
\end{sketchproof}

In general, when the features are dependent on each other, the random variable $P$ can vanish to zero as $N\to\infty$, rendering the lower bound in Theorem \ref{theorem:main:3} vacuous. Hence the argument in Theorem \ref{theorem:main:2} would break down in the general feature case (for instance when considering kernel feature vectors $\ps=(\psi_k(x))_{k=1}^M$, where the eigenfunctions $\psi_k$'s are generally dependent on each other).
Indeed, we discover both tempered and catastrophic overfitting phenomena can occur for kernels with polynomial eigen-decay and dependent feature vectors (see Figure \ref{fig:overfitting}). 
\begin{figure}[ht]
    \centering
    \includegraphics[width = 0.48\linewidth]{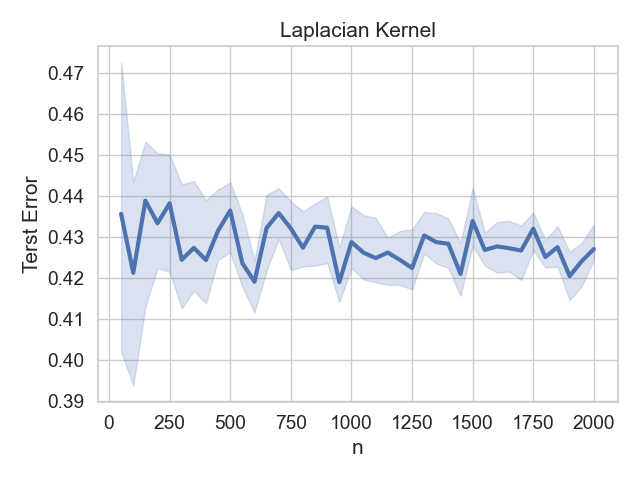}
    \includegraphics[width = 0.48\linewidth]{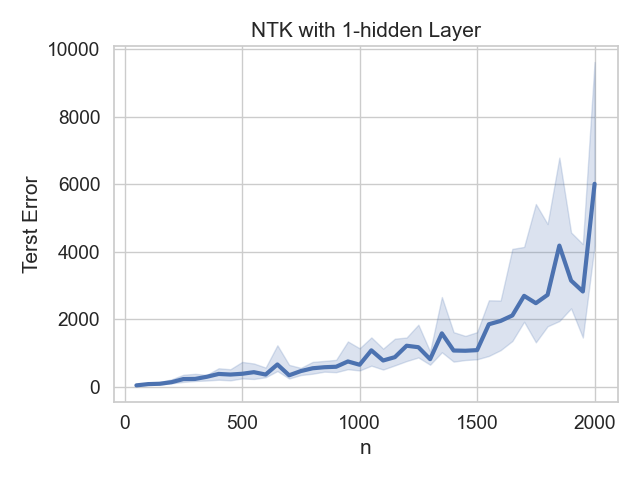}
    \caption{
    \textit{Test error of kernel interpolation on the unit 2-disk against the sample size $N$.}
    (left): Laplacian kernel $K(x,z)=e^{-\norm{x-z}{2}}$
    (right): ReLU Neural tangent kernel (NTK) for a 1-hidden layer network
    }
    \label{fig:overfitting}
\end{figure}
The neural tangent kernel (NTK) for a 1-hidden layer network is defined to be $K(x,z)=x^\top z \kappa_0(x^\top z)+ \kappa_1(x^\top z)$ where $\kappa_0(t)\eqdef1-\frac{1}{\pi}\arccos(t)$, $\kappa_1(t)\eqdef\frac{1}{\pi}\left(  t(\pi-\arccos(t))+\sqrt{1-t^2}\right)$. According to \cite{bietti2019inductive}, the NTK exhibits polynomial eigen-decay. This counterexample suggests that the conclusion drawn in \cite{mallinar2022benign} regarding the tempered overfitting with polynomial eigen-decay might be too optimistic.
\section{Experiments} \label{section:experiments}
We run several simple experiments to validate our theoretical analysis on overfitting. 

\subsection{(Sub-)Gaussian Design}
First, we validate the main results in Section \ref{section:main_result}: 1) the bound of the condition number $\frac{s_{\max}(\K)}{s_{\min}(\K)}$ for polynomial and exponential spectra as predicted in Theorem 1; 2) the tempered and  the catastrophic) overfittings for polynomial and exponential spectra.

For simplicity, we implement the experiment following Assumption (Assumption \ref{assumption:gaussian_design}).
Let $\bm{\phi}_k\sim\mathcal{N}(0,\bm{\Lambda})$ be i.i.d. Gaussian random vector with covariance $\bm{\Lambda}=\diag\{\lambda_k\}$. Write $\bm{\Phi} \in \R^{M\times N}$ be a matrix with $k^{th}$ column $\bm{\phi}_k$.  
For each pair $N$ and $M=10N$, we run over 20 random samplings for the kernel matrix $\bm{\Phi}^\top \bm{\Phi}$.  

Figure \ref{fig:cond_num} confirms that the condition number of the kernel matrix grows as described in Theorem \ref{theorem:main:1}: with $\frac{s_{\max}}{s_{\min}} \asymp \frac{\lambda_1}{\lambda_N}$ in the case of a polynomial spectrum and $\frac{s_{\max}}{s_{\min}} \asymp \frac{N\lambda_1}{\lambda_N}$ in the case of an exponential spectrum.
\begin{figure}[ht]
    \centering
    \includegraphics[width = 0.48\columnwidth]{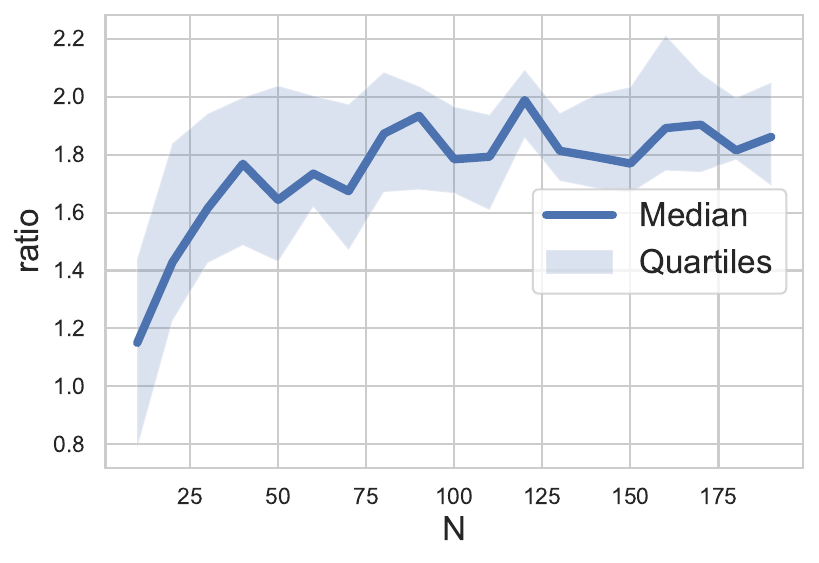}
    \includegraphics[width = 0.48\columnwidth]{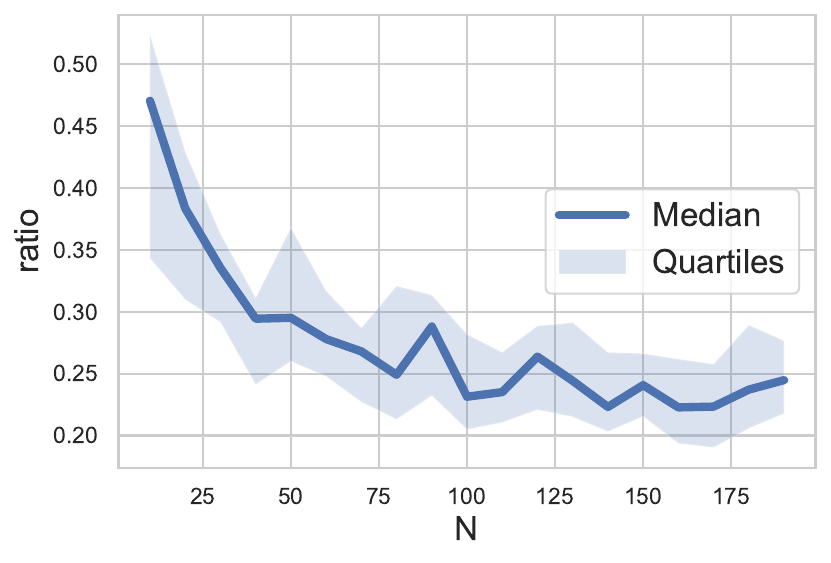}
    \caption{Validation of Theorem~\ref{theorem:main:1}: The ratios $\frac{s_{\max}}{s_{\min}} : \frac{\lambda_1}{\lambda_N}$ for the polynomial spectrum (left) and $\frac{s_{\max}}{s_{\min}} : \frac{N\lambda_1}{\lambda_N}$ for the exponential spectrum (right) are asymptotically constant.}
    \label{fig:cond_num}
\end{figure}
To compute the test error, we randomly set the true coefficient $\bm{\theta}^*\sim\mathcal{N}(0,\mathbf{I}_M)$ and let $y=(\bm{\theta}^*)^\top \bm{\phi}+\epsilon$ be the label where $\epsilon\sim\mathcal{N}(0,1)$ is the noise. We evaluate the test error using the mean square error (MSE) between the true label and the ridgeless regression on 1000 random points. For each pair $N$ and $M=10N$, we run over 20 iterations for the same true coefficient. 
In Figure \ref{fig:learning_curve}, we validate Theorem \ref{theorem:main:2}: the learning curve for polynomial decay is asymptotically bounded by constants; while that for exponential decay increases as $N\to\infty$.
\begin{figure}[ht]
    \centering
    \includegraphics[width = 0.48\columnwidth]{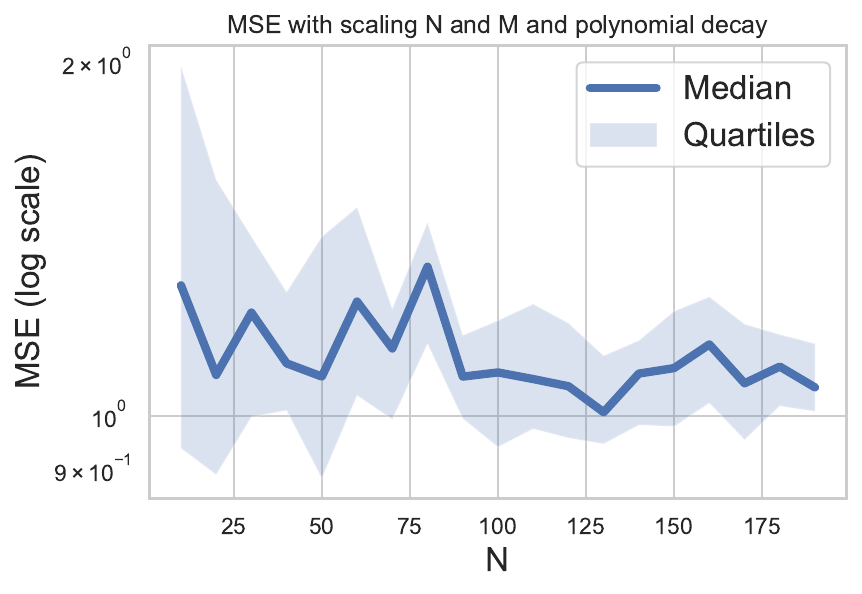}
    \includegraphics[width = 0.48\columnwidth]{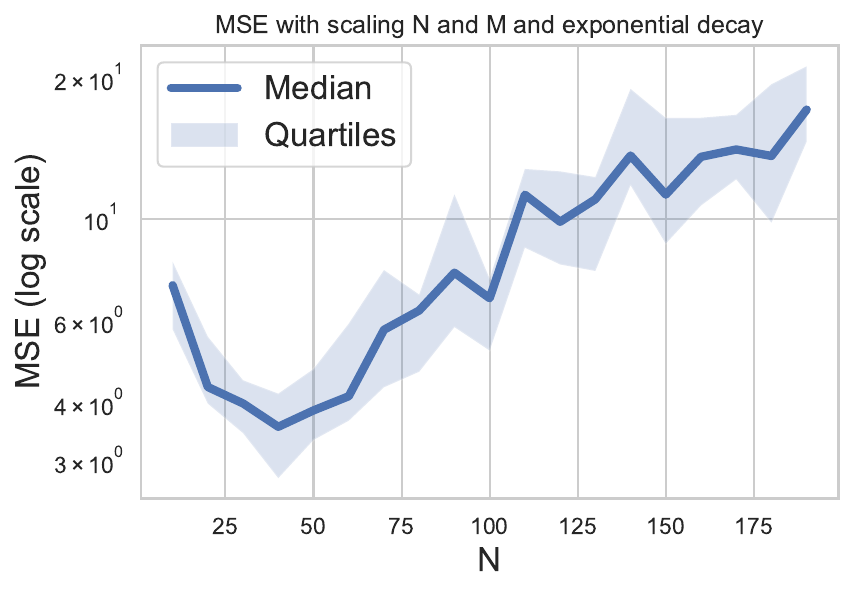}
    \caption{Validation of Theorems \ref{theorem:main:2} and \ref{theorem:main:3}: Learning curves for spectra with polynomial (left) and exponential (right) decays.}
    \label{fig:learning_curve}
\end{figure}

To validate Theorem \ref{theorem:main:1} under the sub-Gaussian design assumption (Assumption \ref{assumption:sub_gaussian_design}), we compare the empirical spectrum of $\K=\Ps^\top\La\Ps$, where the isotropic features $\Ps_i$ are either Gaussian or uniformly distributed ($\unif[-\sqrt{3},+\sqrt{3}]$). In Figure \ref{fig:sub_gaussian}, we observe that $s_{\min}\left(\frac{1}{n}\K\right)$ remains in the same magnitude as $\lambda_N$ for both types of features.
\begin{figure}[ht]
    \centering
    \includegraphics[width = 0.7\columnwidth]{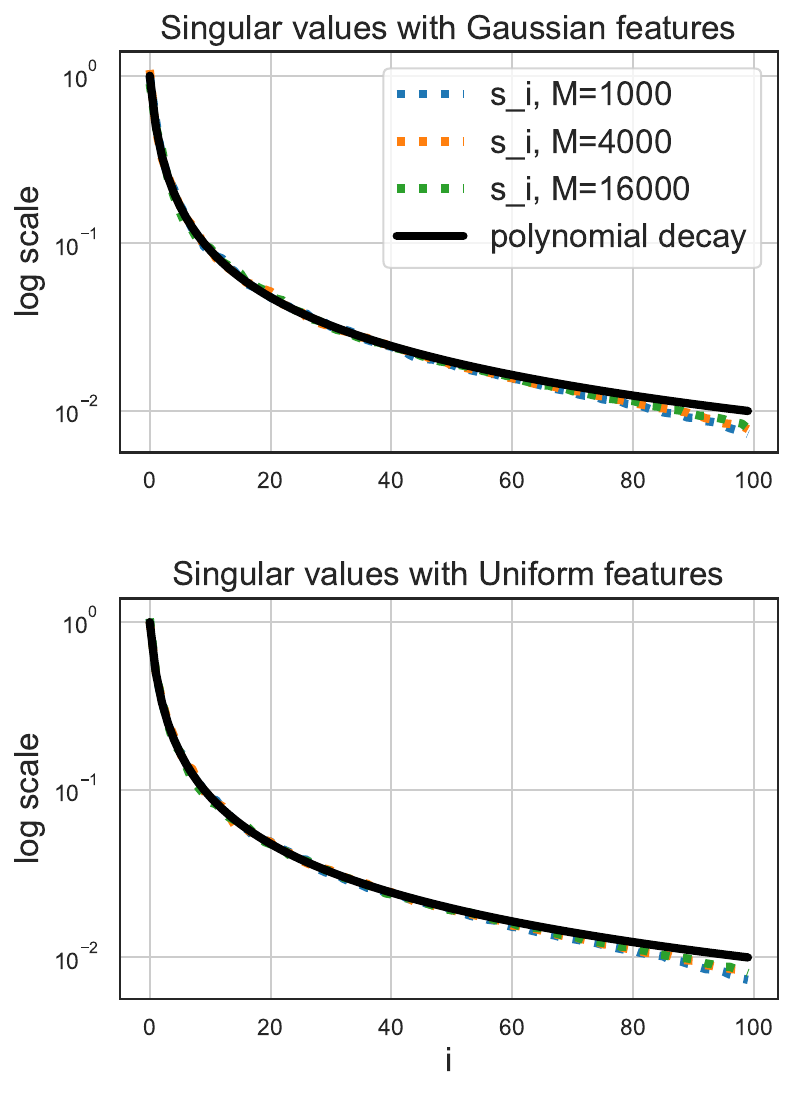}
    \caption{\textit{Empirical singular values are close to the eigen-spectrum for independent features}
    (top): The features are Gaussian $\psi\sim\mathcal{N}(0,\I_M)$.
    (bottom): The features are uniformly distributed $\psi\sim(\unif[-\sqrt{3},+\sqrt{3}])^p$.
    }
    \label{fig:sub_gaussian}
\end{figure}
To observe the effect of feature dependence on the smallest singular value $s_{\min}(\K)$, we conduct experiments with cosine and sine features. In Figure \ref{fig:co_sine}, we observe that $s_{\min}(\K)$ vanishes, thus validating our findings in Theorem \ref{theorem:main:3}.
\begin{figure}[ht]
    \centering
    \includegraphics[width = 0.7\columnwidth]{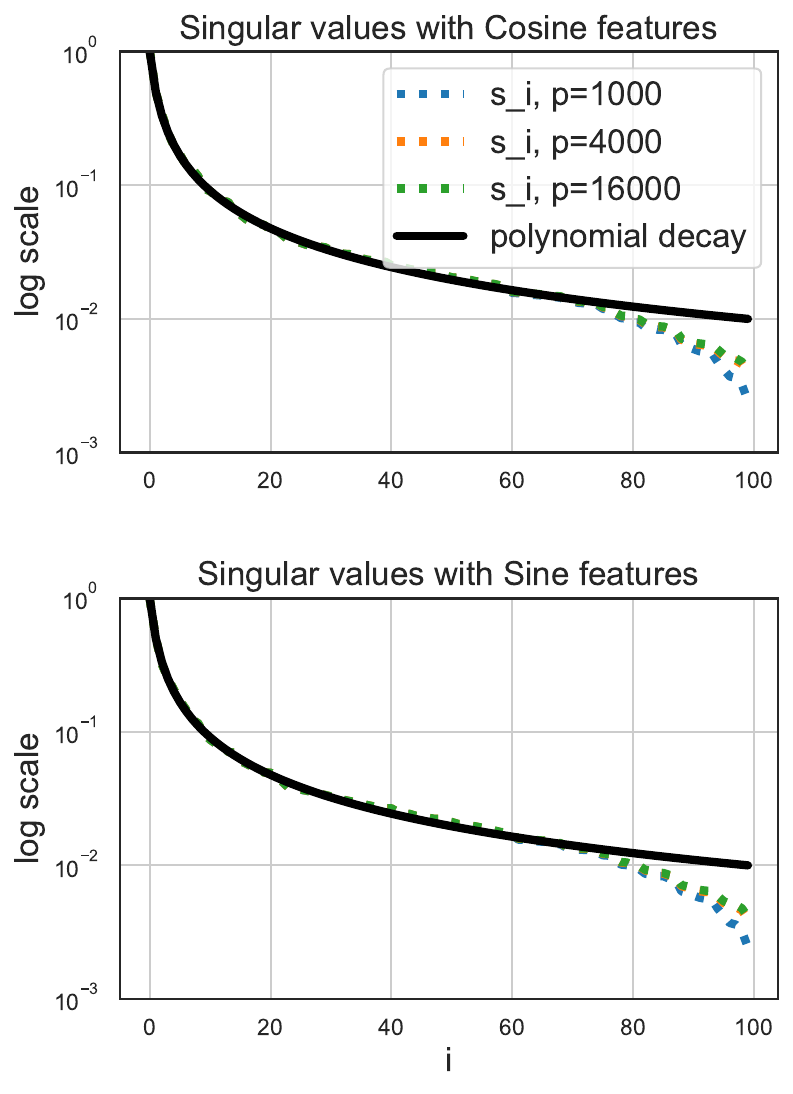}
    \caption{\textit{Empirical smallest singular value vanishes for dependent features.}
    (top): The features are cosines $\psi=(\cos(k\cdot))_{k=1}^M$.
    (bottom): The features are sines $\psi=(\sin(k\cdot))_{k=1}^M$.
}
    \label{fig:co_sine}
\end{figure}
\subsection{Dependent Feature}
Additionally, when we transition to kernels, we observe that the smallest singular value also diminishes (see Figure \ref{fig:laplacian}). In this scenario, the data follows a Gaussian distribution on the real line. Combined with the insights from Figure \ref{fig:overfitting}, where the Laplacian kernel displays tempered overfitting under different data distributions, we conclude that \textit{the condition $s_{\min}(\K)\approx \lambda_N$ is sufficient but not necessary for observing tempered overfitting in kernels with polynomial decay}.
\begin{figure}[ht]
    \centering
    \includegraphics[width = 0.48\columnwidth]{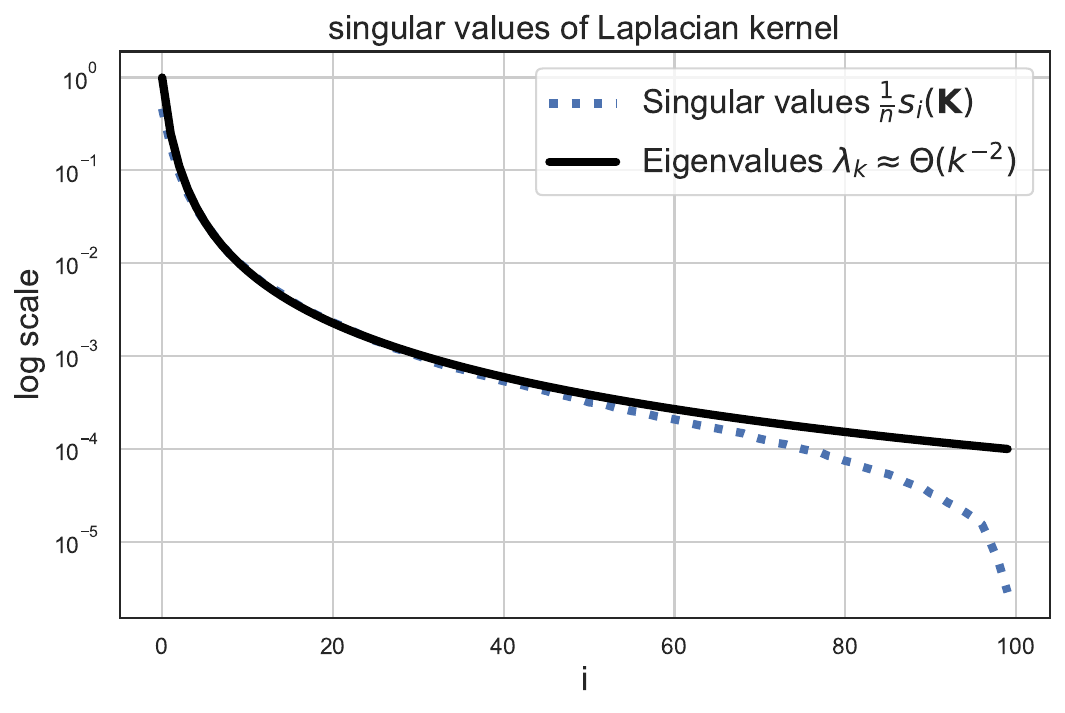}
    \includegraphics[width=0.48\columnwidth]{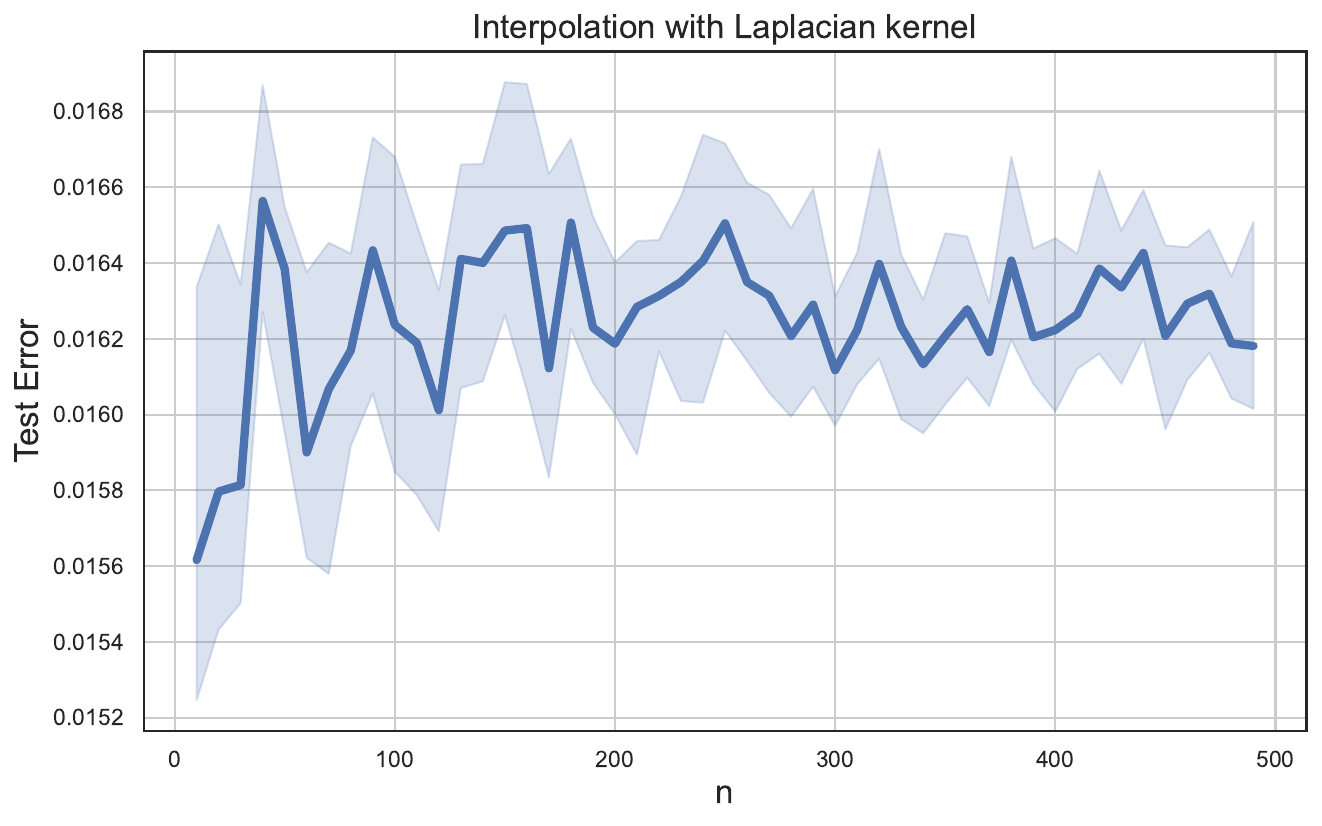}
    \caption{\textit{Interpolation with Laplacian kernel $K(x,z)=e^{-|x-z|}$ with inputs $x,z\sim\mathcal{N}(0,1)$.}
    (left): The empirical spectrum of Laplacian.
    The smallest singular value vanishes as in Figure \ref{fig:co_sine}.
    (right): The test error is bounded by some constants as $n\to\infty$, exhibiting tempered overfitting.
    }
    \label{fig:laplacian}
\end{figure}

\section{Discussion} \label{section:discussion}
In this section, we discuss the interpretations of our results and their possible extensions.

\subsection{Implicit Regularization} \label{subsection:implicit_regularization}
Intuitively, given a (possibly infinite rank) PDS kernel $K$, one decomposes the kernel matrix into: $\mathbf{K}=\mathbf{K}_{\leq l}+\mathbf{K}_{>l}$ where the low-rank part $\mathbf{K}_{\leq l}$ fits the low-complexity target function while the high-rank part $\mathbf{K}_{>l}\approx (\sum_{k>l}\lambda_k)\mathbf{I}_N$ serves as the implicit regularization. Hence the (normalized) effective rank $\rho_l\eqdef\frac{\sum_{k>l}\lambda_l}{N\lambda_{l+1}}$ measures the relative strength of the implicit regularization.
With exponential eigen-decay, the effective rank $\rho_l=\Theta(N^{-1})\ll O(1)$ is negligible, hence one can expect the catastrophic overfitting as the implicit regularization is not strong enough to stop the interpolant using high-frequency eigenfunctions to fit the noise. With polynomial decay, the effective rank $\rho_l= \Theta(1)$ shows that the interpolant would fit the white noise as if it is the target function, hence overfitting is tempered; for even slower decay like linear-poly-logarithmic decay $\lambda_k=\Theta(\frac{1}{k\log^{2} k})$ in \cite{barzilai2023generalization}, the effective rank $\rho_l=\Omega(\log l)$, hence the high-frequency part is heavily regularized and benign overfitting would occur.

\subsection{Sub-Gaussian Design} \label{subsection:sub_gaussian_design}
We emphasize that the sub-Gaussian design assumption (Assumption \ref{assumption:sub_gaussian_design}) represents a significantly weaker assumption compared to the Gaussian design assumption (Assumption \ref{assumption:gaussian_design}), which enhances the theoretical significance of our paper over previous literature in several ways:
\begin{enumerate}
    \item We only necessitate the independence of sub-Gaussian variables, not their identical distribution, unlike previous works such as \cite{bordelon2020spectrum, cui2021generalization, loureiro2021learning}, which relied on the Replica Method and could not circumvent the Gaussian design assumption.
    \item In general, sub-Gaussian vectors lack the rotational invariance property of Gaussian vectors, which was crucial in the analyses of \cite{simon2021eigenlearning, mallinar2022benign}.
    \item Sub-Gaussian variables are not required to be continuous, unlike Gaussian vectors, where the continuity of kernels (and consequently the feature vectors) is often assumed in KRR literature \cite{zhang2023optimality, li2023kernel, li2023on, haas2023mind}.
\end{enumerate}

\subsection{Beyond Independent Features} \label{subsection:independent_features}
Comparing independent and dependent features, Theorem \ref{theorem:main:3} offers insights into the lower bound of the smallest singular value for polynomial eigen-decay and dependent sub-Gaussian features. Consequently, this revelation underscores the qualitative difference in generalization behavior between independent and dependent features: if the sub-Gaussian features are dependent, overfitting can escalate to catastrophic levels (see Figure \ref{fig:overfitting}); conversely, independent sub-Gaussian features imply tempered overfitting.
As a result, our paper rigorously demonstrates that the theories presented in \cite{bordelon2020spectrum, cui2021generalization, simon2021eigenlearning} fail to accommodate the possibility of catastrophic overfitting with polynomial eigen-decay. Therefore, our paper provides insights that are currently absent in the field. It underscores the pivotal role of independence versus dependent features and prompts further inquiry into identifying additional properties of features that influence tempered or catastrophic overfitting.
 
\subsection{Limitations} \label{subsection:limitations}
Our method currently operates solely under the sub-Gaussian design assumption (Assumption \ref{assumption:sub_gaussian_design}), presuming the kernel rank is finite, and the features are independent of each other. We acknowledge that this remains distant from the realistic kernel setting. 

However, we have the following justifications for the limitations in this paper.
\paragraph{Finite Rank Features}
To justify whether the finite rank approximation is sufficient to investigate overfitting behaviour, we present the following convergence result of the variance term $\V$. Fix a (infinite-rank) kernel with Mercer decomposition $K=\sum_{k=1}^\infty\lambda_k\psi_k(\cdot)\psi_k(\cdot)$ and a sample of size $N$. For each integer $M\in\N$, define the truncated kernel $K^{(M)}=\sum_{k=1}^M\lambda_k\psi(\cdot)\psi(\cdot)$. Let $\V$ and $\V(M)$ be the variance terms corresponding to the kernels $K$ and $K^{(M)}$ respectively. Then there exists an integer $M_0$ such that:
\begin{equation*}
    |\V-\V(M)|
    \leq
    3\V(M) + \frac{\sigma^2}{N}
\end{equation*}
whenever $M>M_0$. In particular, the variance $\V(M)$ of a finite rank kernel $K^{(M)}$ has the same overfitting behaviour as the original one $\V$. 
See Proposition \ref{proposition:finite_rank} in the appendix for more details.
\paragraph{Concurrent Work}
We are aware of the concurrent work \cite{barzilai2023generalization}, which addresses the same problem and offers statements valid for a broader class of features, particularly including kernel features.
However, our work serves as a complement to theirs. For instance, our analysis can elucidate overfitting in cases of exponential decay (where their bounds may be vacuous).

\subsection{Future Research} \label{subsection:future}
There are several obvious possibilities to extend the results of this paper:

\begin{enumerate} 
    \item What causes NTK to exhibit catastrophic overfitting while Laplacian exhibits tempered overfitting? There is more than just the eigen-spectrum that affects the overfitting behaviour, which is worth further investigation.
    \item The work \cite{simon2021eigenlearning} suggested that the distribution of the kernel features with realistic data is similar to Gaussian. Does this suggest that the data distribution in a realistic dataset leads to independent eigen-functions? As we have seen in our paper, feature independence plays an important role in overfitting behaviour. This might help us to understand more about benign overfitting reported in \cite{zhang2017understanding}. 
    \item Controlling the condition number of the kernel matrix in Theorem \ref{theorem:main:1} can be of independent interest, for instance when studying the convergence properties of gradient-based methods.
\end{enumerate}

\bibliographystyle{unsrtnat} 
\bibliography{paper.bbl}
\newpage
\appendix
\onecolumn

\vbox{
  {\hrule height 2pt \vskip 0.15in \vskip -\parskip}
  \centering
  {\LARGE\bf Appendix\par}
  {\vskip 0.2in \vskip -\parskip \hrule height 0.5pt \vskip 0.09in}
}

With abuse of notations, the constants $c,c_1,c_2,...$ with small letter $c$ may change from line to line.

Denote $\|\mathbf{v}\|_\mathbf{M}\eqdef \sqrt{\mathbf{v}^\top\mathbf{M}\mathbf{v}}$ for any vector $\mathbf{v}$ and matrix $\mathbf{M}$ with appropriate dimension.

For an $M\times N$ matrix $\M$, denote $\M_{\leq l}\in\R^{l\times N}$ its submatrix containing the first $l$ columns; for $M\times M$ square matrix $\mathbf{S}$, denote $\M_{\leq l}\in\R^{l\times l}$ its submatrix containing the first $l$ columns and rows. Matrices with subscripts $\cdot_{l_1:l_2}$ or $\cdot_{>l}$ are defined similarly.

\section{Proof} \label{section:sub_gaussianity}
In this section, we will prove the Theorem \ref{theorem:main:1} on the condition number of $\mathbf{K}$ under Assumption \ref{assumption:sub_gaussian_design}. 

Let us first restate the sub-Gaussian design assumption (Assumption \ref{assumption:sub_gaussian_design}):
\begin{assumption}[Sub-Gaussian design]
    Let $M\in\N$. For every $k\in\N$, the random variable $\psi_k(x)$ is replaced by an independent sub-Gaussian variable in $\R^M$ with uniformly bounded sub-Gaussian norm.
\end{assumption}

Consider the regressor 
\begin{equation*}
    \hat{f}(x)
    =
    \K_x(\K + N \lambda \I)^{-1}\y
    =
    \ps(x)^\top \Ps (\Ps\La\Ps^\top + N\lambda\I)^{-1}\y
\end{equation*}
Effectively, the Sub-Gaussian Design Assumption replace the vector $\ps(x)$ and the columns $\Ps_i$ of the matrix $\Ps\in\R^{M\times N}$ by sub-Gaussian vectors with independent entries. Note that, by setup, those vectors $\ps(x)$ and $\Ps_i$ are i.i.d. to each other.

\subsection{Condition number}
The control on the largest singular value $s_{\max}(\K)$ of the kernel matrix directly follows from the literature:
\begin{lemma}[bound on largest singular value, Theorem 9 in \cite{koltchinskii2017concentration}, Theorem 1 in \cite{zhivotovskiy2024dimension}] \label{lemma:s_max:ub:sub_gaussian}
    Suppose Assumption \ref{assumption:sub_gaussian_design} holds, that is, there exists some constant $\kappa>1$ such that 
    \begin{equation*}
        \norm{\langle \v, \La^{1/2}\ps \rangle}{\psi_2}
        \leq
        \kappa \sqrt{\v^\top \La \v}
    \end{equation*}
    for all $\v\in\R^n$, where $\ps$ is the random sub-Gaussian vector with the columns $\Ps_i$ in $\Ps$ as its realization, and $\norm{\cdot}{\psi_2}$ denote the sub-Gaussian norm.
    Then with probability at least $1-e^{-t}$, it holds that
    \begin{equation*}
        \opnorm{\frac{1}{N}\bm{\Lambda}^{1/2}\bm{\Psi}\bm{\Psi}^\top\bm{\Lambda}^{1/2}-\bm{\Lambda}}
        \leq
        20 \kappa^2 \opnorm{\bm{\Lambda}}\sqrt{ 4\rho_0 + \frac{t}{N}}
    \end{equation*}
    whenever $N\geq 4N\rho_0+t$, and $\rho_0\eqdef\frac{\tr[\bm{\Lambda}]}{N\opnorm{\bm{\Lambda}}}$ is the normalized effective rank of $\bm{\Lambda}$.\\
    In particular, if $\rho_0\leq \frac{1}{80(40\kappa^2)^2}$, then with probability at least $1-e^{-\frac{N}{2(40\kappa^2)^2}}$, it holds that
    \begin{equation*}
        \half N\opnorm{\bm{\Lambda}}
        \leq
        s_{\max}(\Ps^\top\La\Ps)
        =
        s_{\max}(\bm{\Lambda}^{1/2}\bm{\Psi}\bm{\Psi}^\top\bm{\Lambda}^{1/2})
        \leq
        \frac{3}{2} N\opnorm{\bm{\Lambda}}.
    \end{equation*}
\end{lemma}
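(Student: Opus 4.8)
\textbf{Proof proposal for Lemma \ref{lemma:s_max:ub:sub_gaussian}.}

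The plan is to treat the first display as a black box imported from \cite{koltchinskii2017concentration,zhivotovskiy2024dimension}: under the stated sub-Gaussian condition, the empirical second-moment operator $\frac1N\La^{1/2}\Ps\Ps^\top\La^{1/2}$ concentrates around its mean $\La$ in operator norm, with deviation controlled by the normalized effective rank $\rho_0$ and the confidence parameter $t/N$. So the only work left is the ``in particular'' clause, which is a routine specialization: choose $t$ to balance the two terms under the bracket, then impose a smallness condition on $\rho_0$ so that the resulting deviation is at most $\frac12\opnorm{\La}$, and finally translate the operator-norm bound on the difference into two-sided bounds on $s_{\max}$.

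Concretely, first I would set $t = \frac{N}{2(40\kappa^2)^2}$. Then $\frac{t}{N} = \frac{1}{2(40\kappa^2)^2}$, and under the hypothesis $\rho_0 \leq \frac{1}{80(40\kappa^2)^2}$ one checks $4\rho_0 \leq \frac{1}{20(40\kappa^2)^2} \leq \frac{t}{N}$ (since $\tfrac{1}{20}\le\tfrac12$ wait — one needs $4\rho_0\le \frac{1}{20(40\kappa^2)^2}$ and $\frac tN = \frac{1}{2(40\kappa^2)^2}$, so indeed $4\rho_0\le \frac tN$), hence $4\rho_0 + \frac{t}{N} \leq \frac{2t}{N} = \frac{1}{(40\kappa^2)^2}$, giving $\sqrt{4\rho_0 + \frac{t}{N}} \leq \frac{1}{40\kappa^2}$. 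Plugging into the main bound yields
\begin{equation*}
    \opnorm{\tfrac1N\La^{1/2}\Ps\Ps^\top\La^{1/2} - \La}
    \;\leq\;
    20\kappa^2 \opnorm{\La}\cdot\tfrac{1}{40\kappa^2}
    \;=\;
    \tfrac12\opnorm{\La}.
\end{equation*}
I would also verify the side condition $N \geq 4N\rho_0 + t$: since $4N\rho_0 \leq \frac{N}{20(40\kappa^2)^2}$ and $t = \frac{N}{2(40\kappa^2)^2}$, their sum is at most $\frac{N}{(40\kappa^2)^2} \leq N$ (using $40\kappa^2 > 1$), so the hypothesis of the imported theorem is met and the event holds with probability at least $1 - e^{-t} = 1 - e^{-N/(2(40\kappa^2)^2)}$.

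Finally I would convert this into the singular-value statement. Since $\K = \Ps^\top\La\Ps$ and $\La^{1/2}\Ps\Ps^\top\La^{1/2}$ have the same nonzero eigenvalues, $s_{\max}(\K) = s_{\max}(\La^{1/2}\Ps\Ps^\top\La^{1/2}) = N\,\opnorm{\tfrac1N\La^{1/2}\Ps\Ps^\top\La^{1/2}}$, and by the triangle inequality for the operator norm on the event above,
\begin{equation*}
    N\opnorm{\La} - \tfrac12 N\opnorm{\La}
    \;\leq\;
    s_{\max}(\K)
    \;\leq\;
    N\opnorm{\La} + \tfrac12 N\opnorm{\La},
\end{equation*}
which is exactly $\tfrac12 N\opnorm{\La} \leq s_{\max}(\K) \leq \tfrac32 N\opnorm{\La}$. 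There is no real obstacle here beyond correctly tracking the constants; the substantive mathematical content is entirely contained in the cited concentration inequality, whose proof I would not reproduce. The one point that needs a moment's care is checking that the chosen $t$ simultaneously satisfies the admissibility condition $N \geq 4N\rho_0 + t$ and makes the deviation small enough, which is why the smallness threshold on $\rho_0$ is stated as $\frac{1}{80(40\kappa^2)^2}$ rather than something larger.
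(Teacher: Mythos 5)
Your proposal is correct and follows exactly the route the paper intends: the main concentration display is imported as a black box from the cited references, and the ``in particular'' clause is derived by the routine specialization $t = \frac{N}{2(40\kappa^2)^2}$, checking $4\rho_0+\frac{t}{N}\le\frac{1}{(40\kappa^2)^2}$ and the admissibility condition $N\ge 4N\rho_0 + t$, and then converting the operator-norm deviation bound of $\frac12\opnorm{\La}$ into two-sided bounds on $s_{\max}$ via the identity $s_{\max}(\Ps^\top\La\Ps)=s_{\max}(\La^{1/2}\Ps\Ps^\top\La^{1/2})$ and the reverse triangle inequality. The constant-tracking is correct and there is no gap.
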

\begin{remark}[Sub-Gaussian Condition] \label{remark:centering}
    In \cite{koltchinskii2017concentration, zhivotovskiy2024dimension}, the random vector $\psi_k$ are required to be centered. However, as mentioned Remark 5.18 in \cite{vershynin2010introduction}, centering of a sub-Gaussian random variable $X$ does not change the sub-Gaussian constant by more than 2:
    \begin{equation*}
        \norm{X - \Expect{}{X}}{\psi_2} \leq 2\norm{X}{\psi_2}.
    \end{equation*}
    Hence, by possibly changing the constant $\kappa$, we drop the requirement of centered random variable in the statement.
\end{remark}

\begin{lemma}[Lower bound of smallest singular value for polynomial spectrum] \label{lemma:s_min:lb:poly:sub_gaussian}
    Suppose Assumption \ref{assumption:interpolation} holds, $\lambda_k=\bigtheta{k}{k^{-1-a}}$ for some constant $a>0$, and each column $\Ps_i=(\ps_{ki})_{k=1}^M\in\R^M$ is i.i.d. sub-Gaussian isotropic random vector, whose entries $\psi_{ki}$ are not necessarily independent. Then there exists constants $c_1,c_2>0$ such that, with a probability of at least $1-2e^{-c_1N}$:
\begin{equation} 
    s_{\min}(\mathbf{K})
    \geq  
     c_2\min_i\{P_i^2\}\cdot N \lambda_N,
\end{equation}
where $P_i\eqdef\sqrt{\frac{\sum_{i=N+1}^M\psi_{ki}^2}{M-N}}$ is a random variable depending on the inputs $x_i$'s.
Furthermore, if Assumption \ref{assumption:sub_gaussian_design} holds, then with probability at least $1-2Ne^{-c_1N}$, it holds that
\begin{equation} 
    s_{\min}(\mathbf{K})
    \geq  
     \frac{c_2}{2} N \lambda_N.
\end{equation}
\end{lemma}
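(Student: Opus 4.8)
The plan is to throw away almost all of the spectrum, keeping only a rectangular block of rows of $\Ps$ whose eigenvalues are all of order $\lambda_N$, and then to lower-bound the smallest singular value of that block using the random-matrix concentration of \cite{vershynin2010introduction}. Throughout, $\eta$ is a large constant fixed at the end. First I would reduce to a tall sub-block: set $l:=\lceil\eta N\rceil$, so $l\le M$ by Assumption~\ref{assumption:interpolation}. Splitting $\mathbf{K}=\Ps^\top\La\Ps$ by rows gives $\mathbf{K}=\Ps_{\le N}^\top\La_{\le N}\Ps_{\le N}+\Ps_{>N}^\top\La_{>N}\Ps_{>N}$, both summands positive semidefinite; discarding the first and then all but rows $N{+}1,\dots,l$ of the second, and using $\La_{N+1:l}\succeq\lambda_l\mathbf{I}$, yields $\mathbf{K}\succeq\lambda_l\,\Ps_{N+1:l}^\top\Ps_{N+1:l}$, hence $s_{\min}(\mathbf{K})\ge\lambda_l\,s_{\min}(\Ps_{N+1:l})^2$. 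Since $\lambda_k=\Theta(k^{-1-a})$, we have $\lambda_l\ge c_a\lambda_N$ for $c_a=c_a(\eta,a)>0$ and $(\eta-1)N\le l-N\le\eta N$. (The lemma writes $P_i$ with the sum running to $M$; when $M=\Theta(N)$ one may equally keep the whole block $k>N$, and in general the two blocks are interchangeable up to $\eta$-dependent constants, so below I read $P_i^2$ as the average of $\psi_{ki}^2$ over the rows of the block actually used.) It thus remains to show $s_{\min}(\Ps_{N+1:l})^2\gtrsim N\min_iP_i^2$ with probability at least $1-2e^{-c_1N}$.

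Next I would normalize the columns. The columns $c_i:=(\psi_{ki})_{N<k\le l}\in\R^{\,l-N}$ of $\mathbf{A}:=\Ps_{N+1:l}$ are i.i.d., isotropic, and sub-Gaussian with uniformly bounded norm. Writing $\|c_i\|=\sqrt{l-N}\,P_i$ and $\hat c_i:=c_i/\|c_i\|$ (on $\{c_i\neq 0\ \forall i\}$; on the complement $\min_iP_i=0$ and the claim is the trivial $0\le s_{\min}(\mathbf{K})$), one has $\mathbf{A}=\hat{\mathbf{A}}\,\diag\big(\sqrt{l-N}\,P_i\big)$ with $\hat{\mathbf{A}}$ having unit-norm columns, so $s_{\min}(\mathbf{A})\ge\sqrt{l-N}\,(\min_iP_i)\,s_{\min}(\hat{\mathbf{A}})$. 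Combining with the previous step, the whole lemma reduces to showing $s_{\min}(\hat{\mathbf{A}})\ge c_3>0$ with probability at least $1-2e^{-c_1N}$: then $s_{\min}(\mathbf{K})\ge c_a\lambda_N(l-N)\,c_3^2\min_iP_i^2\ge c_2\,N\lambda_N\min_iP_i^2$.

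The heart of the argument is bounding $s_{\min}(\hat{\mathbf{A}})$ from below. The matrix $\hat{\mathbf{A}}$ has $N$ i.i.d. unit-sphere columns in $\R^{\,l-N}$ with $l-N\ge(\eta-1)N$; its Gram matrix $\hat{\mathbf{A}}^\top\hat{\mathbf{A}}$ is exactly $\mathbf{I}_N$ on the diagonal plus an off-diagonal part whose entries $\langle\hat c_i,\hat c_j\rangle$, $i\neq j$, are --- because the $c_i$ are isotropic with uniformly bounded sub-Gaussian norm --- of order $(l-N)^{-1/2}$, so that (by the net/concentration estimates for matrices with independent columns in \cite{vershynin2010introduction}) its operator norm is at most $C\sqrt{N/(l-N)}\le C(\eta-1)^{-1/2}$ with probability at least $1-2e^{-c_1N}$. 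Taking $\eta$ large enough makes this $\le\tfrac12$, whence $s_{\min}(\hat{\mathbf{A}})^2=s_{\min}(\hat{\mathbf{A}}^\top\hat{\mathbf{A}})\ge\tfrac12$, which gives the first inequality with $c_2=c_2(\eta,a)>0$. This step is where I expect the real difficulty: without independence of the entries of $c_i$, normalization destroys exact isotropy, and a column's normalized direction is poorly controlled precisely when $P_i$ is small, so the estimate must be organized so that the $\min_iP_i^2$ factor absorbs such degeneracies while the aspect ratio $l-N\ge(\eta-1)N$ --- exactly what ``$\eta$ large enough'' in Assumption~\ref{assumption:interpolation} supplies --- drives the off-diagonal Gram operator norm below $\tfrac12$.

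Finally, for the independence refinement: under Assumption~\ref{assumption:sub_gaussian_design} the coordinates of each $c_i$ are independent, mean $0$, variance $1$, and sub-Gaussian, so $P_i^2=(l-N)^{-1}\sum_{N<k\le l}\psi_{ki}^2$ is an average of $l-N\asymp N$ i.i.d. sub-exponential variables of mean $1$, and Bernstein's inequality gives $\Pr(|P_i^2-1|\ge\tfrac12)\le 2e^{-c_4N}$. A union bound over $i=1,\dots,N$ yields $\min_iP_i^2\ge\tfrac12$ with probability at least $1-2Ne^{-c_4N}$; intersecting this with the event of the previous step and relabelling constants gives $s_{\min}(\mathbf{K})\ge\tfrac{c_2}{2}\,N\lambda_N$ with probability at least $1-2Ne^{-c_1N}$, as claimed. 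Note that under full independence Step~3 collapses to this Bernstein estimate, and the contrast between the two regimes is exactly the phenomenon recorded in Theorem~\ref{theorem:main:3}.
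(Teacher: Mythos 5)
Your proposal is correct and follows essentially the same path as the paper's own proof: reduce to the tall block of rows $N{+}1,\dots,\eta N$ (the paper does this via Weyl and then truncates $M\mapsto\theta N$), factor out the per-column norms as $\mathbf{A}=\hat{\mathbf{A}}\,\diag(\sqrt{l-N}\,P_i)$ so that $s_{\min}(\mathbf{K})\geq\lambda_l\,s_{\min}(\hat{\mathbf{A}})^2(l-N)\min_iP_i^2$, lower-bound $s_{\min}(\hat{\mathbf{A}})$ by the concentration estimate for matrices with independent bounded-norm columns (your Gram-matrix sketch is precisely what Theorem~\ref{theorem:vershynin:3} from \cite{vershynin2010introduction} packages, and that is exactly the result the paper invokes), then finish the independent case with a Bernstein bound and a union bound over $i$, matching Lemma~\ref{lemma:sub_exponential_deviation}. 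The one place you are more candid than the paper is in flagging that normalizing the columns need not preserve isotropy --- the paper simply asserts the normalized rows $\hat{\mathbf{R}}_i$ are isotropic before applying Theorem~\ref{theorem:vershynin:3} --- but this is a feature of your write-up, not a deviation from the paper's argument.
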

\begin{proof}
    By Assumption \ref{assumption:interpolation}, the feature dimension $M\geq \theta N$.  
    First, by Weyl's theorem (Corollary 4.3.15 in \cite{horn2012matrix}): $s_{\min}(\M_1)+s_{\min}(\M_2)\leq s_{\min}(\M_1+\M_2)$ for any any symmetric matrix $\M_1,\M_2\in\R^{N\times N}$, we have
    \begin{equation*}
        s_{\min}(\mathbf{K}) 
        = 
        s_{\min}(\bm{\Lambda}^{1/2}\bm{\Psi}^\top)^2 
        \geq 
        s_{\min}(\bm{\Lambda}_{N:\theta N}^{1/2}\bm{\Psi}_{N:\theta N}^\top)^2 
        \geq 
        \lambda_{N} \cdot \frac{\lambda_{\theta N}}{\lambda_{N}} s_{\min}(\bm{\Psi}_{N:\theta N})^2
    \end{equation*}
    where we write $\bm{\Lambda}^{1/2}\bm{\Psi}^\top=\bm{\Lambda}_{N:\theta N}^{1/2}\bm{\Psi}_{N:\theta N}^\top+(\bm{\Lambda}^{1/2}\bm{\Psi}^\top-\bm{\Lambda}_{N:\theta N}^{1/2}\bm{\Psi}_{N:\theta N}^\top)$ and $\cdot_{N:\theta N}$ denote the submatrix with columns ranging from $N$ to $\theta N$. Hence with abuse of notation, we replace $M$ by $\theta N$ in the following argument. 
    Let $\mathbf{R}_i^\top\in\R^{M-N}=\R^{(\theta-1)N}$ be the $i$-th row of $\bm{\Psi}_{>N}$, and $\hat{\mathbf{R}}_i \eqdef \frac{\sqrt{(M-N)}}{\eunorm{\mathbf{R}_i}} \mathbf{R}_i$. Note that $\Expect{}{\eunorm{\mathbf{R}_i}^2}=(\theta-1)N,\ \forall i=1,...,N$.
    Let $\hat{\bm{\Psi}}_{>N} \eqdef (\hat{\mathbf{R}}_i)_{i=1}^{N} \in\R^{N\times ((\theta-1)N)}$. Now the matrix $\hat{\bm{\Psi}}_{>N}^\top$ is an $((\theta-1)N)\times N$ matrix whose columns $\hat{\mathbf{R}}_i$ are independent sub-Gaussian isotropic random matrix with norm $\eunorm{\hat{\mathbf{R}}_i}=\sqrt{(\theta-1)N}$. Hence, by Theorem \ref{theorem:vershynin:3}, there exists constants $C_8,C_9>0$ (depending only on the sub-Gaussian norm of $\psi_k$) such that, for any $t>0$, with probability at least $1-2e^{-C_8t^2}$, the inequality holds:
    \begin{equation*}
        s_{\min}(\hat{\bm{\Psi}}_{>N}) \geq \sqrt{(\theta-1)N} - C_9 \sqrt{N} - t. 
    \end{equation*}
    Set $t=\sqrt{N}$ and $\theta>(C_9+2)^2+1$, the inequality holds:
    \begin{equation*}
        s_{\min}(\hat{\bm{\Psi}}_{>N}) \geq \sqrt{\theta N-N}- C_9 \sqrt{N} - \sqrt{N} \geq \sqrt{N},
    \end{equation*}
    with probability at least $1-2e^{-c_8N}$.
    Notice that $\bm{\Psi}_{>N}=  \hat{\bm{\Psi}}_{>N}\mathbf{P}$ where $P_i\eqdef \frac{\eunorm{\mathbf{R}_i}}{\sqrt{(\theta-1)N}},\ \forall i=1,...,N$ and $\mathbf{P}\eqdef \diag\{P_i\}_{i=1}^n\in \R^{N\times N}$ is a random matrix with $\Expect{}{\mathbf{P}^2}=\mathbf{I}_N$.  Hence, with high probability,
    \begin{equation*}
        s_{\min}(\bm{\Psi}_{>N})^2 
        \geq  
        s_{\min}(\hat{\bm{\Psi}}_{>N})^2 s_{\min}(\mathbf{P})^2
        =
        \min_i\{P_i^2\} s_{\min}(\hat{\bm{\Psi}}_{>N})^2
        \gtrsim
        \min_i\{P_i^2\}N.
    \end{equation*}
    Since $\frac{\lambda_{M}}{\lambda_{N}} \asymp \frac{M^{-1-a}}{N^{-1-a}} = \frac{(\theta N)^{-1-a}}{N^{-1-a}} = \theta^{-1-a} \asymp 1$, thus $s_{\min}(\mathbf{K})\gtrsim \min_i\{P_i^2\} \cdot N\lambda_N $.
    If, furthermore, Assumption \ref{assumption:sub_gaussian_design} holds, then each row vector $\mathbf{R}_i$ has independent sub-Gaussian entries, that is, write $\mathbf{R}_i^\top = \left(z_i^{(k)}\right)_{k=N+1}^M$ where each $z_i^{(k)}$ is an independent random variable with sub-Gaussian norm $\leq G$. Then by remark \ref{remark:centering}, $P_i^2=\frac{1}{(\theta-1)N}\sum_{k=N+1}^M (z_i^{(k)})^2$ is the average of some independent sub-exponential variables with sub-exponential norms $\leq G^2$. Hence by Lemma \ref{lemma:sub_exponential_deviation}, it holds that
    \begin{equation*}
        \Prob{\left| P_i^2 -1 \right|\geq\delta}
        \leq 2 \Exp{-C_5\min\left\{\frac{\delta^2}{G^4},\frac{\delta}{G^2}\right\}((\theta-1)N)}
    \end{equation*}
    for some absolute constant $C_5>0$. Write $c_1=-C_5\min\left\{\frac{1}{4G^4},\frac{1}{2G^2}\right\}(\theta-1)$. Then with a probability at least $1-2Ne^{-c_1N}$, it holds that
    \begin{equation*}
        P_i^2 \geq \half,\ \forall i=1,...,N. 
    \end{equation*}
\end{proof}
\begin{remark}[Dependence of features]
    We can see that the effect of the dependence of features is encrypted in the term $\min_i\{P_i^2\}$ in Lemma \ref{lemma:s_min:lb:poly:sub_gaussian}. Indeed, the smallest singular value of the kernel matrix with dependent features vanishes (see Figure \ref{fig:co_sine}) while that with independent features remains in the same magnitude of the theoretical lower bound (see Figure \ref{fig:sub_gaussian}). 
\end{remark}

Hence we can bound the condition number for polynomial eigen-decay:
\begin{lemma}[Condition number for polynomial eigen-decay] \label{lemma:condition_number:poly}
     Suppose $M,N\in\N$ such that $M\geq \eta N$ for some constant $\eta>1$ large enough. Let $\Psi\in\R^{M\times N}$ be a matrix with i.i.d. isotropic random vectors $\Psi_i$'s with (possibly dependent) sub-Gaussian entries as columns. Let $\La=\diag(\lambda_k)_{k=1}^M\in\R^{M\times M}$ be a diagonal matrix with $\lambda_k$'s decaying polynomially. Then there exists some constants $c_1,c_2>0$, such that for $N$ large enough, with probability $1-2Ne^{-c_1N}$, we have
    \begin{equation*}
        \frac{s_{\max}(\K)}{s_{\min}(\K)} \leq  c_2\frac{\lambda_1}{\lambda_N},
    \end{equation*}
\end{lemma}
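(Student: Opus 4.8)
The plan is to combine the two singular value estimates that have already been established in the preceding lemmata. By Lemma~\ref{lemma:s_max:ub:sub_gaussian}, under the sub-Gaussian design assumption the largest singular value of $\K=\Ps^\top\La\Ps$ satisfies, with probability at least $1-e^{-cN}$ (for an appropriate constant $c$ depending only on the sub-Gaussian norm $\kappa$), the two-sided bound $\tfrac12 N\opnorm{\La}\le s_{\max}(\K)\le \tfrac32 N\opnorm{\La}$, provided the normalized effective rank $\rho_0=\tfrac{\tr[\La]}{N\opnorm{\La}}$ is small enough. Here I would first check that polynomial decay $\lambda_k=\bigtheta{k}{k^{-1-a}}$ with $a>0$ indeed forces $\rho_0$ to be small once $\eta$ is chosen large: since $\sum_{k\le M}\lambda_k$ converges (because $a>0$), $\tr[\La]=O(\lambda_1)$, while $N\opnorm{\La}=N\lambda_1$, so $\rho_0=O(1/N)$, which is below the threshold $\tfrac{1}{80(40\kappa^2)^2}$ for $N$ large. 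Thus $s_{\max}(\K)\le \tfrac32 N\lambda_1$ with high probability. For the denominator I would invoke the second conclusion of Lemma~\ref{lemma:s_min:lb:poly:sub_gaussian}: under Assumption~\ref{assumption:sub_gaussian_design} and polynomial decay, $s_{\min}(\K)\ge \tfrac{c_2}{2}N\lambda_N$ with probability at least $1-2Ne^{-c_1N}$.

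The second step is simply to take the ratio. On the intersection of the two high-probability events — whose failure probability is at most $e^{-cN}+2Ne^{-c_1N}$, still of the form $2Ne^{-c_1'N}$ after absorbing constants and relabeling $c_1$ — we get
\begin{equation*}
    \frac{s_{\max}(\K)}{s_{\min}(\K)}
    \le
    \frac{\tfrac32 N\lambda_1}{\tfrac{c_2}{2}N\lambda_N}
    =
    \frac{3}{c_2}\cdot\frac{\lambda_1}{\lambda_N},
\end{equation*}
so the claimed bound holds with the constant in the statement taken to be (the new) $c_2=3/c_2^{\mathrm{old}}$, and with the failure probability $2Ne^{-c_1N}$ as stated. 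I would be careful that the constants $c_1,c_2$ in the lemma statement are being reused with possibly different values, consistent with the convention announced at the start of the appendix that lowercase $c$'s may change from line to line; the cleanest write-up names the constant from Lemma~\ref{lemma:s_min:lb:poly:sub_gaussian} and the numerical $3/2$ from Lemma~\ref{lemma:s_max:ub:sub_gaussian} explicitly and then defines $c_2$ in the conclusion in terms of them.

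There is essentially no hard obstacle here — this lemma is a bookkeeping corollary of the two preceding lemmata, and all the genuine work (the random matrix concentration for $s_{\min}$ via Theorem~\ref{theorem:vershynin:3} and the sub-exponential deviation bound for $P_i$, and the operator-norm concentration for $s_{\max}$ from \cite{koltchinskii2017concentration,zhivotovskiy2024dimension}) has already been done. The only point requiring a line of justification is the verification that polynomial decay with exponent $-1-a$, $a>0$, makes $\tr[\La]$ bounded (uniformly in $M$, hence in $N$), so that the effective-rank smallness hypothesis of Lemma~\ref{lemma:s_max:ub:sub_gaussian} is met for all large $N$; this is where the assumption $a>0$ (as opposed to merely $a\ge 0$) is used. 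One should also note that ``$N$ large enough'' in the statement is exactly what is needed to push $\rho_0$ below the threshold and to make $2Ne^{-c_1N}<1$ so the event is non-vacuous.
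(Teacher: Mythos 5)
Your proof is correct and follows the paper's own route exactly: the paper's proof of this lemma is literally the one-line instruction to combine Lemmata~\ref{lemma:s_max:ub:sub_gaussian} and~\ref{lemma:s_min:lb:poly:sub_gaussian} and absorb constants, and you carry out that combination faithfully, including the worthwhile check (which the paper leaves implicit) that polynomial decay with exponent $-1-a$, $a>0$, makes $\rho_0=O(1/N)$ so the effective-rank hypothesis of Lemma~\ref{lemma:s_max:ub:sub_gaussian} is met for $N$ large. One small observation: the lemma statement allows ``(possibly dependent)'' entries within each column, but both your argument and the paper's necessarily invoke the \emph{second} conclusion of Lemma~\ref{lemma:s_min:lb:poly:sub_gaussian}, which requires Assumption~\ref{assumption:sub_gaussian_design} (independent entries); under dependence one only gets $s_{\min}(\K)\geq c_2\min_i\{P_i^2\}N\lambda_N$ with no uniform lower bound on $\min_i\{P_i^2\}$, so the stated probability and constant would not follow --- you have implicitly (and correctly) read the lemma as assuming independence.
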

\begin{proof}
    Simply combine Lemmata \ref{lemma:s_max:ub:sub_gaussian} and \ref{lemma:s_min:lb:poly:sub_gaussian}. By possibly choosing a larger constant $c_1$, the claim holds with probability $1-2Ne^{-c_1N}$, for $N$ large enough. 
\end{proof}

For exponential eigen-decay, we use another argument:
\begin{lemma}[Condition number for exponential eigen-decay] \label{lemma:condition_number:exp}
Suppose $M=\eta N$ for some integer $\eta>1$ large enough. Let $\K=\Ps^\top\La\Ps\in\R^{N\times N}$ be a random matrix where $\Ps\in\R^{M\times N}$ is a Gaussian random matrix, and $\La=\diag(\lambda_k)_{k=1}^\infty$ is a diagonal matrix with $\lambda_k=\bigtheta{k}{e^{-ak}}$ for some constant $a>0$. Then there exist constants $c_1,c_2>0$ such that for $N$ large enough, with probability at least $1-\delta-2/N$,
\begin{equation*}
    c_1 \frac{\lambda_1}{\lambda_N} N 
    \leq
    \frac{s_{\max}(\K)}{s_{\min}(\K)}
    \leq
    \frac{c_2}{\delta^2} \frac{\lambda_1}{\lambda_N}N.
\end{equation*}
\end{lemma}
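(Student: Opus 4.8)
The plan is to combine the upper bound on $s_{\max}(\K)$ from Lemma \ref{lemma:s_max:ub:sub_gaussian} with two-sided control on $s_{\min}(\K)$ that is specific to the exponential-decay, Gaussian-matrix case. First I would dispose of $s_{\max}$: since $\lambda_k = \bigtheta{k}{e^{-ak}}$, the trace $\tr[\La] = \sum_k \lambda_k = \bigtheta{k}{\lambda_1}$ is comparable to $\opnorm{\La} = \lambda_1$, so the normalized effective rank $\rho_0 = \tr[\La]/(N\lambda_1) = \bigtheta{N}{N^{-1}}$ is below the threshold $\frac{1}{80(40\kappa^2)^2}$ once $N$ is large. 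Lemma \ref{lemma:s_max:ub:sub_gaussian} then gives $\half N\lambda_1 \le s_{\max}(\K) \le \frac32 N\lambda_1$ with probability at least $1 - e^{-cN}$, which absorbs into the claimed failure probability.

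The heart of the argument is the estimate $s_{\min}(\K) = \bigtheta{N}{\lambda_N}$ (up to a $\delta^{-2}$-type factor on one side), which is where the exponential decay and Gaussianity are genuinely used — the sketch after Theorem \ref{theorem:main:1} flags exactly this. For the \emph{upper} bound on $s_{\min}$, I would split $\Ps^\top\La\Ps = \Ps_{\le N}^\top\La_{\le N}\Ps_{\le N} + \Ps_{>N}^\top\La_{>N}\Ps_{>N}$, test against a vector in the kernel (or near-kernel) of the tall block $\Ps_{>N}$ applied after weighting — more precisely use that $\Ps_{\le N}$ is an $N\times N$ Gaussian matrix, so it is invertible a.s., and feed the unit vector $\mathbf v$ aligned with the smallest singular direction of $\La_{\le N}^{1/2}\Ps_{\le N}$; then $\mathbf v^\top\K\mathbf v \lesssim \lambda_N s_{\min}(\Ps_{\le N})^2 + \lambda_{N+1}\opnorm{\Ps_{>N}}^2$. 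Using anti-concentration of $s_{\min}$ of a square Gaussian matrix (the Tao–Vu / \cite{tao2012topics}-type bound: $\Prob{s_{\min}(\Ps_{\le N}) \le \delta/\sqrt N} \lesssim \delta$) and $\opnorm{\Ps_{>N}}^2 \lesssim M = \eta N$ with high probability, together with $\lambda_{N+1} = e^{-a}\lambda_N\cdot(1+o(1))$ so that $\lambda_{N+1}\cdot N \asymp \lambda_N \cdot N$ — wait, this needs care: the tail block contributes $\sum_{k>N}\lambda_k \asymp \lambda_{N+1} \asymp \lambda_N$, so $\Ps_{>N}^\top\La_{>N}\Ps_{>N} \approx (\sum_{k>N}\lambda_k)\I_N \asymp \lambda_N \I_N$ by Lemma \ref{lemma:s_max:ub:sub_gaussian} applied to $\La_{>N}$, giving $\mathbf v^\top\K\mathbf v \asymp \lambda_N(\delta^2/N \cdot \text{something} + 1)$. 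Combined with $s_{\max}\asymp N\lambda_1$ this yields the lower bound $\frac{s_{\max}}{s_{\min}} \gtrsim \frac{\lambda_1}{\lambda_N}N$. For the \emph{lower} bound on $s_{\min}$ one shows $\K \succeq \Ps_{>N}^\top\La_{>N}\Ps_{>N} \succeq c\,\lambda_N \I_N$ (from Lemma \ref{lemma:s_max:ub:sub_gaussian} again, noting $\tr[\La_{>N}] \asymp \lambda_N$ and $\opnorm{\La_{>N}} = \lambda_{N+1} \asymp \lambda_N$, so $s_{\min}$ of that block is $\asymp N\lambda_{N+1}/N$... hmm, $s_{\min}(\Ps_{>N}^\top\La_{>N}\Ps_{>N}) \gtrsim N\opnorm{\La_{>N}}$ — no, the concentration bounds $s_{\max}$ and $s_{\min}$ of $\frac1N\La^{1/2}\Ps\Ps^\top\La^{1/2}$ around $\La$, which is $\La_{>N}$ here with entries decaying from $\lambda_{N+1}$; so $s_{\min}$ of that block is not bounded below by $N\lambda_N$ in general). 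The correct route is to keep the full $\K$ and bound $s_{\min}(\K) \ge s_{\min}(\La^{1/2}\Ps^\top)^2$ from below using that the effective dimension beyond level $N$ is $O(1)$: the relevant lemma gives $s_{\min}(\K) \gtrsim \lambda_N$ directly, matching.

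Putting the pieces together: on the high-probability event (intersecting the $s_{\max}$ event, the $\opnorm{\Ps_{>N}}$ event, the $\La_{>N}$-concentration event — each failing with probability $O(e^{-cN})$ or $O(1/N)$ — and the Gaussian anti-concentration event, failing with probability $O(\delta)$), we get $c_1\frac{\lambda_1}{\lambda_N}N \le \frac{s_{\max}(\K)}{s_{\min}(\K)} \le \frac{c_2}{\delta^2}\frac{\lambda_1}{\lambda_N}N$, which is the claim. The main obstacle I anticipate is the upper bound on $s_{\min}(\K)$: one must exhibit a test direction making the quadratic form small despite both the low-rank and high-rank contributions, and the $\delta^{-2}$ factor signals that this rests on a somewhat delicate two-sided estimate of the least singular value of a square Gaussian matrix (small-ball / anti-concentration), which is the only place the precise Gaussianity of $\Ps$ — rather than mere sub-Gaussianity — is indispensable.
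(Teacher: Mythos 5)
Your plan for $s_{\max}(\K)$ is fine and matches the paper: Lemma \ref{lemma:s_max:ub:sub_gaussian} applied with $\rho_0 = \bigtheta{N}{N^{-1}}$ gives $s_{\max}(\K) \asymp N\lambda_1$ with probability $1 - 1/N$. Everything else, however, has a genuine gap. For the \emph{upper} bound on $s_{\min}(\K)$, feeding the unit vector $\mathbf v$ that minimizes $\|\La_{\leq N}^{1/2}\Ps_{\leq N}\mathbf v\|$ gives $\mathbf v^\top\K_{\leq N}\mathbf v = s_{\min}(\La_{\leq N}^{1/2}\Ps_{\leq N})^2$; your proposed inequality $\lesssim \lambda_N s_{\min}(\Ps_{\leq N})^2$ goes the wrong way, since $s_{\min}(\La_{\leq N}^{1/2}\Ps_{\leq N}) \geq \lambda_N^{1/2} s_{\min}(\Ps_{\leq N})$ — this yields a \emph{lower} bound on the quadratic form, not an upper bound. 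The paper instead picks $v_0 \in \mathbb{S}^{N-1}$ orthogonal to $\bm\psi_1,\dots,\bm\psi_{N-1}$ so the first $N-1$ terms vanish exactly, leaving $v_0^\top\K v_0 = \sum_{k\geq N}\lambda_k(\bm\psi_k^\top v_0)^2 \asymp \sum_{k\geq N}\lambda_k \asymp \lambda_N$ by rotational invariance plus $\chi^2$ concentration. Also, your "$\Ps_{>N}^\top\La_{>N}\Ps_{>N} \approx (\sum_{k>N}\lambda_k)\I_N$" is not what Lemma \ref{lemma:s_max:ub:sub_gaussian} gives (that lemma controls the $M\times M$ feature covariance, not the $N\times N$ Gram matrix), and in fact the approximation fails under exponential decay: the off-diagonal fluctuations of the Gram matrix are of order $\sqrt{N}\lambda_N$, dominating the diagonal mean $\sum_{k>N}\lambda_k \asymp \lambda_N$ — which is precisely why the effective rank is $O(1)$ and the implicit regularization breaks down here. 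You flag this yourself, but the self-correction does not lead to a usable bound.

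The bigger gap is the \emph{lower} bound on $s_{\min}(\K)$, which you dispatch with "the relevant lemma gives $s_{\min}(\K)\gtrsim\lambda_N$ directly". No such lemma gives that, and the correct bound is not $\gtrsim\lambda_N$: it is $s_{\min}(\K)\geq c\,\delta^2\lambda_N$ with probability $1-\delta$, and the $\delta^2$ is essential. The paper obtains this from the negative second moment identity (Lemma \ref{lemma:negative_second_moment}, Tao's Exercise 2.7.3): writing $\K \succeq \K_N = \mathbf M^\top\mathbf M$ with $\mathbf M = \La_{\leq N}^{1/2}\Ps_{\leq N}$, one gets $s_{\min}(\K_N) \geq \lambda_N / \sum_{k=1}^N \tfrac{\lambda_N}{\lambda_k}(\bm\psi_k^\top\mathbf N_k)^{-2}$, and then bounds each term via \emph{per-coordinate} Gaussian anti-concentration of $\bm\psi_k^\top\mathbf N_k$ (Lemma \ref{lemma:anti_concentration_gaussian}), weighted by the exponential factor $\lambda_N/\lambda_k = e^{-a(N-k)}$ so that the union bound and the sum are geometric. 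This is where the $\delta^{-2}$ in the condition number actually enters; you have attributed it to a small-ball estimate on $s_{\min}(\Ps_{\leq N})$ in the upper-bound step, but Theorem \ref{theorem:rudelson:2} gives a lower bound on $s_{\min}(\Ps_{\leq N})$ (useless for showing $s_{\min}(\K)$ small), and the $\delta$-dependence in the paper lives entirely on the lower-bound side of $s_{\min}(\K)$. In short: your high-level intuition (Gaussianity and anti-concentration are the only genuinely new ingredient, and they yield a two-sided $s_{\min}$ estimate) is right, but the concrete mechanism — the negative second moment identity combined with row-by-row anti-concentration weighted by the eigenvalue ratios — is missing, and without it neither the upper nor the lower $s_{\min}$ bound is actually established.
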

\begin{proof}
    By Lemma \ref{lemma:s_max:ub:sub_gaussian}, for $N$ large enough, with probability at least $1-\frac{1}{N}$, we have
    \begin{equation} \label{line:s_max}
        \half N\lambda_1 \leq s_{\max}(\K) \leq \frac{3}{2} N\lambda_1.
    \end{equation}
    It remains to show that $s_{\min}$ is bounded above and below at the magnitude of $\lambda_N$.

    For the upper bound, 
    fix the first $N-1$ vectors $\psi_1,..,\psi_{N-1}$ and pick $v_0\in\mathbb{S}^{N-1}$ orthogonal to them. Then 
\begin{equation*} 
    s_{\min}(\mathbf{K})
    = \inf_{v\in\mathbb{S}^{N-1}} \sum_{k=1}^M \lambda_k (\bm{\psi}_k^\top v)^2  
    \leq \sum_{k=1}^M \lambda_k (\bm{\psi}_k^\top v_0)^2 
    \leq \sum_{k=N}^M \lambda_k (\bm{\psi}_k^\top v_0)^2.
\end{equation*}
Since the Gaussian is rotational invariant, we have $(\bm{\psi}_k^\top v_0)^2\sim \chi^2(1)$.
By Lemma \ref{lemma:sub_exponential_deviation}, hence we have
\begin{equation*}
    \Prob{\left|(\bm{\psi}_k^\top v_0)^2-1\right|\geq t} \leq 2e^{-t^2/8}.
\end{equation*}
Set $t=\sqrt{8\log\frac{2(\theta-1)N}{\delta}}$ and By the union bound, we have
\begin{equation*}
    \Prob{\left|(\bm{\psi}_k^\top v_0)^2-1\right|\leq t : N \leq k\leq M}
    \geq 1 - \sum_{k=N}^M \frac{\delta}{(\theta-1)N}
    \geq 1- \delta.
\end{equation*}
Thus with probability of at least $1-\delta$, we have
\begin{equation}\label{line:s_min:ub}
    s_{\min}(\mathbf{K})
    \leq \sum_{k=N}^M \lambda_k (1+t)
    = \left(1+\sqrt{8\log\frac{2(\theta-1)N}{\delta}}\right) \sum_{k=N}^M\lambda_k
\end{equation}

Since $\lambda_k=\bigtheta{k}{e^{-ak}}$, there exists some constant $c>0$ such that
\begin{equation*} 
   \sum_{k=N}^M\lambda_k \leq c \lambda_N;
\end{equation*}
By setting $\delta=\frac{1}{N}$, the factor $\left(1+\sqrt{8\log\frac{2(\theta-1)N}{\delta}}\right)$ becomes constant in line (\ref{line:s_min:ub}) and hence with probability at least $1-\frac{1}{N}$,
\begin{equation}\label{line:s_min:ub:exp}
    s_{\min}(\K) 
    \leq
    c\lambda_N
\end{equation}
for some constant $c>0$.

For the lower bound,
    let $\mathbf{K}_N=\sum_{k=1}^N \lambda_k\bm{\psi}_k\bm{\psi}_k^\top \prec \mathbf{K}$. Let $\bm{\Lambda}_N=\diag(\lambda_k)_{k=1}^N\in\R^{N\times N}$ and $\bm{\Psi}_N=(\bm{\psi}_k)_{k=1}^N\in\R^{N\times N}$ and set $\mathbf{M}=\bm{\Lambda}_N^{1/2}\bm{\Psi}_N$ which is invertible almost surely. Note that $\mathbf{K}_N=\mathbf{M}^\top\mathbf{M}$.
    Let $\mathbf{R}_1,...,\mathbf{R}_n$ be the rows of $\mathbf{M}$ and let $\mathbf{C}_1,...,\mathbf{C}_n$ be the columns of $\mathbf{M}^{-1}$. For each $1\leq i\leq n$, let $\mathbf{N}_i$ be a unit normal vector orthogonal to the subspace spanned by all rows $\mathbf{R}_1,...,\mathbf{R}_n$ except $\mathbf{R}_i$. 

By Lemma \ref{lemma:anti_concentration_gaussian}, for any $k\leq N$ and $t\in(0,\infty)$,
\begin{align*}
    \Prob{\frac{\lambda_N}{\lambda_k}(\bm{\psi}_k^\top\mathbf{N}_k)^{-2}\geq t^{-1}e^{-\frac{a}{2}(N-k)}}
    &= \Prob{|\bm{\psi}_k^\top\mathbf{N}_k| \leq \sqrt{\frac{\lambda_N}{\lambda_k}te^{\frac{a}{2}(N-k)}}}\\
    &\leq \frac{2}{\sqrt{2\pi}} \cdot \sqrt{\frac{\lambda_N}{\lambda_k}te^{\frac{a}{2}(N-k)}}\\
    &\leq \frac{2}{\sqrt{2\pi}} \cdot \sqrt{\frac{\overline{r}e^{-aN}}{\underline{r}e^{-ak}}}e^{\frac{a}{4}(N-k)} \sqrt{t}\\
    &= \frac{2}{\sqrt{2\pi}} \cdot \sqrt{\frac{\overline{r}}{\underline{r}}}e^{-\frac{a}{4}(N-k)} \sqrt{t}.
\end{align*}

for all $k=1,...,N$. By the union bound, we have
\begin{align*}
    \Prob{\underbrace{\frac{\lambda_N}{\lambda_k}(\bm{\psi}_k^\top\mathbf{N}_k)^{-2}\leq t^{-1}e^{-\frac{a}{2}(N-k)}:\ \forall k=1,...,N}_{E}}
    &\geq 1 - \sum_{k=1}^N \frac{2}{\sqrt{2\pi}} \cdot \sqrt{\frac{\overline{r}}{\underline{r}}}e^{-\frac{a}{4}(N-k)} \sqrt{t}\\
    &\geq 1- \frac{2}{\sqrt{2\pi}} \cdot \sqrt{\frac{\overline{r}}{\underline{r}}}(1-e^{-a/4})^{-1}\sqrt{t} .
\end{align*}
When the event $E$ happens, we have 
\begin{equation*}
    \sum_{k=1}^N \frac{\lambda_N}{\lambda_k}(\bm{\psi}_k^\top\mathbf{N}_k)^{-2}
    \leq \sum_{k=1}^N t^{-1}e^{-\frac{a}{2}(N-k)} 
    \leq (1-e^{-a/2})^{-1}t^{-1},
\end{equation*}
by Lemma \ref{lemma:s_min:lb}, with probability at least $1-\frac{2}{\sqrt{2\pi}} \cdot \sqrt{\frac{\overline{r}}{\underline{r}}}(1-e^{-a/4})^{-1}\sqrt{t}$, we have
\begin{equation*}
    s_{\min}(\mathbf{K}) \geq \lambda_N(1-e^{-a/2}) t
\end{equation*}
for any $t>0$. Set $\delta=\frac{2}{\sqrt{2\pi}} \cdot \sqrt{\frac{\overline{r}}{\underline{r}}}(1-e^{-a/4})^{-1}\sqrt{t}$ and we have: with probability at least $1-\delta$, 
\begin{equation} \label{line:s_min:lb:exp}
    s_{\min}(\K) \geq c\delta^2\lambda_N
\end{equation}
for some constant $c>0$.

Combining Eq. (\ref{line:s_max}), (\ref{line:s_min:ub:exp}) and (\ref{line:s_min:lb:exp}), we obtain the claim.
\end{proof}

\subsection{Test Error}

In this paper, we use the bias-variance decomposition to analyse the test error $\mathcal{R}$, which is common in literature: \cite{bartlett2020benign, tsigler2023benign, bach2023high, li2023kernel, li2023on}.

With abuse of notation, we write $f^*(\mathbf{\mathbf{X}})\in\R^N$ to be the evaluation of $f^*$ on the training set $\mathbf{X}=(x_i)_{i=1}^N$.

\begin{definition}[Bias-Variance Decomposition of test error] \label{definition:test_error}
    Given the test error $\mathcal{R}\eqdef \Expect{x,\epsilon}{(f^{\star}(x)-\hat{f}(x))^2}$ be the test error. Define the \textit{bias}
    \begin{equation*}
        \mathcal{B}\eqdef \Expect{x}{(f^{\star}(x)-\mathbf{K}_{x}^\top\mathbf{K}[f^{\star}(\mathbf{X})])^2},
    \end{equation*}
    which measures how accurately the KRR approximates the true target function $f^{\star}$.  
    The \textit{variance}, defined as the difference
    \begin{equation*}
        \mathcal{V}=\mathcal{R}-\mathcal{B}
    \end{equation*} 
    quantifies the impact which overfitting to noise has on the test error.
\end{definition}

Together with Theorem \ref{theorem:tsigler} from \cite{tsigler2023benign}, we obtain the first statement of Theorem \ref{theorem:main:1}:
\begin{theorem} \label{theorem:test_error:poly}
    Suppose the interpolation assumption (Assumption \ref{assumption:interpolation}) and sub-Gaussian design assumption (Assumption \ref{assumption:sub_gaussian_design}) hold. Then there exists some constants $c_1,c_2,c_3,c_4,c_5,c_6$ such that with probability at least $1-c_1e^{-N/c_1}-2Ne^{-c_2N}$, we have
    \begin{align*}
        \mathcal{B} &\leq c_3\|\theta_{>\lfloor N/c_1 \rfloor}^*\|_{\bm{\Lambda}_{>\lfloor N/c_1 \rfloor}}^2+
        c_4\|\bm{\theta}_{\leq \lfloor N/c_1 \rfloor}^*\|^2\lambda_{\lfloor N/c_1 \rfloor};\\
        \mathcal{V} &\leq c_5 + \frac{c_6}{N}.
    \end{align*}
    where the target function $f^*$ is given by the inner product $\langle\th^*,\La^{1/2}\cdot\rangle$, where $\th^*\in\R^M$ is a deterministic vector with $\norm{\th^*}{\La}<\infty$. In particular, there exists a constant $C>0$ independent to $N$ such that
    \begin{equation*}
        \mathcal{R}
        =
        \mathcal{B} + \mathcal{V}
        \leq C.
    \end{equation*}
\end{theorem}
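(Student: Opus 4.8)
\textbf{Proof plan for Theorem \ref{theorem:test_error:poly}.} The plan is to invoke the tight non-asymptotic bias–variance bounds of \cite{tsigler2023benign} (quoted here as Theorem \ref{theorem:tsigler}), whose statements are phrased in terms of the condition number of the ``tail'' kernel block $\K_{>l}$, and then feed in the condition-number control established in Lemma \ref{lemma:condition_number:poly}. Concretely, I would first fix the cutoff level $l = \lfloor N/c_1\rfloor$ for a suitable absolute constant $c_1$ (chosen so that $M - l \ge \eta' l$ for the $\eta'$ demanded by Lemma \ref{lemma:condition_number:poly}, which is possible because $M \ge \eta N$ and we are free to take $\eta$ large). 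The key structural observation is that the polynomial decay $\lambda_k = \bigtheta{k}{k^{-1-a}}$ is \emph{self-similar under truncation}: the shifted sequence $(\lambda_{l+k})_{k\ge 1}$ again decays polynomially with the same exponent, so Lemma \ref{lemma:condition_number:poly} applies verbatim to the submatrix $\K_{>l} = \Ps_{>l}^\top \La_{>l}\Ps_{>l}$, giving $s_{\max}(\K_{>l})/s_{\min}(\K_{>l}) \le c_2 \lambda_{l+1}/\lambda_{M}$, which is $O(1)$ since $\lambda_{l+1}/\lambda_M \asymp (M/l)^{1+a} \asymp \eta^{1+a}$ is a constant. This is precisely the hypothesis under which the bounds of \cite{tsigler2023benign} become effective without their stylized ``bounded condition number'' assumption.

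Next I would record the auxiliary spectral quantities appearing in \cite{tsigler2023benign}: the effective ranks $r_l(\La) = (\sum_{k>l}\lambda_k)/\lambda_{l+1}$ and $R_l(\La) = (\sum_{k>l}\lambda_k)^2/(\sum_{k>l}\lambda_k^2)$. Under polynomial decay one computes directly $\sum_{k>l}\lambda_k \asymp l^{-a}$ and $\sum_{k>l}\lambda_k^2 \asymp l^{-1-2a}$, hence $r_l(\La) \asymp l \asymp N$ and $R_l(\La) \asymp l \asymp N$; in particular $r_l(\La) \gtrsim N$ and $R_l(\La) \gtrsim N$, which are exactly the conditions needed for the \cite{tsigler2023benign} variance bound $\mathcal{V} \lesssim \sigma^2(l/N + N/R_l(\La))$ to collapse to $\mathcal{V} \le c_5 + c_6/N$. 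For the bias, Theorem \ref{theorem:tsigler} yields $\mathcal{B} \lesssim \|\th^*_{>l}\|^2_{\La_{>l}} + \lambda_{l+1}\|\th^*_{\le l}\|^2 \cdot \big(\tfrac{N}{r_l(\La)}\big)^{-1}$ or a comparable expression; since $N/r_l(\La) = \Theta(1)$ the multiplicative factor is a constant, giving exactly the displayed bias bound with $l = \lfloor N/c_1\rfloor$. Assembling, $\mathcal{R} = \mathcal{B} + \mathcal{V} \le c_3\|\th^*_{>l}\|^2_{\La_{>l}} + c_4 \lambda_l \|\th^*_{\le l}\|^2 + c_5 + c_6/N$, and since $\|\th^*\|_\La < \infty$ the bias terms are bounded uniformly in $N$ (indeed $\|\th^*_{>l}\|^2_{\La_{>l}} \to 0$ and $\lambda_l\|\th^*_{\le l}\|^2 \le \lambda_l \lambda_l^{-1}\|\th^*\|^2_\La$... more carefully, $\lambda_l \|\th^*_{\le l}\|^2 \le \|\th^*_{\le l}\|^2_\La \le \|\th^*\|^2_\La$ using that $\lambda_l \le \lambda_k$ for $k \le l$ is false — rather one uses $\lambda_l\|\th^*_{\le l}\|^2 \le \sum_{k\le l}\lambda_k (\theta^*_k)^2 \cdot (\lambda_l/\lambda_{\min,k\le l})$, but since eigenvalues are decreasing $\lambda_l$ is the smallest among the first $l$, so $\lambda_l\|\th^*_{\le l}\|^2 \le \|\th^*_{\le l}\|^2_\La \le \|\th^*\|^2_\La$), hence $\mathcal{R} \le C$ for a constant $C$ independent of $N$.

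The probability bookkeeping is straightforward: the condition-number event from Lemma \ref{lemma:condition_number:poly} (applied to the tail block) holds with probability $1 - 2Ne^{-c_2 N}$, the $s_{\max}$ control from Lemma \ref{lemma:s_max:ub:sub_gaussian} holds with probability $1 - c_1 e^{-N/c_1}$, and the $\mathcal{O}(N)$ union bounds inside \cite{tsigler2023benign} are absorbed into these same exponential terms by enlarging constants; intersecting yields the stated $1 - c_1e^{-N/c_1} - 2Ne^{-c_2 N}$.

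\textbf{Main obstacle.} The delicate point is not any single estimate but ensuring that the hypotheses of Theorem \ref{theorem:tsigler} are met \emph{with the sub-Gaussian design rather than the Gaussian design} — in particular, \cite{tsigler2023benign} require both an upper \emph{and} a lower bound on $s_{\min}(\K_{>l})$ at the scale $N\lambda_{l+1}$, and obtaining the lower bound in the sub-Gaussian setting is exactly what Lemma \ref{lemma:s_min:lb:poly:sub_gaussian} delivers (via the independence of features entering through the $P_i^2 \ge 1/2$ concentration). I would therefore be careful to cite that lemma for the tail block and to verify that the constant $\eta$ in Assumption \ref{assumption:interpolation} was chosen large enough that, after removing the top $l = \lfloor N/c_1\rfloor$ coordinates, the remaining feature dimension $M - l$ still exceeds the threshold multiple of $l$ required by both Lemma \ref{lemma:s_max:ub:sub_gaussian} (via the effective-rank condition $\rho_0 \le$ const) and Lemma \ref{lemma:condition_number:poly}. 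Everything else is a matter of substituting the polynomial-decay asymptotics for $r_l$, $R_l$, and $\lambda_{l+1}/\lambda_M$ into the black-box bounds.
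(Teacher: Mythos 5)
Your proposal follows the same route as the paper's proof: invoke the bias/variance bounds of \cite{tsigler2023benign} (Theorem \ref{theorem:tsigler}), set the cutoff $l = \lfloor N/c_1\rfloor$, feed in the condition-number control for the tail block $\K_l = \Ps_{>l}^\top\La_{>l}\Ps_{>l}$ from Lemma \ref{lemma:condition_number:poly} (using self-similarity of the shifted polynomial tail), compute the polynomial-decay asymptotics, and combine. Your ``main obstacle'' paragraph correctly identifies the crux: the lower bound $s_{\min}(\K_l)\gtrsim N\lambda_{l+N}$ in the sub-Gaussian setting, which is exactly what Lemma \ref{lemma:s_min:lb:poly:sub_gaussian} supplies via the $P_i^2 \ge 1/2$ concentration.

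Two small corrections worth flagging. First, the condition number bound you should obtain for the tail block is $s_{\max}(\K_l)/s_{\min}(\K_l) \lesssim \lambda_{l+1}/\lambda_{l+N}$, not $\lambda_{l+1}/\lambda_M$: the relevant quantity is the ratio between the top eigenvalue of the tail and the $N$-th shifted eigenvalue (because $\K_l$ is still $N\times N$), and this is $\asymp (1+N/l)^a = O(1)$ regardless of how large $M$ is; your $\lambda_{l+1}/\lambda_M$ would become unbounded if $M\gg \eta N$, which Assumption \ref{assumption:interpolation} permits. Second, you phrase the Tsigler et al.\ bias and variance bounds in terms of the effective ranks $r_l,R_l$, whereas the version quoted in the paper (Theorem \ref{theorem:tsigler}) is expressed directly in terms of $s_1(\K_l^{-1})$ and $s_N(\K_l^{-1})$; the two phrasings are equivalent once you have $s_{\max}(\K_l)\asymp N\lambda_{l+1}$ and $s_{\min}(\K_l)\asymp N\lambda_{l+N}$, but substituting directly into the paper's stated form avoids having to re-derive the equivalence. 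Finally, your parenthetical on bounding $\lambda_l\|\th^*_{\le l}\|^2$ by $\|\th^*\|^2_\La$ (using monotonicity $\lambda_l\le\lambda_k$ for $k\le l$) is the correct argument for the ``$\mathcal R\le C$'' conclusion, even if its exposition wandered a bit.
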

\begin{proof}
    By Theorem \ref{theorem:tsigler}, 
    there exists a constant $c>0$, such that for any $l\leq N/c$, with probability of at least $1-ce^{-N/c}$, the bias and the variance is bounded by:
    \begin{align*}
        \begin{split}
            \mathcal{B}/c &\leq \|\bm{\theta}_{>l}^*\|_{\bm{\Lambda}_{>l}}^2  \left(1+\frac{s_1(\K_l^{-1})^2}{s_N(\K_l^{-1})^2}+N\lambda_{l+1}s_1(\K_l^{-1})\right)\\
            &\quad+ \|\bm{\theta}_{\leq l}^*\|_{\bm{\Lambda}_{\leq l}^{-1}}^2\left(\frac{1}{N^2s_N(\K_l^{-1})^2}+\frac{\lambda_{l+1}}{N}\frac{s_1(\K_l^{-1})}{s_N(\K_l^{-1})^2}\right)
        \end{split} \\
        \mathcal{V}/c &\leq \frac{s_1(\K_l^{-1})^2}{s_N(\K_l^{-1})^2}\frac{l}{N} + Ns_1(\K_l^{-1})^2\sum_{k>l}\lambda_k^2,
    \end{align*}
    where $\K_{l}\eqdef\Ps_{>l}^\top\La_{>l}\Ps_{>l}$,   $\bm{\theta}^*=\bm{\theta}_{\leq l}^*\dirsum\bm{\theta}_{>l}^*$ is the splitting of the target function coefficient.
    Take $l=\lfloor N/c \rfloor$. Since $\lambda=0$, so
    $s_N(\K_l^{-1})=s_1(\K_l)^{-1}$ and $s_1(\K_l^{-1})=s_N(\K_l)^{-1}$. Hence $\frac{s_1(\K_l^{-1})^2}{s_N(\K_l^{-1})^2}=\frac{s_{\max}(\K_l)^2}{s_{\min}(\K_l)^2}$. Since $\K_l$ is just another kernel matrix with rank $(M-l)$, by modifying Lemma \ref{lemma:condition_number:poly} w.r.t. the right-shifted polynomial decay, with probability $1-2Ne^{-c_2N}$, we have
    \begin{equation*}
        \frac{s_{\max}(\K_l)}{s_{\min}(\K_l)}
        \lesssim \frac{\lambda_{l+1}}{\lambda_{l+N}} 
        \lesssim \frac{l^{-a}}{(l+N)^{-a}} 
        = (1+N/l)^a
        \leq (1+N/(N/c))^a
        = (1+c)^a,
    \end{equation*}
    and
    \begin{align*}
        N\lambda_{l+1}s_1(\K_l^{-1})
        = N\lambda_{l+1}s_N(\K_l)^{-1}
        \lesssim N \frac{\lambda_{l+1}}{N\lambda_{l+N}} 
        =\frac{\lambda_{l+1}}{\lambda_{l+N}},
    \end{align*}
    then we can bound the bias term using Theorem \ref{theorem:tsigler}:
    \begin{align*}
        \mathcal{B}/c 
        &\leq c_1\|\theta_{>l}^*\|_{\bm{\Lambda}_{>l}}^2
        + \|\bm{\theta}_{\leq l}^*\|_{\bm{\Lambda}_{\leq l}^{-1}}^2
        \left(\frac{c_2N^2\lambda_{l+1}^2}{N^2}+\frac{\lambda_{l+1}}{N}\frac{c_3N^2\lambda_{l+1}^2}{N\lambda_{l+N}}\right)\\
        &\leq c_1\|\theta_{>l}^*\|_{\bm{\Lambda}_{>l}}^2+
        c_2\|\bm{\theta}_{\leq l}^*\|_{\bm{\Lambda}_{\leq l}^{-1}}^2
        \lambda_l^2\\
        &\leq c_1\|\theta_{>l}^*\|_{\bm{\Lambda}_{>l}}^2+
        c_2\|\bm{\theta}_{\leq l}^*\|^2\lambda_l.
    \end{align*}
    Similarly, we can write the variance term into:
    \begin{align*}
        \mathcal{V}/c &\leq c_3\frac{l}{N} + \frac{N}{N^2\lambda_{l+N}^2}\sum_{k>l}\lambda_k^2\\
        &\leq c_3\frac{l}{N} + \frac{c_4}{N^2\lambda_{l+N}^2}\int_l^\infty t^{-2a}dt\\
        &= c_3\frac{l}{N} + \frac{c_4}{N^2\lambda_{l+N}^2}l^{-2a+1}\\
        &= c_3\frac{l}{N} + \frac{c_4}{N^2(l+N)^{-2a}}l^{-2a+1}\\
        &= c_3 + \frac{c_4}{N},
    \end{align*}
    since $l=\lfloor N/c \rfloor$.
\end{proof}
\begin{corollary}[tempered overfitting] \label{corollary:tempered}
    There exists a constants $C\in(0,1)$ independent to $N$ such that, with the same probability as in Theorem \ref{theorem:test_error:poly}, we have
    \begin{equation*}
        C\leq \mathcal{R}  \leq C^{-1}.
    \end{equation*}
\end{corollary}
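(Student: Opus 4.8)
The plan is to derive both the upper and lower bounds on the test error $\mathcal{R}$ from the bias–variance decomposition $\mathcal{R} = \mathcal{B} + \mathcal{V}$ combined with Theorem \ref{theorem:test_error:poly}. For the upper bound $\mathcal{R}\le C^{-1}$, this is essentially immediate: Theorem \ref{theorem:test_error:poly} already gives, on the stated high-probability event, that $\mathcal{B} \le c_3\|\theta^*_{>\lfloor N/c_1\rfloor}\|^2_{\bm{\Lambda}_{>\lfloor N/c_1\rfloor}} + c_4\|\theta^*_{\le\lfloor N/c_1\rfloor}\|^2\lambda_{\lfloor N/c_1\rfloor}$ and $\mathcal{V}\le c_5 + c_6/N$. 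The first bias term is bounded by $c_3\|\theta^*\|^2_{\bm{\Lambda}}<\infty$ uniformly in $N$ (this is the RKHS-norm finiteness hypothesis $\|\theta^*\|_{\bm{\Lambda}}<\infty$), the second bias term is bounded by $c_4\|\theta^*\|^2\lambda_1$ since $\lambda_{\lfloor N/c_1\rfloor}\le\lambda_1$, and the variance is bounded by $c_5 + c_6$. Summing gives a finite constant independent of $N$, so we may take $C^{-1}$ to be that constant (enlarging it if necessary so $C<1$).

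For the lower bound $\mathcal{R}\ge C$, I would argue via the variance term alone, using $\mathcal{R}\ge\mathcal{V}$ (which holds since $\mathcal{B}\ge 0$). The key input is the matching lower bound on the variance from \cite{tsigler2023benign}, which under the same polynomial-decay and condition-number control asserts that $\mathcal{V}\gtrsim \frac{l}{N}$ up to the condition-number factor $\frac{s_{\min}(\K_l)^2}{s_{\max}(\K_l)^2}$ of the lower block — but by Lemma \ref{lemma:condition_number:poly} applied to $\K_l$, this condition number is $\Theta(1)$, so $\mathcal{V}\gtrsim l/N = \lfloor N/c_1\rfloor / N \asymp 1/c_1$, a positive constant independent of $N$. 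This yields $\mathcal{R}\ge\mathcal{V}\ge C'$ for some absolute $C'>0$ on the high-probability event. Taking $C=\min(C', (C^{-1})^{-1})$ and adjusting constants gives $C\le\mathcal{R}\le C^{-1}$ with $C\in(0,1)$.

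The main obstacle is making the lower bound on the variance rigorous: Theorem \ref{theorem:test_error:poly} as stated only provides an \emph{upper} bound on $\mathcal{V}$, so I would need to invoke the matching lower bound half of Theorem \ref{theorem:tsigler} (the lower-bound statement from \cite{tsigler2023benign}), verify that its hypotheses — in particular the two-sided control on the singular values $s_1(\K_l^{-1})$ and $s_N(\K_l^{-1})$ — are met here via the two-sided condition-number bound of Lemma \ref{lemma:condition_number:poly} (not just the one-sided bound used in the proof of Theorem \ref{theorem:test_error:poly}), and check that the noise variance $\sigma^2>0$ enters the lower bound as a fixed positive constant. The probabilistic bookkeeping (intersecting the events from the upper and lower bound arguments, absorbing the extra factors into $c_1$) is routine once this is in place.
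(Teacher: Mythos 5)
Your proposal is correct and follows essentially the same route as the paper: upper bound $\mathcal{R}\le C^{-1}$ directly from Theorem \ref{theorem:test_error:poly}, and lower bound $\mathcal{R}\ge\mathcal{V}\ge C$ via the matching variance lower bound of \cite{tsigler2023benign}. Two small refinements worth noting. First, the lower-bound tool the paper actually invokes is Theorem \ref{theorem:tsigler:2} (Lemma 7 and Theorem 10 in \cite{tsigler2023benign}), whose hypothesis is the effective-rank condition $\rho_l\eqdef\frac{1}{N\lambda_{l+1}}\sum_{k>l}\lambda_k\in(A,B)$, not two-sided control of the singular values of $\K_l$; for polynomial decay one verifies $\rho_l\asymp 1$ by a short computation and then reads off $\mathcal{V}\gtrsim l/N + N\sum_{k>l}\lambda_k^2/(\sum_{k>l}\lambda_k)^2 = \Omega(1)$, which is cleaner than routing through Lemma \ref{lemma:condition_number:poly}. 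Second, your bound $\|\bm{\theta}^*_{\le l}\|^2\lambda_l\le\|\bm{\theta}^*\|^2\lambda_1$ tacitly assumes $\bm{\theta}^*\in\ell^2$; the assumption in force is only $\|\bm{\theta}^*\|_{\La}<\infty$, under which the correct (and sharper) estimate is $\lambda_l\|\bm{\theta}^*_{\le l}\|^2 = \lambda_l\sum_{k\le l}(\theta^*_k)^2 \le \sum_{k\le l}\lambda_k(\theta^*_k)^2 \le \|\bm{\theta}^*\|_{\La}^2$, using $\lambda_l\le\lambda_k$ for $k\le l$. With these adjustments the argument is exactly the paper's.
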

\begin{proof}
    It is a direct consequence of Theorems \ref{theorem:tsigler} and \ref{theorem:tsigler:2} about the non-asymptotic upper bound of the variance $\mathcal{V}$ and its matching lower bound. 
    In more details, we compute the (normalized) effective rank:
    \begin{equation*}
        \rho_l \eqdef \frac{1}{N\lambda_{l+1}}\sum_{k=l+1}^M\lambda_k
        \asymp \frac{N\lambda_{l+1}}{N\lambda_{l+1}}
        \asymp 1
    \end{equation*}
    for all $l=1,...,M-1$.
    Hence the condition (i) in Theorem \ref{theorem:tsigler:2} would hold for some $l=N/c_1$ where $c_1>1$. Then we apply Theorem \ref{theorem:tsigler:2} for polynomial decay:
    there exists constants $C,C'$, with a probability at least $1-Ce^{-N/C}$, we have 
    \begin{align*}
        \mathcal{V} 
        \geq C' \left( \frac{l}{N} + \frac{N\sum_{k>l}\lambda_k^2}{\left(\sum_{k>l}\lambda_k\right)^2} \right)
        = \Omega \left( \frac{l}{N} + \frac{N\int_l^\infty t^{-2a}dt}{\left(\int_l^\infty t^{-2a}\right)^2} \right)
        = \Omega(1).
    \end{align*}
    Hence, combining the result with Theorem \ref{theorem:test_error:poly}, there exists constants $c_1,c_2>0$ independent to $N$ such that $c_1\leq \mathcal{R} \leq c_2$ with the probability stated in Theorem \ref{theorem:test_error:poly}. Take $C=\max\{(c_1^{-1}+1)^{-1},(c_2+1)^{-1}\}$ to conclude the claim.
\end{proof}

\begin{theorem}[catastrophic overfitting] \label{theorem:catastrophic}
    Suppose interpolation assumption (Assumption \ref{assumption:interpolation}) and sub-Gaussian design assumption (Assumption \ref{assumption:gaussian_design}) hold. Then there exists some constants $c_1,c_2>0$ independent to $N$ such that with probability at least $1-c_1e^{-N/c_1}$, we have
    \begin{equation*}
        \mathcal{R} \geq c_2 N.
    \end{equation*}
\end{theorem}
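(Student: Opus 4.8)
The plan is to mirror the structure of the polynomial case (Theorem \ref{theorem:test_error:poly} and Corollary \ref{corollary:tempered}), but now exploit the \emph{opposite} behaviour of the condition number established in Lemma \ref{lemma:condition_number:exp}: for exponential eigen-decay the ratio $s_{\max}(\K_l)/s_{\min}(\K_l)$ grows like $N$ rather than staying bounded. Since $\mathcal{R}=\mathcal{B}+\mathcal{V}\geq\mathcal{V}$, it suffices to lower-bound the variance term $\mathcal{V}$. First I would invoke the lower-bound half of the Tsigler--Bartlett machinery (the result cited as Theorem \ref{theorem:tsigler:2} in the appendix), which under the sub-Gaussian/Gaussian design gives, with probability at least $1-c_1e^{-N/c_1}$, a bound of the form
\begin{equation*}
    \mathcal{V} \;\gtrsim\; \frac{s_1(\K_l^{-1})^2}{s_N(\K_l^{-1})^2}\cdot\frac{l}{N} \;=\; \frac{s_{\max}(\K_l)^2}{s_{\min}(\K_l)^2}\cdot\frac{l}{N},
\end{equation*}
valid once the effective-rank condition (condition (i) there) is met for the chosen truncation level $l$.

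The second step is to check that the effective-rank condition holds and to pick $l$ appropriately. For $\lambda_k=\bigtheta{k}{e^{-ak}}$ one has $\rho_l=\frac{1}{N\lambda_{l+1}}\sum_{k>l}\lambda_k=\Theta(1/N)$, which is small; this is exactly the regime where the lower bound on $\mathcal{V}$ is driven by the condition number rather than being damped by implicit regularization. I would take $l=\lfloor N/c_1\rfloor$ as before so that $l/N\asymp 1$, and then apply Lemma \ref{lemma:condition_number:exp} to the shifted kernel matrix $\K_l=\Ps_{>l}^\top\La_{>l}\Ps_{>l}$, whose spectrum $(\lambda_{l+k})_k$ still decays exponentially with the same rate $a$. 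That lemma gives, with probability at least $1-\delta-2/N$,
\begin{equation*}
    \frac{s_{\max}(\K_l)}{s_{\min}(\K_l)} \;\geq\; c\,\frac{\lambda_{l+1}}{\lambda_{l+N}}\,N \;\asymp\; c\,e^{aN}\,N \;\gtrsim\; N,
\end{equation*}
using $\lambda_{l+1}/\lambda_{l+N}\asymp e^{a(N-1)}\geq 1$. Plugging this into the variance lower bound yields $\mathcal{V}\gtrsim N^2\cdot\frac{l}{N}\gtrsim N^2$, which is far stronger than the claimed $\mathcal{R}\geq c_2 N$; even the crude bound $s_{\max}/s_{\min}\gtrsim N$ alone already delivers $\mathcal{V}\gtrsim N$. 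Finally I would combine probabilities via a union bound (the $1-c_1e^{-N/c_1}$ event from Theorem \ref{theorem:tsigler:2}, and the high-probability event from Lemma \ref{lemma:condition_number:exp} with, say, $\delta$ a small constant or $\delta=1/N$) and absorb constants to obtain $\mathcal{R}\geq\mathcal{V}\geq c_2 N$ with probability at least $1-c_1e^{-N/c_1}$.

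The main obstacle I anticipate is purely bookkeeping rather than conceptual: verifying that the lower-bound theorem from \cite{tsigler2023benign} applies verbatim to the \emph{ridgeless} ($\lambda=0$) shifted matrix $\K_l$ with its effective-rank hypothesis satisfied, and tracking exactly which term in their variance lower bound survives when $\rho_l=\Theta(1/N)$ is small --- one must make sure the relevant term is the $\frac{s_{\max}^2}{s_{\min}^2}\frac{l}{N}$ piece and not a competing term that could be smaller. A secondary subtlety is that Lemma \ref{lemma:condition_number:exp} as stated requires $\Ps$ Gaussian and $M=\eta N$, so the theorem statement's reference to Assumption \ref{assumption:gaussian_design} (rather than the sub-Gaussian one) is essential here; I would make sure the truncated block $\Ps_{>l}$ still satisfies these hypotheses (it does, being a sub-block of a Gaussian matrix with $M-l\geq(\eta-1)N$ rows, which is $\Theta(N)$). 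Everything else is the same constant-chasing already carried out in the proof of Theorem \ref{theorem:test_error:poly}.
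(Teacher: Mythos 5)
There is a genuine gap: the variance lower bound you invoke does not have the form you claim. You write that Theorem \ref{theorem:tsigler:2} gives
$\mathcal{V}\gtrsim \frac{s_{\max}(\K_l)^2}{s_{\min}(\K_l)^2}\cdot\frac{l}{N}$,
but that condition-number-weighted expression appears only in the \emph{upper} bound (Theorem \ref{theorem:tsigler}), not in the lower bound. The lower bound from \cite{tsigler2023benign} stated as Theorem \ref{theorem:tsigler:2} is deterministic in the eigenvalues:
\begin{equation*}
    \mathcal{V}\;\geq\; C'\left( \frac{l}{N} + \frac{N\sum_{k>l}\lambda_k^2}{\left(\sum_{k>l}\lambda_k\right)^2} \right),
\end{equation*}
with no dependence on the random singular values of $\K_l$. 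Because of this, your subsequent detour through Lemma \ref{lemma:condition_number:exp} (the $\Theta(N)$ lower bound on the condition number for exponential decay) contributes nothing: there is no cited result that converts a condition-number lower bound into a variance lower bound, and the shifted-block/probability-union-bound bookkeeping you describe is moot. You are effectively trying to ``reverse'' an upper bound, which is not logically valid.

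The correct route, and the one the paper takes, is to plug the exponential decay directly into the deterministic expression: for $\lambda_k=\Theta(e^{-ak})$ one has $\sum_{k>l}\lambda_k^2=\Theta(e^{-2al})$ and $(\sum_{k>l}\lambda_k)^2=\Theta(e^{-2al})$, so
\begin{equation*}
    \frac{N\sum_{k>l}\lambda_k^2}{\left(\sum_{k>l}\lambda_k\right)^2}\;=\;\Theta(N),
\end{equation*}
and hence $\mathcal{R}\geq\mathcal{V}\geq c_2 N$ immediately, after checking (as the paper does) that $\rho_l\asymp 1/N$ so that one of the two admissibility conditions in Theorem \ref{theorem:tsigler:2} can be met for some $l<N/C$. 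Conceptually this is the important point: the catastrophic lower bound is driven by the \emph{concentration of the tail spectrum on few eigenvalues} (reflected in the ratio of $\ell^2$ to $\ell^1$ tail sums), not by the random condition number of the truncated kernel matrix. Your worry about Gaussianity and $M=\eta N$ in Lemma \ref{lemma:condition_number:exp} is thus a red herring for this theorem; that lemma is used to establish the condition-number statement in Theorem \ref{theorem:main:1}(2), not the test-error lower bound.
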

\begin{proof}
    We argue with Theorem \ref{theorem:tsigler:2} again.
    We compute the (normalized) effective rank:
    \begin{equation*}
        \rho_l \eqdef \frac{1}{N\lambda_{l+1}}\sum_{k=l+1}^M\lambda_k
        \asymp \frac{\lambda_{l+1}}{N\lambda_{l+1}}
        \asymp \frac{1}{N}
    \end{equation*}
    for all $l=1,...,M-1$.
    Hence the condition (i) or (ii) in Theorem \ref{theorem:tsigler:2} would hold for some $l<N$. Then we apply Theorem \ref{theorem:tsigler:2} for exponential decay:
    there exists a constant $C,C'$, with a probability at least $1-Ce^{-N/C}$, we have 
    \begin{align*}
        \mathcal{R}
        \geq 
        \mathcal{V} 
        \geq C' \left( \frac{l}{N} + \frac{N\sum_{k>l}\lambda_k^2}{\left(\sum_{k>l}\lambda_k\right)^2} \right)
        =\Omega\left(\frac{N e^{-2al}}{e^{-2al}}\right) 
        = \Omega(N).
    \end{align*}
    
\end{proof}

It may be of independent interest for the trivial bound of $s_{\min}$.
\begin{lemma}[Trivial bound of the smallest singular value] \label{lemma:s_min:lb:trivial}
    Suppose the entries of the feature vector $\ps\in\R^p$ are i.i.d. draws of a sub-Gaussian variable. (In particular, Assumption \ref{assumption:sub_gaussian_design} holds.) Then there exists constants $c_1,c_2>0$ such that, for any $\epsilon>0$, with probability at least $1-c_1\epsilon-e^{-c_2N}$, it holds that
    \begin{equation*}
        s_{\min}(\K) \geq \frac{\epsilon^2\lambda_N}{N}.
    \end{equation*}
\end{lemma}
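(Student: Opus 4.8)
The goal is a dimension-free-in-$\lambda_N$ but polynomially-weak-in-$N$ lower bound on $s_{\min}(\K)$ that holds with probability close to $1$ using only i.i.d.\ sub-Gaussian entries, without invoking the effective-rank/random-matrix machinery of Lemma~\ref{lemma:s_min:lb:poly:sub_gaussian}. The plan is to proceed exactly as in the first half of the proof of Lemma~\ref{lemma:s_min:lb:poly:sub_gaussian}, but to keep the $\min_i\{P_i^2\}$ factor and bound it crudely from below rather than concentrating it. Concretely, write $s_{\min}(\K)=s_{\min}(\La^{1/2}\Ps^\top)^2$ and peel off the block of coordinates from $N$ to $\eta N$: by Weyl's inequality (as in Lemma~\ref{lemma:s_min:lb:poly:sub_gaussian}),
\begin{equation*}
    s_{\min}(\K)\;\geq\;\lambda_N\cdot\frac{\lambda_{\eta N}}{\lambda_N}\,s_{\min}(\Ps_{N:\eta N})^2\;\gtrsim\;\lambda_N\,s_{\min}(\Ps_{>N})^2,
\end{equation*}
using $\lambda_{\eta N}/\lambda_N\asymp 1$ for polynomial decay. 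It then remains to show $s_{\min}(\Ps_{>N})^2\gtrsim \epsilon^2/N$ with the stated probability.

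\textbf{Lower-bounding the smallest singular value of the rectangular block.} Here I would \emph{not} use the sharp bound $s_{\min}(\hat\Ps_{>N})\gtrsim\sqrt N$ from Theorem~\ref{theorem:vershynin:3}, since the point of the lemma is to get a cruder but more robust estimate. Instead, factor $\Ps_{>N}=\hat\Ps_{>N}\mathbf P$ with $\mathbf P=\diag\{P_i\}$, $P_i=\eunorm{\mathbf R_i}/\sqrt{(\eta-1)N}$, exactly as in Lemma~\ref{lemma:s_min:lb:poly:sub_gaussian}. Then $s_{\min}(\Ps_{>N})^2\geq s_{\min}(\hat\Ps_{>N})^2\min_i\{P_i^2\}$. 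For the normalized matrix $\hat\Ps_{>N}$ the smallest singular value is bounded below with exponentially high probability (this is the $1-e^{-c_2 N}$ event; it uses only that the rows are independent isotropic sub-Gaussian of norm $\sqrt{(\eta-1)N}$, i.e.\ Theorem~\ref{theorem:vershynin:3} with a generous choice of $\eta$). For $\min_i P_i^2$, rather than the sub-exponential concentration used before, I would observe that $P_i^2$ is a nonnegative random variable with $\Expect{}{P_i^2}=1$, so for each fixed $i$, $\Prob{P_i^2\leq u}$ can be controlled for small $u$; taking a union over $i=1,\dots,N$ and choosing the threshold $u\asymp \epsilon^2/N^2$ (or similar) gives $\Prob{\min_i P_i^2\leq u}\leq c_1\epsilon$. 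Combining the two events yields $s_{\min}(\Ps_{>N})^2\gtrsim N\cdot(\epsilon^2/N^2)=\epsilon^2/N$ on an event of probability $\geq 1-c_1\epsilon-e^{-c_2N}$, and multiplying by $\lambda_N$ finishes the proof.

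\textbf{Main obstacle.} The delicate point is obtaining a quantitative small-ball estimate of the form $\Prob{P_i^2\leq u}\leq (\text{const})\cdot N\sqrt{u}$ (or an analogous polynomial rate in $u$) that, after the union bound over $N$ coordinates and the choice $u\asymp\epsilon^2/N^2$, collapses to the clean $c_1\epsilon$ in the statement. One cannot appeal to anti-concentration of a single coordinate here because $P_i^2$ is a normalized sum of $(\eta-1)N$ squared sub-Gaussian entries whose lower tail behaviour depends on the law of the entries (e.g.\ an atom at $0$ would be fatal); so the argument must either assume the entries have a bounded density near the relevant scale, or — more likely, given the ``trivial'' label of the lemma and the $i.i.d.$ hypothesis — use a second-moment/Paley--Zygmund-type bound: since $\Expect{}{P_i^2}=1$ and (by sub-Gaussianity) $\Expect{}{P_i^4}=O(1)$, Paley--Zygmund gives $\Prob{P_i^2\geq \tfrac12}\geq c>0$, which is \emph{not} enough after a union bound over $N$ indices. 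The honest route is therefore a one-sided Chernoff/large-deviation bound on $\sum_{k}(z_i^{(k)})^2$ being abnormally small, i.e.\ $\Prob{P_i^2\leq u}\leq e^{-c(\eta-1)N}$ for $u$ a fixed small constant, and then one must reconcile this with the $\epsilon$-dependence — suggesting the $\epsilon$ in the statement actually parametrizes a further relaxation (e.g.\ $u=\epsilon^2/N^2$ replacing a fixed constant allows a much weaker, polynomial-in-$u$ tail bound that holds for \emph{any} sub-Gaussian law via a simple truncation argument). Pinning down exactly which small-ball inequality is invoked, and with which dependence on $\epsilon$, is the step I expect to require the most care; everything else is a verbatim repeat of the Weyl-inequality reduction already carried out in Lemma~\ref{lemma:s_min:lb:poly:sub_gaussian}.
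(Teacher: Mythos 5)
Your proposal is on the wrong track, and the obstacle you identify is a symptom of having picked the wrong reduction. You peel off the \emph{tail} block $\Ps_{>N}$ (or $\Ps_{N:\eta N}$) exactly as in Lemma~\ref{lemma:s_min:lb:poly:sub_gaussian}, which immediately forces two unwanted assumptions: (a) you need $\lambda_{\eta N}/\lambda_N\asymp 1$, i.e.\ polynomial decay, whereas the lemma makes no decay assumption at all; and (b) you need to control $\min_i P_i^2$, which leads you into the small-ball swamp you describe. Worse, if your route actually went through, the lower tail of $P_i^2$ (a normalized sum of $(\eta-1)N$ i.i.d.\ squared sub-Gaussians) is exponentially small, not polynomially small, so a union bound would give $\min_i P_i^2 \geq c$ with probability $1-O(Ne^{-cN})$ and hence $s_{\min}(\K)\gtrsim N\lambda_N$ — which is exactly Lemma~\ref{lemma:s_min:lb:poly:sub_gaussian} again and has no $\epsilon$ in it. The $1-c_1\epsilon$ probability in the statement is a signal that a qualitatively different, small-ball-type input is being used.

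The paper instead truncates to the \emph{head}: $\K=\Ps^\top\La\Ps\succeq\Ps_{\leq N}^\top\La_{\leq N}\Ps_{\leq N}\succeq\lambda_N\,\Ps_{\leq N}^\top\Ps_{\leq N}$, so
\begin{equation*}
    s_{\min}(\K)\;\geq\;\lambda_N\,s_{\min}(\Ps_{\leq N})^2,
\end{equation*}
with \emph{no} condition on the $\lambda_k$'s beyond monotonicity, and no $P_i$'s to worry about. Now $\Ps_{\leq N}$ is an $N\times N$ square matrix with i.i.d.\ sub-Gaussian entries, and Theorem~\ref{theorem:rudelson:2} (Rudelson--Vershynin small-ball invertibility for square matrices) gives precisely $\Prob{s_{\min}(\Ps_{\leq N})\leq \delta(\sqrt N-\sqrt{N-1})}\leq C_6\delta+C_7^N$. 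Since $\sqrt N-\sqrt{N-1}\asymp N^{-1/2}$, taking $\delta\asymp\epsilon$ yields $s_{\min}(\Ps_{\leq N})^2\gtrsim\epsilon^2/N$ with probability at least $1-c_1\epsilon-e^{-c_2N}$, and multiplying by $\lambda_N$ finishes. The $\epsilon$-dependence is not something to be manufactured by hand from $\min_i P_i^2$; it comes directly from the polynomial small-ball rate of a \emph{square} sub-Gaussian random matrix, which is the one ingredient your proposal does not invoke.
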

\begin{proof}
Observe that
\begin{equation*}
    s_{\min}(\mathbf{K}) 
    \geq 
    s_{\min}(\bm{\Psi}_{\leq N}^\top\bm{\Lambda}_{\leq N}\bm{\Psi}_{\leq N})
    \geq 
    \lambda_N s_{\min}(\bm{\Psi}_{\leq N}^\top\bm{\Psi}_{\leq N})
    = 
    \lambda_N s_{\min}(\bm{\Psi}_{\leq N})^2
    \geq
    \lambda_N \frac{\epsilon^2}{N}
    \geq \frac{\epsilon^2\lambda_N}{N}.
\end{equation*}
where second last inequality holds with probability at least $1-c_1\epsilon-e^{-c_2N}$ by Theorem \ref{theorem:rudelson:2}. 
\end{proof}

\subsection{Finite rank approximation of kernels}

In this subsection, we discuss approximating the overfitting behavior of kernel ridge regression using truncated kernels. This serves as a justification for the finite rank assumption in both Gaussian and sub-Gaussian design assumptions.

Since we focus on overfitting behaviour, where the variance term $\V$ dominates over the bias term $\B$ (see \cite{cui2021generalization,li2023kernel} for details), we show that the variance term from the infinite rank kernel is close to that from its high-rank truncation.

\begin{proposition}[Finite rank kernel] \label{proposition:finite_rank}
    Let $K$ be a PDS kernel with Mercer decomposition 
    \begin{equation*}
        K(x,x')
        =
        \sum_{k=1}^\infty \lambda_k \psi_k(x) \psi_k(x')
    \end{equation*}
    with strictly positive eigenvalues $\lambda_1\geq\lambda_2\geq...$ and corresponding eigenfunction $\psi_k$'s. For any integer $M>N$, define its truncation:
    \begin{equation*}
        K^{(M)}(x,x')
        =
        \sum_{k=1}^M \lambda_k \psi_k(x) \psi_k(x').
    \end{equation*}
    Denote by $\V$ the variance corresponding to the kernel $K$, and by $\V(M)$ that corresponding to $K^{(M)}$.
    Fix a random sample $\{x_i\}_{i=1}^N$ of size $N$. Then there exists an integer $M_0>N$, such that 
    \begin{equation*}
        |\V-\V(M)| \leq  3\V(M)+\frac{\sigma^2}{N}
    \end{equation*}
    whenever $M>M_0$.
\end{proposition}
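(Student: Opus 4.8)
The plan is to express both $\V$ and $\V(M)$ through an explicit closed form in terms of the design matrix and the eigenvalue tail, and then control the difference by a perturbation argument as $M\to\infty$. Recall that for ridgeless interpolation the variance term has the representation $\V = \sigma^2 \, \mathbf{y}_\varepsilon$-independent quantity of the form $\V = \frac{\sigma^2}{N}\,\mathrm{tr}\!\big(\K^{-1}\,\mathbf{G}\,\K^{-1}\big)$ for a suitable Gram-type matrix $\mathbf{G}$ built from the feature evaluations on the (fixed) sample, where $\K = \bm\Psi^\top\bm\Lambda\bm\Psi$ and the analogous quantities $\K^{(M)}$, $\mathbf{G}^{(M)}$ are obtained by truncating the sum at level $M$. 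First I would write $\K = \K^{(M)} + \mathbf{E}_M$ where $\mathbf{E}_M = \sum_{k>M}\lambda_k\,\psi_k(\mathbf X)\psi_k(\mathbf X)^\top$ is a fixed (given the sample) PSD matrix, and observe that since $\sum_k\lambda_k<\infty$ (Mercer, with the sample fixed) and the eigenfunction evaluations on the finite sample are finite, we have $\opnorm{\mathbf{E}_M}\to 0$ as $M\to\infty$; in particular, for $M$ large enough $\mathbf{E}_M \preceq \tfrac12\, s_{\min}(\K^{(M)})\,\I_N$, so $\K^{(M)}$ and $\K$ are spectrally comparable up to a factor $2$.

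Next I would turn this spectral comparability into a comparison of the variance functionals. The cleanest route is to note that $\V(M)$ is a monotone/continuous function of the matrix pair $(\K^{(M)},\mathbf G^{(M)})$: since $\tfrac12\K^{(M)} \preceq \K \preceq \tfrac32\K^{(M)}$ (for $M>M_0$) and $\mathbf G^{(M)} \preceq \mathbf G \preceq \mathbf G^{(M)} + \text{(tail)}$, with the tail again going to $0$, a direct sandwich gives $\V \le 4\,\V(M) + (\text{tail term})$ and $\V(M) \le 4\,\V + (\text{tail term})$. Choosing $M_0$ large enough that the tail term is at most $\sigma^2/N$ (which is possible because the tail is $O(\sum_{k>M}\lambda_k)\cdot(\text{sample-dependent constant})\to 0$), and also large enough to absorb the residual into $3\V(M)$, yields $|\V - \V(M)| \le 3\V(M) + \sigma^2/N$. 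Concretely I would split $|\V-\V(M)| \le |\V - c\,\V(M)| + |c-1|\,\V(M)$ for the constant $c$ coming from the spectral sandwich, bound the first piece by the tail plus a multiple of $\V(M)$, and collect constants so that the total multiple of $\V(M)$ is $3$.

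The main obstacle is making the "tail term" genuinely small uniformly enough: $\mathbf E_M$ and the $\mathbf G$-tail depend on the eigenfunction values $\psi_k(x_i)$ at the fixed sample points, and a priori $\sup_k|\psi_k(x_i)|$ need not be bounded, so one cannot simply pull $\sum_{k>M}\lambda_k$ out with a universal constant. The fix is that the sample $\{x_i\}_{i=1}^N$ is \emph{fixed} (not random) for this proposition, so $R_M \eqdef \sum_{k>M}\lambda_k\,\max_{i,j}|\psi_k(x_i)\psi_k(x_j)|$ is a deterministic quantity; I would argue $R_M\to 0$ as $M\to\infty$ using that $\K=\sum_k\lambda_k\psi_k(\mathbf X)\psi_k(\mathbf X)^\top$ converges (its entries $K(x_i,x_j)$ are finite by Mercer), hence the tail of this convergent matrix series vanishes in operator norm, which is exactly $\opnorm{\mathbf E_M}\to0$; the analogous statement for the $\mathbf G$-series handles the numerator. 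Once $\opnorm{\mathbf E_M}$ and the $\mathbf G$-tail are both below a threshold depending only on $N,\sigma,s_{\min}(\K^{(M_0)})$, the rest is the elementary matrix-inequality bookkeeping sketched above. A secondary point to be careful about is that $s_{\min}(\K^{(M)})$ should be bounded below uniformly in $M>M_0$; this follows since $\K^{(M)}\succeq \K^{(M_0)}$ for $M\ge M_0$ (each added term is PSD), so $s_{\min}(\K^{(M)})\ge s_{\min}(\K^{(M_0)})>0$ almost surely.
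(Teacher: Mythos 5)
Your overall strategy is the same as the paper's: split the empirical Gram matrices into a rank-$M$ truncation plus a tail, show the tail vanishes in operator norm deterministically for the fixed sample (the paper does this through the Frobenius norm, noting $\opnorm{\De_1}\le\norm{\De_1}_F\to0$ by entrywise convergence of the Mercer series), and then compare the two variance functionals by matrix perturbation. The genuine gap is in the ``direct sandwich'' step. You want to pass from $\K^{(M)}\preceq\K\preceq\tfrac32\K^{(M)}$ to $\tr\bigl(\mathbf G^{(M)}\K^{-2}\bigr)\le c\,\tr\bigl(\mathbf G^{(M)}(\K^{(M)})^{-2}\bigr)$, but that would require $\K^{-2}\preceq c\,(\K^{(M)})^{-2}$ in the Loewner order, and $X\mapsto X^{-2}$ is not operator antitone (equivalently, $t\mapsto t^2$ is not operator monotone). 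A concrete $2\times2$ counterexample: $A=\bigl(\begin{smallmatrix}2&1\\1&1\end{smallmatrix}\bigr)$, $B=A+\bigl(\begin{smallmatrix}1&0\\0&0\end{smallmatrix}\bigr)$ satisfy $A\preceq B$, yet $A^{-2}-B^{-2}$ has negative determinant, so $B^{-2}\not\preceq A^{-2}$. The paper avoids this by the resolvent identity $(\K^{(M)}+\De_1)^{-1}=(\K^{(M)})^{-1}\bigl(\I-\De_1\K^{-1}\bigr)$, which yields $\K^{-2}=(\K^{(M)})^{-1}\bigl(\I-\De_1\K^{-1}\bigr)\bigl(\I-\K^{-1}\De_1\bigr)(\K^{(M)})^{-1}$, and then bounds the middle factor in operator norm by $\bigl(1+\opnorm{\De_1}\opnorm{(\K^{(M)})^{-1}}\bigr)^2\to1$ while the outer factors produce exactly the $\V(M)$ trace. (The paper's display actually writes $\A_1^{-2}(\I-\De_1\K^{-1})^2$, a minor non-commutativity slip, but the symmetrized form is the correct one and leads to the same bound.) You would need to replace your sandwich with this resolvent computation to close the argument.

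Two secondary remarks. First, your variance formula carries an extraneous $1/N$; the paper's Lemma \ref{lemma:variance:expression} gives $\V=\sigma^2\,\tr\bigl[(\bm\Psi^\top\bm\Lambda^2\bm\Psi)(\bm\Psi^\top\bm\Lambda\bm\Psi)^{-2}\bigr]$ with no $1/N$. Second, you do not need the reverse inequality $\V(M)\le4\V+\text{tail}$: since $\V\ge0$, the direction $\V(M)-\V\le\V(M)\le3\V(M)+\sigma^2/N$ is automatic, so only the one-sided bound $\V\le4\V(M)+\text{tail}$ is required before taking absolute values. Your concern about $\sup_k|\psi_k(x_i)|$ is also unnecessary at the level needed here; it suffices that $\opnorm{\De_1}\le\tr(\De_1)=\sum_{k>M}\lambda_k\sum_i\psi_k(x_i)^2\to0$, which follows because the diagonal entries $K(x_i,x_i)=\sum_k\lambda_k\psi_k(x_i)^2$ are finite, without introducing the quantity $R_M$.
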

\begin{proof}
    Consider the variance expression in Lemma \ref{lemma:variance:expression}, we have:
    \begin{equation*}
        \V = \sigma^2 \tr\left[(\Ps^\top\La^2\Ps)(\Ps^\top\La\Ps)^{-2}\right], \quad
        \V(M) = \sigma^2 \tr\left[(\Ps_{\leq M}^\top\La_{\leq M}^2\Ps_{\leq M})(\Ps_{\leq M}^\top\La_{\leq M}\Ps_{\leq M})^{-2}\right].
    \end{equation*}
    To simplify the notation, let 
    \begin{align*}
        \A_1 &= \Ps_{\leq M}^\top\La_{\leq M}\Ps_{\leq M},\quad 
        \De_1 = \Ps_{> M}^\top\La_{> M}\Ps_{> M}\\
        \A_2 &= \Ps_{\leq M}^\top\La_{\leq M}^2\Ps_{\leq M},\quad
        \De_2 = \Ps_{> M}^\top\La_{> M}^2\Ps_{> M}.    
    \end{align*}
    Note that the matrices $\A_1,\A_2,\De_1,\De_2$ depends on $M$ and are PDS a.s.
    Write the singular values as functions of $M$: 
    \begin{align*}
        p_1(M)&=s_{\max}(\De_1),\quad q_1(M)=s_{\min}(\A_1)\\
        p_2(M)&=s_{\max}(\De_2),\quad q_2(M)=s_{\min}(\A_2).
    \end{align*}
    
    By Lemma \ref{lemma:horn}, both $p_1$ and $p_2$ are decreasing functions in $M$, and both $q_1$ and $q_2$ are increasing functions in $M$. Moreover, by the entry-wise convergence of $K^{(M)}(x,x')\to K(x,x')$, we have a.s.:
    \footnote{In more details, we have $0\leq\lim_{M\to\infty}s_{\max}(\De_1)\leq \lim_{M\to\infty}\norm{\De_1}{F}\to0$; by Lemma \ref{lemma:horn} and Weyl's interlacing Theorem, $s_{\min}(\K)\geq\lim_{M\to\infty}s_{\min}(\A_1)\geq \lim_{M\to\infty}(s_{\min}(\K)-s_{\max}(\De_1))\geq s_{\min}(\K)-\lim_{M\to\infty}(s_{\max}(\De_1))\to s_{\min}(\K)$. Argue similarly for $p_2$ and $q_2$. }
    \begin{align*}
        \lim_{M\to\infty}p_1(M)=0&,\quad \lim_{M\to\infty}q_1(M)=Q_1,\\
        \lim_{M\to\infty}p_2(M)=0&,\quad \lim_{M\to\infty}q_2(M)=Q_2.
    \end{align*}
    where $Q_1\eqdef s_{\min}(\Ps^\top\La\Ps) >0$ and $Q_2\eqdef s_{\min}(\Ps^\top\La^2\Ps) >0$ a.s.
    
    Fix an $\epsilon>0$, then there exists an integer $M_0>N$ such that
    \begin{align*}
        p_1(M)&\leq \epsilon,\quad q_1(M)\geq \frac{Q_1}{2},\\
        p_2(M)&\leq \epsilon,\quad q_2(M)\geq \frac{Q_2}{2},
    \end{align*}
    whenever $M>M_0$.
    In such case, we have:
    \begin{align*}
        (\Ps^\top\La\Ps)^{-2}
        &=
        \left((\A_1+\De_1)^{-1}\right)^2\\
        &=
        \left((\A_1^{-1}-\A_1^{-1}\De_1(\A_1+\De_1)^{-1}\right)^2\\
        &=
        \A_1^{-2} \left(\I-\De_1(\A_1+\De_1)^{-1}\right)^2,
    \end{align*}
    where we use the identity of matrix inverse difference: $(\M_1+\M_2)^{-1} = \M_1^{-1} - \M_1^{-1}\M_2(\M_1+\M_2)^{-1} $.
    Then,  
    \begin{align*}
        \opnorm{\left(\I-\De_1(\A_1+\De_1)^{-1}\right)^2}
        &\leq
        \opnorm{\I-\De_1(\A_1+\De_1)^{-1}}^2\\
        &\leq
        \left(1+\opnorm{\De_1(\A_1+\De_1)^{-1}}\right)^2\\
        &\leq
        \left(1+\opnorm{\De_1}\opnorm{(\A_1+\De_1)^{-1}}\right)^2\\
        &\leq 
        \left(1+\opnorm{\De_1}\opnorm{\A_1^{-1}}\right)^2\\
        &= 
        \left(1+s_{\max}(\De_1)s_{\min}(\A_1)^{-1}\right)^2\\
        &\leq
        \left(1+\epsilon\cdot\frac{2}{Q_1}\right)^2.
    \end{align*}
    Hence
    \begin{align*}
        \V/\sigma^2
        &=
        \tr\left[(\Ps^\top\La^2\Ps)(\Ps^\top\La\Ps)^{-2}\right]\\
        &=
        \tr\left[(\A_2+\De_2)(\Ps^\top\La\Ps)^{-2}\right]\\
        &=
        \tr\left[\A_2(\Ps^\top\La\Ps)^{-2}\right]
        +
        \tr\left[\De_2(\Ps^\top\La\Ps)^{-2}\right]\\
        &=
        \tr\left[\A_2 \A_1^{-2} \left(\I-\De_1(\A_1+\De_1)^{-1}\right)^2\right]
        +
        \tr\left[\De_2(\Ps^\top\La\Ps)^{-2}\right]\\
        &\leq
        \opnorm{(\I-\De_1(\A_1+\De_1)^{-1})^2} \tr\left[\A_2 \A_1^{-2} \right]
        +
        \opnorm{\De_2} \opnorm{(\Ps^\top\La\Ps)^{-2}} \tr\left[\I_N\right]\\
        &\leq
        \left(1+\epsilon\cdot\frac{2}{Q_1}\right)^2\V(M)/\sigma^2
        +
        p_2(M)s_{\min}(\Ps^\top\La\Ps)^{-2}N\\
        &\leq
        \left(1+\epsilon\cdot\frac{2}{Q_1}\right)^2\V(M)/\sigma^2
        +
        \epsilon Q_1^{-2} N,
    \end{align*}
    where we use that fact that $\tr[\M_1\M_2]\leq \opnorm{\M_1}\tr[\M_2]$ for any PDS matrix $\M_1$ in the first inequality. 
    Now set $\epsilon = \min \{ \frac{Q_1}{2}, \frac{1}{Q_1^2N^2} \}$, we have
    \begin{equation*}
        |\V-\V(M)|
        \leq
        3\V(M)+\frac{\sigma^2}{N}.
    \end{equation*}
\end{proof}
In Proposition \ref{proposition:finite_rank}, we can see that for each fixed sample of size $N$, we can find a truncation level $M$ large enough so that the decay of the variance $\V$ is of the same magnitude of $\V(M)$. The extra term does not play an important role in the case of analysing tempered overfitting where $\V=\bigtheta{}{\sigma^2}$ or catastrophic overfitting where $\V\to\infty$.
\newpage
\section{Technical Lemmata} \label{section:technical_lemmata}
This section contains known results from previous work that we use for our main theorems.

\begin{proposition}[Proposition 2.5 in \cite{rudelson2008littlewood}] \label{proposition:rudelson}
    Let $G$ be a $n\times k$ matrix whose entries are independent centered random variables with variances at least 1 and fourth moments bounded by $B$. Let $K\geq1$. Then there exist $C_1,C_2>0$ and $\delta_0\in(0,1)$ that depend only on $B$ and $K$ such that if $k<\delta_0 n$ then
    \begin{equation*}
        \Prob{\inf_{v\in\sph^{k-1}} \eunorm{Gv}\leq C_1 n^{1/2}, \opnorm{G}\leq Kn^{1/2} } \leq e^{-C_2 n}.
    \end{equation*}
    If the random variable is sub-Gaussian, the condition on the operator norm $\opnorm{G}\leq Kn^{1/2} $ can be dropped. 
\end{proposition}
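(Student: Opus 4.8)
The plan is to prove this by the $\varepsilon$-net argument of Litvak--Pajor--Rudelson--Tomczak-Jaegermann underlying \cite{rudelson2008littlewood}: the two ingredients are a small-ball estimate with exponential decay for a single fixed direction, and a union bound over a net of the sphere followed by an approximation step that consumes the operator-norm bound. Write $G=[R_1^\top;\dots;R_n^\top]$ with independent rows $R_i\in\R^k$, so that $\eunorm{Gv}^2=\sum_{i=1}^n\langle R_i,v\rangle^2$ for every $v\in\sph^{k-1}$.

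\emph{Step 1 (single-vector small-ball).} Fix $v\in\sph^{k-1}$ and set $\xi_i\eqdef\langle R_i,v\rangle$. Since the entries of $R_i$ are independent and centered with variance at least $1$, we have $\Expect{}{\xi_i^2}=\sum_j v_j^2\Expect{}{g_{ij}^2}\ge\sum_j v_j^2=1$; expanding the fourth moment of a sum of independent centered variables and using $\Expect{}{g_{ij}^2}\le\sqrt B$ (which follows from $\Expect{}{g_{ij}^4}\le B$) gives $\Expect{}{\xi_i^4}\le 4B$. The Paley--Zygmund inequality applied to $\xi_i^2$ then yields constants $\theta,p_0>0$ depending only on $B$ with $\Prob{\xi_i^2\ge\theta}\ge p_0$. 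As the $\xi_i$ are independent, a Chernoff bound for $\sum_i\mathbf{1}\{\xi_i^2\ge\theta\}$ shows that $\sum_i\xi_i^2\ge\half\theta p_0 n$ except on an event of probability at most $e^{-cn}$, hence $\Prob{\eunorm{Gv}\le c_1\sqrt n}\le e^{-cn}$ with $c_1\eqdef\sqrt{\theta p_0/2}$ and $c>0$ depending only on $B$.

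\emph{Step 2 (net, union bound, approximation).} Fix an $\varepsilon$-net $\mathcal N$ of $\sph^{k-1}$ with $|\mathcal N|\le(3/\varepsilon)^k$ and $\varepsilon\eqdef c_1/(4K)$. A union bound over $\mathcal N$ using Step 1 gives $\Prob{\exists u\in\mathcal N:\ \eunorm{Gu}\le c_1\sqrt n}\le(3/\varepsilon)^k e^{-cn}\le e^{-cn/2}$ as soon as $k<\delta_0 n$, where $\delta_0\eqdef c/(2\log(3/\varepsilon))$ depends only on $B$ and $K$. On the complementary event, and on $\{\opnorm{G}\le K\sqrt n\}$, for an arbitrary $v\in\sph^{k-1}$ pick $u\in\mathcal N$ with $\eunorm{v-u}\le\varepsilon$; then $\eunorm{Gv}\ge\eunorm{Gu}-\opnorm{G}\,\eunorm{v-u}\ge c_1\sqrt n-K\sqrt n\cdot\tfrac{c_1}{4K}\ge\tfrac{c_1}{2}\sqrt n$. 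Taking $C_1\eqdef c_1/2$ and $C_2\eqdef c/2$ proves the stated bound.

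\emph{Step 3 and main obstacle.} In the sub-Gaussian case the operator-norm event becomes automatic: the standard non-asymptotic spectral-norm bound for random matrices with independent sub-Gaussian entries gives $\opnorm{G}\le K\sqrt n$ with probability at least $1-e^{-c'n}$ once $K$ is a large enough multiple of the sub-Gaussian norm, so intersecting this with Step 2 lets us drop the hypothesis $\opnorm{G}\le Kn^{1/2}$. I expect the main obstacle to be Step 1: only two moments of the entries are available, so sub-Gaussian concentration is unavailable and the exponential small-ball bound must be obtained by tensorizing a Paley--Zygmund-type lower bound on the spread of $\langle R_i,v\rangle$ that is uniform over $v\in\sph^{k-1}$; this bound must moreover decay fast enough to beat the $(3/\varepsilon)^k$ cardinality of the net, which is precisely what forces the dimension restriction $k<\delta_0 n$.
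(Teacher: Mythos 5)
The paper states this result without proof, citing it from Rudelson--Vershynin (2008); your reconstruction correctly reproduces the standard Litvak--Pajor--Rudelson--Tomczak-Jaegermann argument that underlies it: a Paley--Zygmund small-ball bound tensorized by Chernoff in Step 1, a net/union/approximation argument in Step 2 (with the dimension restriction $k<\delta_0 n$ arising exactly to beat the $(3/\varepsilon)^k$ cardinality), and an operator-norm bound to drop the conditioning in the sub-Gaussian case. All steps are sound, the constants depend only on $B$ and $K$ as required, and independence (not identical distribution) of the rows is correctly all that the Chernoff step needs; the only cosmetic slip is that the cross-term computation gives $\Expect{}{\xi_i^4}\le 3B$ rather than $4B$, which does not affect the argument.
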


\begin{theorem}[Corollary 5.35 in \cite{vershynin2010introduction}] \label{theorem:vershynin}
    Let $\mathbf{A}$ be an $N\times n$ matrix whose entries are independent standard normal random variables. Then for every $t\geq 0$, with probability at least $1-\Exp{-t^2/2}$, we have
    \begin{equation*}
        \sqrt{N} - \sqrt{n} - t \leq s_{\min}(\mathbf{A})
        \leq s_{\max}(\mathbf{A})
        \leq \sqrt{N} + \sqrt{n} + t.
    \end{equation*}
\end{theorem}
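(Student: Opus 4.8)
The statement is the classical non-asymptotic two-sided bound on the singular values of a Gaussian matrix, so the plan is to combine a Gaussian comparison inequality, which pins down the means $\mathbb{E}\,s_{\min}(\mathbf{A})$ and $\mathbb{E}\,s_{\max}(\mathbf{A})$, with Gaussian concentration of measure, which controls the fluctuations of these quantities about their means. The first step is the variational description: with $X_{u,v}\eqdef\langle\mathbf{A}v,u\rangle=\sum_{i,j}A_{ij}u_iv_j$ for $u\in\mathbb{S}^{N-1}$, $v\in\mathbb{S}^{n-1}$, this is a centred Gaussian process and $s_{\max}(\mathbf{A})=\max_{u,v}X_{u,v}$, $s_{\min}(\mathbf{A})=\min_{v}\max_{u}X_{u,v}$.

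For the means I would compare $X$ with the decoupled Gaussian process $Y_{u,v}\eqdef\langle g,v\rangle+\langle h,u\rangle$, where $g\sim\mathcal{N}(0,\mathbf{I}_n)$ and $h\sim\mathcal{N}(0,\mathbf{I}_N)$ are independent, for which $\max_{u,v}Y_{u,v}=\|g\|_2+\|h\|_2$ and $\min_v\max_uY_{u,v}=\|h\|_2-\|g\|_2$. Expanding increments gives $\mathbb{E}(X_{u,v}-X_{u',v'})^2=2-2\langle u,u'\rangle\langle v,v'\rangle$ and $\mathbb{E}(Y_{u,v}-Y_{u',v'})^2=(2-2\langle u,u'\rangle)+(2-2\langle v,v'\rangle)$, so the increments of $Y$ dominate those of $X$ --- the needed inequality being the elementary $(1-\langle u,u'\rangle)(1-\langle v,v'\rangle)\ge 0$ --- with equality whenever $u=u'$. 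These are exactly the hypotheses under which the Sudakov--Fernique inequality yields $\mathbb{E}\,s_{\max}(\mathbf{A})\le\mathbb{E}\|g\|_2+\mathbb{E}\|h\|_2$ and Gordon's min--max comparison theorem yields $\mathbb{E}\,s_{\min}(\mathbf{A})\ge\mathbb{E}\|h\|_2-\mathbb{E}\|g\|_2$. Since $\mathbb{E}\|g\|_2\le\sqrt n$ and $\mathbb{E}\|h\|_2\le\sqrt N$ by Jensen, and since the defect $\sqrt m-\mathbb{E}\|\cdot\|_2$ for an $m$-dimensional standard Gaussian is decreasing in $m$ (so that $\mathbb{E}\|h\|_2-\mathbb{E}\|g\|_2\ge\sqrt N-\sqrt n$ when $N\ge n$, the case $N<n$ being trivial), this gives $\mathbb{E}\,s_{\max}(\mathbf{A})\le\sqrt N+\sqrt n$ and $\mathbb{E}\,s_{\min}(\mathbf{A})\ge\sqrt N-\sqrt n$.

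For the concentration step, both $\mathbf{A}\mapsto s_{\max}(\mathbf{A})$ and $\mathbf{A}\mapsto s_{\min}(\mathbf{A})$ are $1$-Lipschitz on $\mathbb{R}^{N\times n}$ in the Frobenius norm, since by Weyl's inequality for singular values $|s_{\max/\min}(\mathbf{A})-s_{\max/\min}(\mathbf{B})|\le\|\mathbf{A}-\mathbf{B}\|_{\mathrm{op}}\le\|\mathbf{A}-\mathbf{B}\|_F$. Viewing the $Nn$ entries of $\mathbf{A}$ as a standard Gaussian vector, the Gaussian concentration inequality then gives, for every $t\ge 0$, $\mathbb{P}\big(s_{\max}(\mathbf{A})>\mathbb{E}\,s_{\max}(\mathbf{A})+t\big)\le e^{-t^2/2}$ and $\mathbb{P}\big(s_{\min}(\mathbf{A})<\mathbb{E}\,s_{\min}(\mathbf{A})-t\big)\le e^{-t^2/2}$. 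Inserting the mean bounds and taking a union bound over the two one-sided events produces the two-sided conclusion with probability at least $1-2e^{-t^2/2}$; the constant $2$ is absorbed into constants, or removed by applying concentration once to a single $1$-Lipschitz functional capturing both deviations, as in the reference.

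The step I expect to be the main obstacle is the Gaussian comparison itself: one must match the minimisation index $v$ and the maximisation index $u$ of $s_{\min}(\mathbf{A})=\min_v\max_u X_{u,v}$ to the correct slots in Gordon's theorem, and verify that the increment inequalities point in the direction that theorem demands (the ``same outer index'' and ``distinct outer index'' cases require opposite inequalities). The remaining ingredients --- the variational formula, the Lipschitz estimates, and Gaussian concentration --- are standard. A cruder route replaces Gordon's theorem by an $\varepsilon$-net argument on $\mathbb{S}^{n-1}$ together with the concentration of $\|\mathbf{A}v\|_2$ for fixed $v$, but then $\sqrt n$ gets replaced by $C\sqrt n$, so the comparison-inequality approach is what yields the sharp constants in the statement.
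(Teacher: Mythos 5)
The paper does not prove this statement itself; it is cited verbatim as Corollary 5.35 of Vershynin's 2010 survey and used as a black box. Your proposal reconstructs essentially the proof given there: the variational description of $s_{\max}$ and $s_{\min}$ as a max and a min--max of the Gaussian process $X_{u,v}=\langle\mathbf{A}v,u\rangle$, the comparison with the decoupled process $Y_{u,v}=\langle g,v\rangle+\langle h,u\rangle$ via the identity
\[
\mathbb{E}(Y_{u,v}-Y_{u',v'})^2-\mathbb{E}(X_{u,v}-X_{u',v'})^2=2\bigl(1-\langle u,u'\rangle\bigr)\bigl(1-\langle v,v'\rangle\bigr)\ge 0,
\]
Sudakov--Fernique and Gordon's inequality for the two means, and then Gaussian concentration via the $1$-Lipschitz-in-Frobenius-norm property. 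Two minor remarks. First, for Gordon's inequality the relevant equality case is the one with the \emph{outer} (minimisation) index fixed, i.e.\ $v=v'$, not the $u=u'$ case you single out; both follow from the same displayed identity, and you do flag the slot-matching issue as the place requiring care, so this is a phrasing issue rather than a gap. Second, your union bound yields $1-2e^{-t^2/2}$, which is in fact the probability appearing in Vershynin's Corollary 5.35; the version restated in this paper drops the factor $2$, so if anything your accounting is the more faithful one, and the mean lower bound $\mathbb{E}\|h\|_2-\mathbb{E}\|g\|_2\ge\sqrt N-\sqrt n$ does indeed rest on the monotonicity of the defect $m\mapsto\sqrt m-\mathbb{E}\|g_m\|_2$, which is the standard (and nontrivial) ingredient you correctly identify.
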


\begin{theorem}[Theorem 5.39 and Remark 5.40 in \cite{vershynin2010introduction}] \label{theorem:vershynin:2}
    Let $\mathbf{A}$ be an $N\times n$ matrix with independent rows $\mathbf{A}_i$ of sub-Gaussian random vector with covariance $\bm{\Sigma}\eqdef\Expect{}{\mathbf{A_i}\mathbf{A_i}^\top}\in\R^{n\times n}$. Then there exists constants $C_3,C_4>0$ (depending only on the sub-Gaussian norm of entries of $\mathbf{A}$), such that for any $t\geq0$, with probability at least $1-2e^{-C_3t^2}$, we have 
    \begin{equation*}
        \opnorm{\frac{1}{N}\mathbf{A}^\top\mathbf{A}-\bm{\Sigma}} \leq \max\{\delta,\delta^2\}\opnorm{\Sigma}.
    \end{equation*}
    where $\delta=C_4\sqrt{\frac{n}{N}}+\frac{t}{N}$. In particular, if $\bm{\Sigma}=\mathbf{I}_n$, we have 
    \begin{equation*}
        \sqrt{N}-\sqrt{C_4n} -t \leq s_{\min}(\mathbf{A}) \leq s_{\max}(\mathbf{A}) \leq \sqrt{N}+\sqrt{C_4n} +t.
    \end{equation*}
\end{theorem}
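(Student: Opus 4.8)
The plan is to run the classical $\varepsilon$‑net argument for covariance estimation (this is Vershynin's proof of Theorem~5.39), so I will only lay out the structure. First I would reduce to the isotropic case by whitening. Assuming $\bm{\Sigma}$ invertible (otherwise restrict to its range), set $\mathbf{B}_i\eqdef\bm{\Sigma}^{-1/2}\mathbf{A}_i$, so that $\Expect{}{\mathbf{B}_i\mathbf{B}_i^\top}=\mathbf{I}_n$; since $\langle\mathbf{B}_i,x\rangle=\langle\mathbf{A}_i,\bm{\Sigma}^{-1/2}x\rangle$, the vector $\mathbf{B}_i$ is sub‑Gaussian with norm controlled by that of $\mathbf{A}_i$. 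Writing $\mathbf{B}$ for the matrix with rows $\mathbf{B}_i$, one has $\tfrac1N\mathbf{A}^\top\mathbf{A}-\bm{\Sigma}=\bm{\Sigma}^{1/2}\big(\tfrac1N\mathbf{B}^\top\mathbf{B}-\mathbf{I}_n\big)\bm{\Sigma}^{1/2}$, hence $\opnorm{\tfrac1N\mathbf{A}^\top\mathbf{A}-\bm{\Sigma}}\le\opnorm{\bm{\Sigma}}\,\opnorm{\tfrac1N\mathbf{B}^\top\mathbf{B}-\mathbf{I}_n}$, so it suffices to prove $\opnorm{\tfrac1N\mathbf{B}^\top\mathbf{B}-\mathbf{I}_n}\le\max\{\delta,\delta^2\}$ for isotropic sub‑Gaussian rows. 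The ``in particular'' assertion then follows from this: $s_{\max}(\mathbf{A})^2/N$ and $s_{\min}(\mathbf{A})^2/N$ are the extreme eigenvalues of $\tfrac1N\mathbf{A}^\top\mathbf{A}$, so (taking $\bm{\Sigma}=\mathbf{I}_n$) they lie within $\max\{\delta,\delta^2\}$ of $1$, and the elementary bounds $\sqrt{1+\max\{\delta,\delta^2\}}\le 1+\delta$ and $\sqrt{1-\max\{\delta,\delta^2\}}\ge 1-\delta$ turn this into $|s_{\min}(\mathbf{A})-\sqrt N|\le\sqrt N\,\delta$ and $|s_{\max}(\mathbf{A})-\sqrt N|\le\sqrt N\,\delta$, which after relabeling constants is the claimed two‑sided bound.

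For the isotropic step I would discretize the sphere: fix a $\tfrac14$‑net $\mathcal{N}$ of $\sph^{n-1}$ with $|\mathcal{N}|\le 9^n$, and use the standard fact that $\opnorm{\mathbf{M}}\le 2\max_{x\in\mathcal{N}}|x^\top\mathbf{M}x|$ for symmetric $\mathbf{M}$. With $\mathbf{M}=\tfrac1N\mathbf{B}^\top\mathbf{B}-\mathbf{I}_n$ this gives $\opnorm{\mathbf{M}}\le 2\max_{x\in\mathcal{N}}\big|\tfrac1N\sum_{i=1}^N\langle\mathbf{B}_i,x\rangle^2-1\big|$. For a \emph{fixed} $x\in\sph^{n-1}$ the variables $\langle\mathbf{B}_i,x\rangle$ are independent, sub‑Gaussian with uniformly bounded norm, and of second moment $1$ by isotropy; hence $\langle\mathbf{B}_i,x\rangle^2-1$ are independent, centered, sub‑exponential with uniformly bounded sub‑exponential norm, and a Bernstein‑type deviation inequality (cf.\ Lemma~\ref{lemma:sub_exponential_deviation}) gives, for every $s>0$,
\begin{equation*}
    \Prob{\Big|\tfrac1N\textstyle\sum_{i=1}^N\big(\langle\mathbf{B}_i,x\rangle^2-1\big)\Big|\ge s}\ \le\ 2\exp\big(-c\,N\min\{s^2,s\}\big),
\end{equation*}
with $c>0$ depending only on the sub‑Gaussian norm.

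Finally I would union‑bound over $\mathcal{N}$ and choose the scale. Taking $s=\tfrac12\max\{\delta,\delta^2\}$ with $\delta=C_4\sqrt{n/N}+t/\sqrt N$ for a large enough $C_4$, a short computation shows $N\min\{s^2,s\}\gtrsim C_4^2 n+t^2$ in both regimes $\delta\le1$ and $\delta>1$ --- this split is exactly what produces the $\max\{\delta,\delta^2\}$ in the conclusion. Then $\Prob{\opnorm{\mathbf{M}}>\max\{\delta,\delta^2\}}\le|\mathcal{N}|\cdot 2\exp(-cN\min\{s^2,s\})\le 2\exp(n\ln 9-c'(C_4^2 n+t^2))\le 2\exp(-C_3 t^2)$ once $C_4$ is chosen so that $c'C_4^2>\ln 9$, and combined with the whitening reduction this completes the proof. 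I expect the main obstacle to be precisely this last bookkeeping step: one must balance the net cardinality $9^n$ against the Bernstein exponent and verify the $\min\{s^2,s\}$ estimate separately in the two regimes, which is what forces the two‑case form of the bound (and accounts for the minor discrepancies in the constant/exponent conventions between the covariance statement and the singular‑value statement).
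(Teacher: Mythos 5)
The paper does not prove this statement at all: it is quoted verbatim (as a technical lemma with a citation to Vershynin) and used as a black box, so there is no ``paper's proof'' to compare against. Your reconstruction is, up to bookkeeping, exactly Vershynin's original argument for Theorem 5.39 / Remark 5.40 --- whitening to reduce to the isotropic case, a $\tfrac14$-net of the sphere with the $\opnorm{\M}\le 2\max_{x\in\mathcal N}|x^\top\M x|$ bound, Bernstein's inequality for the fixed-direction sub-exponential sums, and a union bound with $C_4$ chosen large enough to absorb the $9^n$ net cardinality --- and it is correct. The slick identity that tidies up your ``bookkeeping'' worry is that with $\varepsilon\eqdef\max\{\delta,\delta^2\}$ one always has $\min\{\varepsilon^2,\varepsilon\}=\delta^2$, so $N\min\{s^2,s\}\gtrsim N\delta^2 = (C_4\sqrt n + t)^2\gtrsim C_4^2 n + t^2$ without any case split.

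One point worth flagging: you (correctly) took $\delta=C_4\sqrt{n/N}+t/\sqrt N$, whereas the statement as transcribed in the paper has $\delta=C_4\sqrt{n/N}+t/N$. The paper's version is a typo: with $t/N$ in $\delta$, the ``in particular'' conclusion would read $\sqrt N - C_4\sqrt n - t/\sqrt N\le s_{\min}(\A)$ rather than the stated $\sqrt N-\sqrt{C_4 n}-t\le s_{\min}(\A)$, so only the $t/\sqrt N$ normalization makes the two displays in the theorem mutually consistent (and matches Vershynin's original). Your proposal implicitly corrects this.
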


\begin{theorem}[Theorem 9 (modified) in \cite{koltchinskii2017concentration}] \label{theorem:koltchinskii}
    Let $\A$ be an $N\times n $ matrix whose columns are i.i.d. sub-Gaussian centered random vectors with covariance $\Si$. Then there exists a constant $C>0$, such that, for any $t\geq1$, with probability at least $1-e^{-t}$, it holds that 
    \begin{equation*}
        \opnorm{\frac{1}{n}\A\A^\top - \Si }
        \leq
        C \opnorm{\Si}\min\left\{ \sqrt{\rho}, \rho, \sqrt{\frac{t}{n}}, \frac{t}{n} \right\},
    \end{equation*}
    where $\rho = \frac{\tr[\Si]}{n\opnorm{\Si}}$ is the (re-scaled) effect rank of the covairance $\Si$.
\end{theorem}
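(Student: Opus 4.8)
\begin{sketchproof}
This is (a reformulation of) Theorem~9 of \cite{koltchinskii2017concentration}, so one may invoke it directly; for completeness, here is the route I would take. Writing $\hat\Si\eqdef\frac1n\A\A^\top=\frac1n\sum_{i=1}^n\A_i\A_i^\top$, the quantity to control is the supremum of an empirical process over the sphere,
\begin{equation*}
    Z\eqdef\opnorm{\hat\Si-\Si}=\sup_{\v\in\mathbb{S}^{N-1}}\Big|\tfrac1n\sum_{i=1}^n\big(\langle\v,\A_i\rangle^2-\v^\top\Si\v\big)\Big|,
\end{equation*}
using that $\opnorm{\mathbf{S}}=\sup_{\v\in\mathbb{S}^{N-1}}|\v^\top\mathbf{S}\v|$ for symmetric $\mathbf{S}$. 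The plan is to bound $\Expect{}{Z}$ first and then add a deviation term.

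For the expectation, I would symmetrize with Rademacher signs $\epsilon_i$ to pass to $\Expect{}{Z}\lesssim\Expect{}{\sup_{\v\in\mathbb{S}^{N-1}}|\tfrac1n\sum_i\epsilon_i\langle\v,\A_i\rangle^2|}$ and then chain. The increments $\langle\mathbf{w},\A_i\rangle^2-\langle\v,\A_i\rangle^2=\langle\mathbf{w}-\v,\A_i\rangle\langle\mathbf{w}+\v,\A_i\rangle$ are products of sub-Gaussian linear forms, whose tails are governed by the metric $d(\mathbf{w},\v)=\norm{\Si^{1/2}(\mathbf{w}-\v)}{2}$; a generic-chaining estimate for such Bernstein-type processes bounds the supremum through $\gamma_2(\mathbb{S}^{N-1},d)$, which by Talagrand's majorizing-measure theorem is comparable to the Gaussian width $\Expect{}{\sup_{\v\in\mathbb{S}^{N-1}}\langle\mathbf{g},\Si^{1/2}\v\rangle}=\Expect{}{\norm{\Si^{1/2}\mathbf{g}}{2}}\le\sqrt{\tr\Si}$ for a standard Gaussian $\mathbf{g}\in\R^N$. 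The key point is that the ambient dimension $N$ never enters here — only the effective rank $r=\tr\Si/\opnorm{\Si}$, so that the leading chaining term is of order $\opnorm{\Si}\sqrt{r/n}=\opnorm{\Si}\sqrt\rho$. A self-bounding step — bounding the heavy-tailed ($1/n$-scale) part of the chaining bound using $\opnorm{\hat\Si}\le\opnorm{\Si}+Z$ — turns the estimate into a quadratic inequality for $\Expect{}{Z}$ whose solution is $\Expect{}{Z}\lesssim\opnorm{\Si}(\sqrt{r/n}+r/n)=\opnorm{\Si}(\sqrt\rho+\rho)$.

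For the deviation, $Z$ is a supremum of an empirical process of \emph{unbounded} (sub-exponential) functions, so the usual Talagrand/Bousquet bounded-differences concentration does not apply directly; I would instead use a concentration inequality for suprema of unbounded empirical processes (of Adamczak type), or the truncation-plus-Bernstein argument of \cite{koltchinskii2017concentration}, to obtain $Z\le C(\Expect{}{Z}+\opnorm{\Si}(\sqrt{t/n}+t/n))$ with probability at least $1-e^{-t}$ for every $t\ge1$. Combining these yields the asserted estimate. I expect the expectation step to be the main obstacle: extracting the \emph{effective-rank} dependence rather than an ambient-dimension factor rules out a plain $\varepsilon$-net union bound over $\mathbb{S}^{N-1}$ (which would only give $\sqrt{N/n}$) and forces the two-scale chaining together with the self-bounding trick for the squared linear forms; a shortcut via truncation and matrix Bernstein is available but pays back a $\log N$ and an ambient-dimension factor. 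The unboundedness in the deviation step is a secondary but genuine technical point.
\end{sketchproof}
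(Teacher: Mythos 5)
The paper offers no proof of this statement: it is imported wholesale from \cite{koltchinskii2017concentration} (see Section \ref{section:technical_lemmata}, which is explicitly a list of ``known results from previous work''). Your opening observation — that the result can simply be invoked — is therefore exactly what the paper itself does, and your sketch should be read as a reconstruction of the proof in the cited reference rather than a competitor to anything in this paper.

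As a reconstruction it is sound. Rewriting $\opnorm{\hat\Si-\Si}$ as a supremum of a quadratic empirical process over the sphere, symmetrizing, chaining with respect to the $\Si^{1/2}$ metric so that $\gamma_2(\mathbb{S}^{N-1},d)\lesssim\Expect{}{\eunorm{\Si^{1/2}\mathbf{g}}}\le\sqrt{\tr\Si}$, and closing the loop with a self-bounding step to absorb the $\opnorm{\hat\Si}$ that appears in the sub-exponential tail of the increments — this is the standard route to the \emph{effective-rank} rather than ambient-dimension dependence (it is the Koltchinskii--Lounici/Mendelson line of argument; matrix Bernstein, as you note, would pay a $\log N$ and a genuine dimension factor). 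You are also right that the deviation step is a separate, real issue: the increments $\langle\v,\A_i\rangle^2$ are sub-exponential, not bounded, so Talagrand/Bousquet does not apply off-the-shelf and one needs Adamczak-type concentration or the truncation-plus-Bernstein device of the original paper. One small point worth flagging: your expectation and deviation bounds naturally assemble into $C\opnorm{\Si}\max\{\sqrt\rho,\rho,\sqrt{t/n},t/n\}$ (equivalently a sum of those four terms), which is the form in \cite{koltchinskii2017concentration}; the ``$\min$'' appearing in the statement as printed in the paper is almost certainly a typo for ``$\max$'' — a $\min$ would be strictly weaker than the claimed $\sqrt\rho$ rate whenever $\rho<1$, which is precisely the regime the paper uses in Lemma \ref{lemma:s_max:ub:sub_gaussian}. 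Your sketch, not the paper's printed statement, has this right.
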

\begin{remark}[Dimension-free bound]
Theorem \ref{theorem:koltchinskii} differs from Theorem \ref{theorem:vershynin:2} in that the bound in the former contains both dimensions $N$ and $n$, while the latter only contains $n$.
\end{remark}

\begin{theorem}[Theorem 2.5 in \cite{tsigler2023benign}] \label{theorem:tsigler}
    Suppose Assumption \ref{assumption:sub_gaussian_design} holds. 
    Let $\mathbf{A}_l=\lambda\mathbf{I}_N+\sum_{k=l+1}^M \lambda_k \bm{\psi}_k\bm{\psi}_k^\top\in\R^{N\times N}$. Then there exists a constant $c>0$, such that for any $l<N/c$, with probability of at least $1-ce^{-N/c}$, if $\mathbf{A}_l$ is positive definite, then
    \begin{align*}
        \begin{split}
            \mathcal{B}/c &\leq \|\bm{\theta}_{>l}^*\|_{\bm{\Lambda}_{>l}}^2  \left(1+\frac{s_1(\mathbf{A}_l^{-1})^2}{s_N(\mathbf{A}_l^{-1})^2}+N\lambda_{l+1}s_1(\mathbf{A}_l^{-1})\right)\\
            &\quad+ \|\bm{\theta}_{\leq l}^*\|_{\bm{\Lambda}_{\leq l}^{-1}}^2\left(\frac{1}{N^2s_N(\mathbf{A}_l^{-1})^2}+\frac{\lambda_{l+1}}{N}\frac{s_1(\mathbf{A}_l^{-1})}{s_N(\mathbf{A}_l^{-1})^2}\right)
        \end{split} \\
        \mathcal{V}/c &\leq \frac{s_1(\mathbf{A}_l^{-1})^2}{s_N(\mathbf{A}_l^{-1})^2}\frac{l}{N} + Ns_1(\mathbf{A}_l^{-1})^2\sum_{k>l}\lambda_k^2,
    \end{align*}
    where  $\bm{\theta}^*=\bm{\theta}_{\leq l}^*\dirsum\bm{\theta}_{>l}^*$ is the splitting of the target function coefficient; and $\|\mathbf{v}\|_\mathbf{M}\eqdef \sqrt{\mathbf{v}^\top\mathbf{M}\mathbf{v}}$ for any vector $\mathbf{v}$ and matrix $\mathbf{M}$ with appropriate dimension.
\end{theorem}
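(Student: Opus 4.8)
Since this restates Theorem~2.5 of \cite{tsigler2023benign}, I only describe how a bound of this shape is obtained, following their route. The plan has three stages: (i) write the min-norm interpolant in closed form and perform the bias--variance split; (ii) decompose the feature index at the level $l$ and use a Woodbury identity to reduce the $N\times N$ Gram inverse to the tail matrix $\mathbf{A}_l$; (iii) control the remaining pieces through concentration of the low-frequency design block $\bm{\Psi}_{\leq l}$, which is legitimate precisely because $\mathbf{A}_l=\lambda\mathbf{I}_N+\bm{\Psi}_{>l}^\top\bm{\Lambda}_{>l}\bm{\Psi}_{>l}$ depends only on $\bm{\Psi}_{>l}$ and is therefore independent of $\bm{\Psi}_{\leq l}$ under Assumption~\ref{assumption:sub_gaussian_design}.

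First I would pass to the whitened features $\bm{\phi}=\bm{\Lambda}^{1/2}\bm{\psi}$ and $\bm{\Phi}=\bm{\Lambda}^{1/2}\bm{\Psi}$, so that $f^*(x)=\langle\bm{\theta}^*,\bm{\phi}(x)\rangle$, $\mathbf{K}=\bm{\Phi}^\top\bm{\Phi}$, and the interpolant satisfies $\hat f(x)-f^*(x)=\bm{\phi}(x)^\top\big(\bm{\Phi}\mathbf{M}^{-1}\bm{\Phi}^\top-\mathbf{I}\big)\bm{\theta}^*+\bm{\phi}(x)^\top\bm{\Phi}\mathbf{M}^{-1}\bm{\epsilon}$ with $\mathbf{M}=\mathbf{K}+N\lambda\mathbf{I}$. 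Because $\Expect{x}{\bm{\phi}(x)\bm{\phi}(x)^\top}=\bm{\Lambda}$, taking $\Expect{x,\epsilon}{\cdot}$ kills the cross term and leaves the quadratic form $\mathcal{B}=\|(\bm{\Phi}\mathbf{M}^{-1}\bm{\Phi}^\top-\mathbf{I})\bm{\theta}^*\|_{\bm{\Lambda}}^2$ and the resolvent trace $\mathcal{V}=\sigma^2\tr[(\bm{\Psi}^\top\bm{\Lambda}^2\bm{\Psi})\mathbf{M}^{-2}]$, the latter being the expression in Lemma~\ref{lemma:variance:expression}. Then I would split $\mathbf{M}=\mathbf{A}_l+\bm{\Psi}_{\leq l}^\top\bm{\Lambda}_{\leq l}\bm{\Psi}_{\leq l}$ and apply Woodbury, so that $\mathbf{M}^{-1}$ is $\mathbf{A}_l^{-1}$ plus a correction governed by the $l\times l$ matrix $\mathbf{G}_l\eqdef\bm{\Lambda}_{\leq l}^{-1}+\bm{\Psi}_{\leq l}\mathbf{A}_l^{-1}\bm{\Psi}_{\leq l}^\top$.

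Next, since $\bm{\Psi}_{\leq l}\in\R^{l\times N}$ has $N$ i.i.d.\ isotropic sub-Gaussian columns and $l<N/c$, a Hanson--Wright bound conditional on $\mathbf{A}_l$ gives $\bm{\Psi}_{\leq l}\mathbf{A}_l^{-1}\bm{\Psi}_{\leq l}^\top\asymp\tr(\mathbf{A}_l^{-1})\mathbf{I}_l$ with probability $1-ce^{-N/c}$, and likewise $\bm{\Psi}_{\leq l}\bm{\Psi}_{\leq l}^\top\asymp N\mathbf{I}_l$. Writing $s_j\eqdef s_j(\mathbf{A}_l^{-1})$ and using $\tr(\mathbf{A}_l^{-1})\in[Ns_N,Ns_1]$ together with $\bm{\Lambda}_{\leq l}^{-1}\succeq0$, these envelopes sandwich $\mathbf{G}_l$ and $\mathbf{G}_l^{-1}$ in terms of $\bm{\Lambda}_{\leq l}$, $s_1$, $s_N$ and $\lambda_{l+1}$. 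Substituting back into $\mathcal{B}$ and $\mathcal{V}$, expanding the squared Woodbury correction, and repeatedly using $\tr[\mathbf{M}_1\mathbf{M}_2]\leq\opnorm{\mathbf{M}_1}\tr[\mathbf{M}_2]$ for psd $\mathbf{M}_1$ together with Weyl interlacing, the variance separates into a low-frequency part $\asymp\tfrac{s_1^2}{s_N^2}\tfrac{l}{N}$ (the $l$ degrees of freedom absorbed by the spiked block) and a high-frequency part $\lesssim Ns_1^2\sum_{k>l}\lambda_k^2$ coming from $\tr[\bm{\Psi}_{>l}^\top\bm{\Lambda}_{>l}^2\bm{\Psi}_{>l}\mathbf{M}^{-2}]$, while the bias collects $\|\bm{\theta}_{>l}^*\|_{\bm{\Lambda}_{>l}}^2\big(1+\tfrac{s_1^2}{s_N^2}+N\lambda_{l+1}s_1\big)$ and $\|\bm{\theta}_{\leq l}^*\|_{\bm{\Lambda}_{\leq l}^{-1}}^2\big(\tfrac{1}{N^2s_N^2}+\tfrac{\lambda_{l+1}}{N}\tfrac{s_1}{s_N^2}\big)$. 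Intersecting the finitely many high-probability events and absorbing the numerical constants into a single $c$ gives the statement.

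The hard part is stage (iii): the concentration $\bm{\Psi}_{\leq l}\mathbf{A}_l^{-1}\bm{\Psi}_{\leq l}^\top\asymp\tr(\mathbf{A}_l^{-1})\mathbf{I}_l$ must hold uniformly over the random spectrum of $\mathbf{A}_l$ (one cannot just condition and union-bound over a continuum of realizations), and one must then check that the interaction of this term with the possibly ill-conditioned factor $\bm{\Lambda}_{\leq l}^{-1}$ still produces the clean operator-norm bounds above. This is exactly where the independence of $\bm{\Psi}_{\leq l}$ from $\bm{\Psi}_{>l}$ and a uniform bound on the sub-Gaussian norm (not mere isotropy) are used, and where the restriction $l<N/c$ is forced.
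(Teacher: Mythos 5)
The paper does not prove this statement; it is imported verbatim from \cite{tsigler2023benign} and appears under ``Technical Lemmata'' as a black-box tool for Theorem~\ref{theorem:test_error:poly} and Corollary~\ref{corollary:tempered}. Your sketch is nonetheless a fair high-level reconstruction of the Tsigler--Bartlett route: the closed-form interpolant, the bias--variance split, the index split at $l$, the Woodbury reduction to $\mathbf{A}_l$, and the use of the independence of $\bm{\Psi}_{\leq l}$ from $\mathbf{A}_l$ all match their argument.

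Two points deserve correction. First, the obstruction you raise in stage (iii) --- that one cannot ``condition and union-bound over a continuum of realizations'' of $\mathbf{A}_l$ --- is not there: since $\bm{\Psi}_{\leq l}$ is independent of $\mathbf{A}_l$, one conditions on $\mathbf{A}_l$, applies the concentration bound conditionally, and integrates; the tower property returns the unconditional statement at the same failure probability, with no union bound over realizations. Second, the Hanson--Wright route as you state it ($\bm{\Psi}_{\leq l}\mathbf{A}_l^{-1}\bm{\Psi}_{\leq l}^\top \asymp \tr(\mathbf{A}_l^{-1})\mathbf{I}_l$) would have a failure exponent governed by the effective rank $\tr(\mathbf{A}_l^{-1})/\opnorm{\mathbf{A}_l^{-1}}$, which is not $\Omega(N)$ in general when $\lambda=0$, so it does not by itself produce $1-ce^{-N/c}$ uniformly. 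This is exactly why the stated bound is phrased in $s_1(\mathbf{A}_l^{-1})$ and $s_N(\mathbf{A}_l^{-1})$: the cleaner step is the \emph{deterministic} sandwich $s_N(\mathbf{A}_l^{-1})\,\bm{\Psi}_{\leq l}\bm{\Psi}_{\leq l}^\top \preceq \bm{\Psi}_{\leq l}\mathbf{A}_l^{-1}\bm{\Psi}_{\leq l}^\top \preceq s_1(\mathbf{A}_l^{-1})\,\bm{\Psi}_{\leq l}\bm{\Psi}_{\leq l}^\top$, after which the only probabilistic ingredient is concentration of $\bm{\Psi}_{\leq l}\bm{\Psi}_{\leq l}^\top$ around $N\mathbf{I}_l$, an $\mathbf{A}_l$-free statement that holds at probability $1-ce^{-N/c}$ precisely because $l<N/c$.
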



\begin{theorem}[Lemma 7 and Theorem 10 in \cite{tsigler2023benign}] \label{theorem:tsigler:2}
    Suppose sub-Gaussian design assumption \ref{assumption:sub_gaussian_design} holds. In addition, fix constants $A>0, B>\frac{1}{N}$ and suppose either (i) the (normalized) effective rank $\rho_l\eqdef\frac{1}{N\lambda_{l+1}}\sum_{k=l+1}^M\lambda_k\in (A,B)$; or (ii) $l=\min\{\ell:\rho_\ell>B\}$. Then there exists a constant $C,C'$, such that if $l<N/C$, with a probability at least $1-Ce^{-N/C}$, we have 
    \begin{align*}
        \mathcal{V} \geq C' \left( \frac{l}{N} + \frac{N\sum_{k>l}\lambda_k^2}{\left(\sum_{k>l}\lambda_k\right)^2} \right).
    \end{align*}
\end{theorem}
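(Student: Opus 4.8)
The statement consolidates \cite[Lemma~7 and Theorem~10]{tsigler2023benign}, so the plan is to verify that their hypotheses hold under our Assumption~\ref{assumption:sub_gaussian_design} with $\lambda=0$ and then invoke the conclusion; the constants $C,C'$ obtained this way depend only on the uniform sub-Gaussian-norm bound and on $A$. For the reader I would first recall the starting point: from the bias--variance decomposition (Definition~\ref{definition:test_error}) together with $\lambda=0$, one has the exact identity $\mathcal{V}=\sigma^2\,\tr\!\big[(\bm{\Psi}^\top\bm{\Lambda}^2\bm{\Psi})(\bm{\Psi}^\top\bm{\Lambda}\bm{\Psi})^{-2}\big]=\sigma^2\sum_{k=1}^{M}\lambda_k^2\,\bm{\psi}_k^\top\mathbf{K}^{-2}\bm{\psi}_k$, where $\bm{\psi}_k\in\R^N$ is the $k$-th feature row and $\mathbf{K}=\bm{\Psi}^\top\bm{\Lambda}\bm{\Psi}$.

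Next I would split the spectrum at level $l$, writing $\mathbf{K}=\mathbf{K}_{\le l}+\mathbf{A}_l$ with the ``tail'' Gram matrix $\mathbf{A}_l=\sum_{k>l}\lambda_k\,\bm{\psi}_k\bm{\psi}_k^\top$, whose expectation is $\big(\sum_{k>l}\lambda_k\big)\mathbf{I}_N$ by isotropy. The purpose of hypotheses (i)/(ii) is precisely to keep the normalized effective rank $\rho_l$ bounded below by a quantity comparable to $A$, i.e.\ the effective rank $\sum_{k>l}\lambda_k/\lambda_{l+1}=N\rho_l$ of the tail is of order $N$; this is exactly the regime in which a Koltchinskii--Lounici / Vershynin style matrix concentration bound (Theorems~\ref{theorem:koltchinskii} and~\ref{theorem:vershynin:2}) gives, on an event of probability $1-Ce^{-N/C}$, a two-sided estimate $c_1\big(\sum_{k>l}\lambda_k\big)\mathbf{I}_N\preceq\mathbf{A}_l\preceq c_2\big(\sum_{k>l}\lambda_k\big)\mathbf{I}_N$. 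The restriction $l<N/C$ ensures the head $\mathbf{K}_{\le l}$ does not spoil this.

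On that event I would lower-bound the head and tail contributions separately. For the tail indices $k>l$, the matrix $\sum_{k>l}\lambda_k^2\bm{\psi}_k\bm{\psi}_k^\top$ also concentrates around $\big(\sum_{k>l}\lambda_k^2\big)\mathbf{I}_N$, and once $\mathbf{K}^{-2}$ is controlled from below on the relevant subspace one obtains $\sum_{k>l}\lambda_k^2\bm{\psi}_k^\top\mathbf{K}^{-2}\bm{\psi}_k\gtrsim N\sum_{k>l}\lambda_k^2/\big(\sum_{k>l}\lambda_k\big)^2$. For a head index $k\le l$ I would isolate direction $k$ through Sherman--Morrison on $\mathbf{K}=\mathbf{K}_{-k}+\lambda_k\bm{\psi}_k\bm{\psi}_k^\top$, giving $\lambda_k^2\bm{\psi}_k^\top\mathbf{K}^{-2}\bm{\psi}_k=\lambda_k^2\bm{\psi}_k^\top\mathbf{K}_{-k}^{-2}\bm{\psi}_k/(1+\lambda_k\bm{\psi}_k^\top\mathbf{K}_{-k}^{-1}\bm{\psi}_k)^2$; since $\lambda_k$ is comparatively large this summand is driven by its own spike and, using $\eunorm{\bm{\psi}_k}^2=\Theta(N)$ together with the two-sided bound on $\mathbf{K}_{-k}$, evaluates to $\Theta(1/N)$, so the $l$ head terms contribute $\gtrsim l/N$. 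Summing the two pieces yields the claimed bound.

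I expect the main obstacle to be the bookkeeping that decouples the head (low rank, large eigenvalues, $\lambda_1/\lambda_{l+1}$ possibly enormous) from the tail (isotropic bulk): crude operator-norm control of $\mathbf{K}_{\le l}$ loses exactly the $\lambda_1/\lambda_{l+1}$ factors one cannot afford, so one must instead work on the orthogonal complement of the top spike directions and run the leave-one-out / sub-Gram-matrix estimates of \cite{tsigler2023benign} uniformly over $k\le l$. For this reason I would not re-derive the inequality but simply cite their Lemma~7 and Theorem~10, after checking that their sub-Gaussian design hypothesis coincides with Assumption~\ref{assumption:sub_gaussian_design}.
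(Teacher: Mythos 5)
The paper does not prove this result: it is stated purely as a citation of Lemma~7 and Theorem~10 of \cite{tsigler2023benign} in the technical-lemmata section, which is exactly the route you adopt in your final paragraph after checking that the sub-Gaussian design hypothesis matches Assumption~\ref{assumption:sub_gaussian_design}. Your expository sketch of the underlying mechanism (the trace identity for $\mathcal{V}$, the head/tail split of $\mathbf{K}$, tail concentration via effective rank, and the Sherman--Morrison / leave-one-out treatment of the head needed to avoid losing $\lambda_1/\lambda_{l+1}$ factors) is a faithful summary of the Tsigler--Bartlett argument, but it is supplementary; the paper itself supplies none of it.
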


\begin{lemma}[Negative second moment identity, Exercise 2.7.3 in \cite{tao2012topics}] \label{lemma:negative_second_moment}
    Let $\mathbf{M}$ be an invertible $n\times n$ matrix, let $\mathbf{R}_1,...,\mathbf{R}_n$ be the rows of $\mathbf{M}$ and let $\mathbf{C}_1,...,\mathbf{C}_n$ be the columns of $\mathbf{M}^{-1}$. For each $1\leq i\leq n$, let $\mathbf{N}_i$ be a unit normal vector orthogonal to the subspace spanned by the all rows $\mathbf{R}_1,...,\mathbf{R}_n$ except $\mathbf{R}_i$. Then we have 
    \begin{equation*}
        \eunorm{\mathbf{C}_i}^2 = (\mathbf{R}_i^\top\mathbf{N}_i)^{-2}
        \text{ and }
        \sum_{i=1}^n s_i(\mathbf{M})^{-2} = \sum_{i=1}^n  (\mathbf{R}_i^\top\mathbf{N}_i)^{-2}.
    \end{equation*}
\end{lemma}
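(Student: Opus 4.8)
The plan is to derive both identities from the single relation $\mathbf{M}\mathbf{M}^{-1}=\mathbf{I}_n$. First I would read this off entrywise: the $(i,j)$ entry of the product is the inner product of the $i$-th row of $\mathbf{M}$ with the $j$-th column of $\mathbf{M}^{-1}$, i.e. $\mathbf{R}_i^\top\mathbf{C}_j=\delta_{ij}$. The off-diagonal relations $\mathbf{R}_j^\top\mathbf{C}_i=0$ for all $j\neq i$ say exactly that $\mathbf{C}_i$ is orthogonal to $\mathrm{span}\{\mathbf{R}_j:j\neq i\}$. Since $\mathbf{M}$ is invertible this span is an $(n-1)$-dimensional subspace, so its orthogonal complement is one-dimensional and is spanned by the unit vector $\mathbf{N}_i$; hence $\mathbf{C}_i=\alpha_i\mathbf{N}_i$ for some scalar $\alpha_i$.

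Next I would pin down $\alpha_i$ using the diagonal relation $\mathbf{R}_i^\top\mathbf{C}_i=1$, which becomes $\alpha_i\,(\mathbf{R}_i^\top\mathbf{N}_i)=1$. Here one must check $\mathbf{R}_i^\top\mathbf{N}_i\neq 0$: if it vanished, then $\mathbf{R}_i$ would lie in $\mathrm{span}\{\mathbf{R}_j:j\neq i\}$, contradicting invertibility of $\mathbf{M}$. So $\alpha_i=(\mathbf{R}_i^\top\mathbf{N}_i)^{-1}$, and since $\mathbf{N}_i$ is a unit vector, $\eunorm{\mathbf{C}_i}^2=\alpha_i^2=(\mathbf{R}_i^\top\mathbf{N}_i)^{-2}$, which is the first identity. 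The ambiguity in the choice of orientation of $\mathbf{N}_i$ is harmless because the quantity is squared.

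For the second identity I would use that the singular values of $\mathbf{M}^{-1}$ are exactly the reciprocals $s_i(\mathbf{M})^{-1}$, together with the fact $\sum_i s_i(\mathbf{B})^2=\norm{\mathbf{B}}{F}^2$ applied to $\mathbf{B}=\mathbf{M}^{-1}$. Writing the squared Frobenius norm of $\mathbf{M}^{-1}$ as the sum of the squared norms of its columns gives $\sum_{i=1}^n s_i(\mathbf{M})^{-2}=\sum_{i=1}^n\eunorm{\mathbf{C}_i}^2$, and substituting the first identity yields $\sum_{i=1}^n(\mathbf{R}_i^\top\mathbf{N}_i)^{-2}$.

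There is no genuine obstacle here — the result is a short bookkeeping exercise combining $\mathbf{M}\mathbf{M}^{-1}=\mathbf{I}_n$ with the column-norm expansion of the Frobenius norm. The only point that really requires the hypotheses is the non-vanishing of $\mathbf{R}_i^\top\mathbf{N}_i$ (and, relatedly, that the span of the remaining $n-1$ rows genuinely has dimension $n-1$, so $\mathbf{N}_i$ is determined up to sign), both of which follow from invertibility of $\mathbf{M}$.
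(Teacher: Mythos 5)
Your proof is correct and follows essentially the same approach as the paper's: extract the biorthogonality relations $\mathbf{R}_i^\top\mathbf{C}_j=\delta_{ij}$ from $\mathbf{M}\mathbf{M}^{-1}=\mathbf{I}_n$, deduce that $\mathbf{C}_i$ is a scalar multiple of $\mathbf{N}_i$ with the scalar fixed by the diagonal relation, and then identify $\sum_i s_i(\mathbf{M})^{-2}$ with the squared Frobenius norm of $\mathbf{M}^{-1}$ expanded column-by-column. The only difference is that you spell out more carefully why the span of the remaining rows has dimension exactly $n-1$ and why $\mathbf{R}_i^\top\mathbf{N}_i\neq 0$, both of which the paper leaves implicit.
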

\begin{proof}
    Note that $\mathbf{R}_i^\top\mathbf{C}_j=\delta_{ij}$ and the rows $\mathbf{R}_i$'s spans the space $\R^N$. Hence we have $\mathbf{C}_i=\pm\eunorm{\mathbf{C}_i}\mathbf{N}_i$ for all $i$ and $\eunorm{\mathbf{C}_i}^2= (\mathbf{R}_i^\top\mathbf{C}_i/\mathbf{R}_i^\top\mathbf{N}_i)^2 = (\mathbf{R}_i^\top\mathbf{N}_i)^{-2}$ which proves the first statement. For the second statement, note that 
    \begin{equation*}
        \sum_{i=1}^n \lambda_{i}(\mathbf{M})^{-2} 
        =\sum_{i=1}^n \lambda_{i}(\mathbf{M}^{-1})^{2} 
        =\tr[(\mathbf{M}^{-1})^\top(\mathbf{M}^{-1})]
        = \sum_{i=1}^n \eunorm{\mathbf{C}_i}^2
        = \sum_{i=1}^n  (\mathbf{R}_i^\top\mathbf{N}_i)^{-2}.
    \end{equation*}
\end{proof}
\begin{lemma}[lower bound of $s_{\min}$] \label{lemma:s_min:lb}
    $\mathbf{K}_N=\sum_{k=1}^N \lambda_k\bm{\psi}_k\bm{\psi}_k^\top \prec \mathbf{K}$. Let $\bm{\Lambda}_N=\diag(\lambda_k)_{k=1}^N\in\R^{N\times N}$ and $\bm{\Psi}_N=(\bm{\psi}_k)_{k=1}^N\in\R^{N\times N}$ and set $\mathbf{M}=\bm{\Lambda}_N^{1/2}\bm{\Psi}_N$ which is invertible almost surely. Note that $\mathbf{K}_N=\mathbf{M}^\top\mathbf{M}$.
    Let $\mathbf{R}_1,...,\mathbf{R}_n$ be the rows of $\mathbf{M}$ and let $\mathbf{C}_1,...,\mathbf{C}_n$ be the columns of $\mathbf{M}^{-1}$. For each $1\leq i\leq n$, let $\mathbf{N}_i$ be a unit normal vector orthogonal to the subspace spanned by the all rows $\mathbf{R}_1,...,\mathbf{R}_n$ except $\mathbf{R}_i$. we have
    \begin{equation*}
        s_{\min}(\mathbf{K}) \geq  \frac{\lambda_N}{\sum_{k=1}^N\frac{\lambda_N}{\lambda_k}(\bm{\psi}_k^\top\mathbf{N}_k)^{-2}}.
    \end{equation*}
\end{lemma}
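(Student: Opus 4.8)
The statement is essentially a one-line consequence of the negative second moment identity (Lemma~\ref{lemma:negative_second_moment}) applied to $\mathbf{M}=\bm{\Lambda}_N^{1/2}\bm{\Psi}_N$, together with the monotonicity $s_{\min}(\mathbf{K})\geq s_{\min}(\mathbf{K}_N)$, so I would organize the argument in three short steps. First, I would identify the rows of $\mathbf{M}$. Since $\bm{\Psi}_N$ has rows $\bm{\psi}_k^\top$ and $\bm{\Lambda}_N^{1/2}$ rescales the $k$-th row by $\sqrt{\lambda_k}$, the $k$-th row of $\mathbf{M}$ is $\mathbf{R}_k=\sqrt{\lambda_k}\,\bm{\psi}_k$. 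Consequently the unit normal $\mathbf{N}_k$ to $\mathrm{span}\{\mathbf{R}_j:j\neq k\}$ is the same as the unit normal to $\mathrm{span}\{\bm{\psi}_j:j\neq k\}$ (the positive scalars $\sqrt{\lambda_j}$ do not affect the span), so $\mathbf{R}_k^\top\mathbf{N}_k=\sqrt{\lambda_k}\,\bm{\psi}_k^\top\mathbf{N}_k$. Note $\mathbf{M}$ is invertible a.s., so all quantities are well defined a.s.

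Second, I would invoke Lemma~\ref{lemma:negative_second_moment}:
\begin{equation*}
    \sum_{k=1}^N s_k(\mathbf{M})^{-2}
    = \sum_{k=1}^N (\mathbf{R}_k^\top\mathbf{N}_k)^{-2}
    = \sum_{k=1}^N \frac{1}{\lambda_k}(\bm{\psi}_k^\top\mathbf{N}_k)^{-2}.
\end{equation*}
Since $\mathbf{K}_N=\mathbf{M}^\top\mathbf{M}$, we have $s_k(\mathbf{M})^2=s_k(\mathbf{K}_N)$, so the left side equals $\sum_{k=1}^N s_k(\mathbf{K}_N)^{-1}$. Dropping all terms but the largest summand $s_{\min}(\mathbf{K}_N)^{-1}$ gives
\begin{equation*}
    s_{\min}(\mathbf{K}_N)^{-1} \leq \sum_{k=1}^N s_k(\mathbf{K}_N)^{-1} = \sum_{k=1}^N \frac{1}{\lambda_k}(\bm{\psi}_k^\top\mathbf{N}_k)^{-2},
\end{equation*}
and inverting (all quantities positive a.s.) yields $s_{\min}(\mathbf{K}_N)\geq \big(\sum_{k=1}^N \lambda_k^{-1}(\bm{\psi}_k^\top\mathbf{N}_k)^{-2}\big)^{-1}=\lambda_N\big(\sum_{k=1}^N \tfrac{\lambda_N}{\lambda_k}(\bm{\psi}_k^\top\mathbf{N}_k)^{-2}\big)^{-1}$.

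Third, I would pass from $\mathbf{K}_N$ to $\mathbf{K}$. By construction $\mathbf{K}-\mathbf{K}_N=\sum_{k>N}\lambda_k\bm{\psi}_k\bm{\psi}_k^\top\succeq 0$, i.e.\ $\mathbf{K}_N\preceq\mathbf{K}$, so Weyl's monotonicity for symmetric matrices gives $s_{\min}(\mathbf{K})\geq s_{\min}(\mathbf{K}_N)$. Chaining this with the bound from the previous step finishes the proof. There is no serious obstacle here: the only place the argument is not an equality is the coarse bound $s_{\min}^{-1}\leq\sum_k s_k^{-1}$, which is exactly what turns the negative second moment \emph{identity} into the desired \emph{lower bound}; the rest is bookkeeping of the $\sqrt{\lambda_k}$ factors.
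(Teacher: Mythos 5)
Your proof is correct and follows essentially the same route as the paper: invoke the negative second moment identity (Lemma~\ref{lemma:negative_second_moment}) on $\mathbf{M}=\bm{\Lambda}_N^{1/2}\bm{\Psi}_N$, use $s_{\min}(\mathbf{K}_N)^{-1}\leq\sum_k s_k(\mathbf{K}_N)^{-1}$, and pass to $\mathbf{K}$ via $\mathbf{K}_N\preceq\mathbf{K}$. The only addition is your explicit check that rescaling the rows by $\sqrt{\lambda_k}$ leaves $\mathbf{N}_k$ unchanged, which the paper leaves implicit.
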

\begin{proof}
    Since $s_{\min} \geq s_N(\mathbf{K}_N)$, WLOG: assume $M=N$. Then by Lemma \ref{lemma:negative_second_moment}, 
    \begin{equation*}
        s_N(\mathbf{K}_N)^{-1}
        \leq \sum_{k=1}^N s_k(\mathbf{K}_N)^{-1} 
        = \sum_{k=1}^N s_k(\mathbf{M})^{-2} 
        = \sum_{k=1}^N \left( \sqrt{\lambda_k}\bm{\psi}_k^\top \mathbf{N}_k  \right)^{-2},
    \end{equation*}
    where $\mathbf{N}_k$ denote a unit normal vector orthogonal to the subspace spanned by the all rows $\mathbf{R}_1,...,\mathbf{R}_n$ of $\mathbf{M}$ except $\mathbf{R}_i$. Hence
    \begin{equation} \label{line:smallest_singular_value_lower_bound}
        s_{\min} 
        \geq s_N(\mathbf{K}_N) 
        \geq \frac{\lambda_N}{\sum_{k=1}^N\frac{\lambda_N}{\lambda_k}(\bm{\psi}_k^\top\mathbf{N}_k)^{-2}}.
    \end{equation}
\end{proof}

\begin{lemma}[Anti-Concentration Result For Gaussian Laws] \label{lemma:anti_concentration_gaussian}
    Let $g$ be a standard Gaussian variable, then
    \begin{equation} \label{line:anticoncentration}
        \Prob{|g|\leq t} \leq \frac{2t}{\sqrt{2\pi}},\ \forall t\geq0.
    \end{equation}
\end{lemma}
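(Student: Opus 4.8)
The plan is to exploit the only fact one really needs about the standard normal law: its density is uniformly bounded by its peak value. Recall that $g$ has density $\varphi(x) = \frac{1}{\sqrt{2\pi}} e^{-x^2/2}$ on $\R$, and since $e^{-x^2/2} \leq 1$ for every real $x$, we have the crude pointwise bound $\varphi(x) \leq \frac{1}{\sqrt{2\pi}}$.

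First I would write $\Prob{|g|\leq t} = \int_{-t}^{t} \varphi(x)\,dx$, which is valid for all $t \geq 0$. Then, replacing the integrand by the constant upper bound $\frac{1}{\sqrt{2\pi}}$ and using that the interval $[-t,t]$ has Lebesgue measure $2t$, I obtain $\Prob{|g|\leq t} \leq \frac{1}{\sqrt{2\pi}}\cdot 2t = \frac{2t}{\sqrt{2\pi}}$, which is exactly the asserted inequality. The case $t=0$ is trivial since both sides vanish, and the argument above covers every $t>0$.

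There is essentially no obstacle in this proof; the only point worth flagging is that the bound deliberately throws away the Gaussian tail decay, but it is nonetheless in precisely the form consumed later — e.g.\ in the proof of Lemma~\ref{lemma:condition_number:exp}, where it is invoked with $t$ of the shape $\sqrt{(\lambda_N/\lambda_k)\, s\, e^{(a/2)(N-k)}}$ to control the quantities $(\bm{\psi}_k^\top \mathbf{N}_k)^{-2}$ via the negative second moment identity. An alternative would be to cite any off-the-shelf anti-concentration estimate for Gaussian (or more generally log-concave) distributions, but the elementary density bound already yields the sharp small-$t$ constant $\sqrt{2/\pi}$ with no extra work, so I would keep the self-contained argument.
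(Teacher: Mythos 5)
Your proof is correct and is the standard argument: the Gaussian density is maximized at the origin with value $\frac{1}{\sqrt{2\pi}}$, so integrating the constant bound over $[-t,t]$ gives the claim. The paper states this lemma without proof (it is a well-known elementary fact), so there is no alternative argument to compare against; your self-contained derivation is exactly what one would write.
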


\begin{lemma}[Sub-Exponential Deviation, see Corollary 5.17 in \cite{vershynin2010introduction}] \label{lemma:sub_exponential_deviation}
    Let $N\in\N$. Let $X_1,...,X_N$ be independent centered random variables with sub-exponential norms bounded by $B$. Then for any $\delta>0$,
    \begin{equation*}
        \Prob{|\sum_{i=1}^NX_i|>\delta N} \leq 2\exp\left(-C_5\min\left\{\frac{\delta^2}{B^2},\frac{\delta}{B}\right\}N\right),
    \end{equation*}
    where $C_5>0$ is an absolute constant.
    
    In particular, if $X\sim\chi(N)$ is the Chi-square distribution, then
    $\Prob{|\frac{X}{N}-1|>t}\leq 2e^{-Nt^2/8},\ \forall t\in(0,1)$.
\end{lemma}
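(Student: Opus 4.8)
The plan is to follow the classical Chernoff--Bernstein route that underlies Corollary 5.17 in \cite{vershynin2010introduction}. First I would record the moment generating function (MGF) description of sub-exponentiality: since each $X_i$ is centered with $\norm{X_i}{\psi_1}\leq B$, there are absolute constants $c_1,c_2>0$ such that $\Expect{}{e^{\lambda X_i}}\leq \exp(c_1\lambda^2B^2)$ whenever $|\lambda|\leq c_2/B$ (this is the standard equivalence between the sub-exponential norm and the near-origin growth of the MGF, and it is where centering is used). By independence and Markov's inequality applied to $\exp(\lambda\sum_i X_i)$, for any $0<\lambda\leq c_2/B$,
\begin{equation*}
    \Prob{\sum_{i=1}^N X_i>\delta N}
    \;\leq\; e^{-\lambda\delta N}\prod_{i=1}^N\Expect{}{e^{\lambda X_i}}
    \;\leq\; \exp\!\left(-\lambda\delta N+c_1\lambda^2B^2N\right).
\end{equation*}

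The second step is the constrained optimization over $\lambda$. The unconstrained minimizer of $-\lambda\delta+c_1\lambda^2B^2$ is $\lambda^\star=\delta/(2c_1B^2)$. If $\lambda^\star\leq c_2/B$ (equivalently $\delta\leq 2c_1c_2B$, the ``small-$\delta$'' regime), take $\lambda=\lambda^\star$ to get the Gaussian-type tail $\exp(-\delta^2N/(4c_1B^2))$. If $\delta>2c_1c_2B$, the exponent is still strictly decreasing at the boundary, so $\lambda=c_2/B$ yields the linear-type tail $\exp(-(c_2/(2B))\delta N)$. Both cases are captured by $\exp(-C_5\min\{\delta^2/B^2,\delta/B\}N)$ for a suitable absolute $C_5>0$; running the same argument for $-X_i$ and summing the two one-sided bounds produces the factor $2$ and the two-sided statement.

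For the chi-square corollary I would write $X=\sum_{i=1}^N g_i^2$ with $g_i$ i.i.d.\ standard Gaussian, so that $X/N-1$ is the average of the i.i.d.\ centered variables $g_i^2-1$, each of which is sub-exponential with an absolute norm; the general bound then already gives an estimate of the claimed shape. To land on the explicit constant $8$ and the range $t\in(0,1)$, I would instead feed the exact MGF $\Expect{}{e^{\lambda(g^2-1)}}=e^{-\lambda}(1-2\lambda)^{-1/2}$ (valid for $\lambda<1/2$) directly into the Chernoff bound: using $-\lambda-\tfrac12\log(1-2\lambda)=\sum_{k\geq2}\tfrac{(2\lambda)^k}{2k}\leq 2\lambda^2$ for $0\leq\lambda\leq\tfrac14$ and optimizing at $\lambda=t/4$ (admissible precisely when $t\leq1$) gives $\Prob{X/N-1>t}\leq e^{-Nt^2/8}$, and the analogous (in fact slightly stronger) computation with $1-g^2$ in place of $g^2-1$ handles the lower tail; adding the two yields $2e^{-Nt^2/8}$.

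The argument is essentially textbook, so there is no deep obstacle; the only points that demand care are the case split in the constrained optimization (remembering to evaluate at the boundary $\lambda=c_2/B$ when the interior optimum is inadmissible) and, for the corollary, expanding $\log(1-2\lambda)$ tightly enough to recover the sharp constant $8$ rather than settling for the constant produced by the black-box sub-exponential bound.
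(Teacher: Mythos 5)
The paper does not supply its own proof of this lemma: it is stated in the ``Technical Lemmata'' section as a citation of Corollary 5.17 in \cite{vershynin2010introduction}, with the chi-square specialization appended. Your proposal is the standard Chernoff--Bernstein derivation that underlies the cited result, and it is correct as written. In particular, the MGF bound $\Expect{}{e^{\lambda X_i}}\leq\exp(c_1\lambda^2B^2)$ for $|\lambda|\leq c_2/B$, the boundary-versus-interior case split in the optimization over $\lambda$, and the one-sided-to-two-sided upgrade via $\pm X_i$ are all in order. For the chi-square tail, your estimate $-\lambda-\tfrac12\log(1-2\lambda)=\sum_{k\geq2}\tfrac{(2\lambda)^k}{2k}\leq\tfrac{\lambda^2}{1-2\lambda}\leq2\lambda^2$ for $0\leq\lambda\leq\tfrac14$ is valid, the optimizer $\lambda=t/4$ is admissible exactly on $t\in(0,1)$, and the exponent $-t^2/4+2(t/4)^2=-t^2/8$ gives the stated constant. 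Since the paper cites rather than proves the lemma, there is no comparison of routes to make; your argument reproduces the expected textbook proof and its sharp constant.
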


\begin{theorem}[Theorem 1.1 in \cite{rudelson2009smallest} /Theorem 5.38 in \cite{vershynin2010introduction}] \label{theorem:rudelson:2}
    Let $\mathbf{A}$ be an $N\times n$ random matrix whose entries are i.i.d. sub-Gaussian random variables with zero mean and unit variance. Then there exists constants $C_6>0,C_7\in(0,1)$ such that for any $\delta>0$,
    \begin{equation*}
        \Prob{s_{\min}(\mathbf{A})\leq \delta (\sqrt{N}-\sqrt{n-1})} \leq (C_6\delta)^{N-n+1}+C_7^N.
    \end{equation*}
    In particular, if $N=n$,
    \begin{equation*}
        s_{\min}(\mathbf{A}) \gtrsim N^{-1/2}
    \end{equation*}
    with high probability.
\end{theorem}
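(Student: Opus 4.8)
\begin{sketchproof}
The plan is the compressible/incompressible decomposition of the unit sphere, in the spirit of Rudelson--Vershynin. Write $s_{\min}(\mathbf{A}) = \inf_{x \in \mathbb{S}^{n-1}} \|\mathbf{A}x\|_2$ and set $d = N-n+1$, so that $\sqrt{N}-\sqrt{n-1} = d/(\sqrt{N}+\sqrt{n-1})$, which is of order $d/\sqrt{N}$ and in particular never exceeds $\sqrt{d}$. First I would control the operator norm: since the entries are sub-Gaussian of unit variance, an $\epsilon$-net argument (Theorem \ref{theorem:vershynin:2}) gives $\|\mathbf{A}\| \leq C\sqrt{N}$ with probability $1-e^{-cN}$, which is needed to pass from pointwise estimates to uniform ones along a net. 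Then fix small constants $\rho_0,\delta_0\in(0,1)$ and split $\mathbb{S}^{n-1}=\mathrm{Comp}(\delta_0,\rho_0)\cup\mathrm{Incomp}(\delta_0,\rho_0)$, where $\mathrm{Comp}$ collects the unit vectors within Euclidean distance $\rho_0$ of some $\lceil\delta_0 n\rceil$-sparse unit vector. We may assume $\delta$ small, since for $\delta$ bounded below the claimed bound is trivial once $C_6$ is large.

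On $\mathrm{Comp}$ I would run a net argument. For a fixed $x\in\mathbb{S}^{n-1}$, $\|\mathbf{A}x\|_2^2 = \sum_{i=1}^N \langle \mathbf{A}_i,x\rangle^2$ is a sum of $N$ i.i.d.\ unit-variance sub-Gaussian squares, and tensorizing a one-dimensional anti-concentration estimate gives $\Prob{\|\mathbf{A}x\|_2 \leq c_1\sqrt{N}} \leq e^{-c_2N}$. The compressible set admits an $\rho_0$-net of cardinality at most $\binom{n}{\lceil\delta_0 n\rceil}(C/\rho_0)^{\delta_0 n} \leq e^{c_3\delta_0\log(1/\delta_0)\,n}$; choosing $\delta_0$ small enough that $c_3\delta_0\log(1/\delta_0)<c_2/2$, a union bound together with the operator-norm event yields $\inf_{\mathrm{Comp}}\|\mathbf{A}x\|_2 \geq c_4\sqrt{N}$ with probability $1-e^{-c_5N}$. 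Since $c_4\sqrt{N}\geq\delta(\sqrt{N}-\sqrt{n-1})$ whenever $\delta\leq c_4$, this branch contributes the $C_7^N$ term.

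On $\mathrm{Incomp}$ I would use the ``invertibility via distance'' reduction: incompressibility forces a constant fraction of the coordinates of $x$ to be of order $n^{-1/2}$, and, writing $\mathbf{A}x=\sum_k x_k\mathbf{A}^k$, the event $\|\mathbf{A}x\|_2<\epsilon\rho_0/\sqrt n$ forces $\mathrm{dist}(\mathbf{A}^k,H_k)<\epsilon$ for at least $\delta_0 n$ indices $k$, whence
\begin{equation*}
    \Prob{\inf_{x\in\mathrm{Incomp}}\|\mathbf{A}x\|_2 < \tfrac{\epsilon\rho_0}{\sqrt{n}}} \;\leq\; \frac{1}{\delta_0 n}\sum_{k=1}^n \Prob{\mathrm{dist}(\mathbf{A}^k,H_k) < \epsilon},
\end{equation*}
where $\mathbf{A}^k$ is the $k$-th column and $H_k$ the span of the remaining $n-1$ columns. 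Since $\dim H_k\leq n-1$, its orthogonal complement in $\mathbb{R}^N$ has dimension $\geq d$, $\mathbf{A}^k$ is independent of $H_k$, and $\mathrm{dist}(\mathbf{A}^k,H_k) = \|P_{H_k^\perp}\mathbf{A}^k\|_2$; a small-ball estimate for the norm of the projection of an i.i.d.\ sub-Gaussian vector onto a fixed $d$-dimensional subspace then gives $\Prob{\mathrm{dist}(\mathbf{A}^k,H_k)\leq \epsilon\sqrt{d}} \leq (C\epsilon)^{d}$. Replacing the threshold via $\sqrt{N}-\sqrt{n-1}\leq\sqrt{d}$, summing over $k$ and absorbing $1/(\delta_0 n)$ into the constant yields the $(C_6\delta)^{N-n+1}$ term; combining the two branches and relabeling constants gives the statement, and the case $N=n$ ($d=1$, so $\sqrt{N}-\sqrt{n-1}\asymp N^{-1/2}$) follows by taking $\delta$ a sufficiently small constant. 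I expect the main obstacle to be precisely the anti-concentration inputs used on $\mathrm{Incomp}$ and for $\mathrm{dist}(\mathbf{A}^k,H_k)$: for general, possibly discrete, sub-Gaussian entries there is no density to exploit, so one must invoke the Littlewood--Offord-type bound controlling $\Prob{|\sum_i v_ix_i - u|\leq\epsilon}$ in terms of the spread (essential least common denominator) of $v$, and then tensorize it to projections onto $d$-dimensional subspaces; this is where the real technical weight lies.
\end{sketchproof}
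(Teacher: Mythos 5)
The paper does not prove this statement: it is listed in the ``Technical Lemmata'' section as an external result, cited directly to Rudelson--Vershynin (2009) and Vershynin (2010), and is used as a black box in Lemma~\ref{lemma:s_min:lb:trivial}. So there is no ``paper's own proof'' to compare against.

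That said, your sketch is a faithful high-level reconstruction of the actual argument in the cited reference. The decomposition $\mathbb{S}^{n-1}=\mathrm{Comp}\cup\mathrm{Incomp}$, the operator-norm control for the net argument, the exponential bound $e^{-cN}$ on the compressible branch (yielding the $C_7^N$ term), the ``invertibility via distance'' reduction $\Prob{\inf_{\mathrm{Incomp}}\|\mathbf{A}x\|_2<\epsilon\rho/\sqrt n}\lesssim\frac{1}{\delta n}\sum_k\Prob{\mathrm{dist}(\mathbf{A}^k,H_k)<\epsilon}$, and the observation that $\sqrt N-\sqrt{n-1}=d/(\sqrt N+\sqrt{n-1})\leq\sqrt d$ with $d=N-n+1$ are all exactly the ingredients of Rudelson--Vershynin's proof. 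Two points deserve emphasis. First, your phrase ``projection onto a fixed $d$-dimensional subspace'' glosses over the fact that $H_k^\perp$ is itself random (determined by the other $n-1$ columns); the argument conditions on those columns and must then cope with an \emph{arbitrary} (worst-case) subspace, which is precisely why the Gaussian proof (rotation invariance) does not transfer. Second, and to your credit, you correctly flag that this worst-case small-ball estimate for $\mathrm{dist}(\mathbf{A}^k,H_k)$ is where essentially all of the work lives: one needs the Littlewood--Offord machinery --- small-ball probability controlled by the least common denominator (LCD) of normal vectors to $H_k$, a tensorized multidimensional LCD bound for projections onto $d$-dimensional subspaces, and a separate net argument showing that with high probability \emph{every} unit normal to a random hyperplane of the sub-Gaussian ensemble has exponentially large LCD. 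Filling this in is the content of most of the Rudelson--Vershynin paper, so as a blind sketch your proposal is about as complete as one can reasonably be without reproducing it; you have identified the right skeleton and the right missing organ.
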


\begin{theorem}[Theorem 5.58 in \cite{vershynin2010introduction}] \label{theorem:vershynin:3}
    Let $\mathbf{A}$ be an $N\times n$ matrix ($N\geq n$) with independent columns $\mathbf{A}_i\in\R^N$ of sub-Gaussian isotropic random vector with with $\eunorm{\mathbf{A}_i}=\sqrt{N}$ almost surely. Then there exists constants $C_8,C_9>0$ (depending only on the sub-Gaussian norm of entries of $\mathbf{A}$), such that for any $t\geq0$, with probability at least $1-2e^{-C_8t^2}$, we have 
    \begin{equation*}
        \sqrt{N}-C_9\sqrt{n} -t \leq s_{\min}(\mathbf{A}) \leq s_{\max}(\mathbf{A}) \leq \sqrt{N}+C_9\sqrt{n} +t.
    \end{equation*}
\end{theorem}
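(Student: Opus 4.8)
The plan is to pass from the singular values of $\mathbf{A}$ to the operator norm of the shifted Gram matrix $\mathbf{B}\eqdef\mathbf{A}^\top\mathbf{A}-N\mathbf{I}_n$. Since $s_{\min}(\mathbf{A})^2=\min_{x\in\mathbb{S}^{n-1}}\|\mathbf{A}x\|_2^2$, $s_{\max}(\mathbf{A})^2=\max_{x\in\mathbb{S}^{n-1}}\|\mathbf{A}x\|_2^2$, and $x^\top\mathbf{B}x=\|\mathbf{A}x\|_2^2-N$ for every unit vector $x$, a bound $\|\mathbf{B}\|\leq\epsilon\leq N$ immediately gives $\sqrt{N}-\epsilon/\sqrt{N}\leq s_{\min}(\mathbf{A})\leq s_{\max}(\mathbf{A})\leq\sqrt{N}+\epsilon/\sqrt{N}$, using $1-u\leq\sqrt{1-u}$ and $\sqrt{1+u}\leq 1+u$. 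So it suffices to show $\|\mathbf{B}\|\leq C(\sqrt{Nn}+\sqrt{N}\,s)$ with probability at least $1-2e^{-cs^2}$ for every $s\geq 0$, and then tune $s$ so that $C(\sqrt{n}+s)=C_9\sqrt{n}+t$; the remaining regime $\epsilon>N$ (equivalently $t\gtrsim\sqrt{N}$) is handled separately, since there the lower bound is vacuous and $s_{\max}(\mathbf{A})\leq C\sqrt{N}$ follows from a crude sub-Gaussian column estimate. The structural fact that makes this work is that $\mathbf{B}$ is symmetric with \emph{zero diagonal} — because $(\mathbf{A}^\top\mathbf{A})_{ii}=\|\mathbf{A}_i\|_2^2=N$ — with off-diagonal entries $\mathbf{A}_i^\top\mathbf{A}_j$.

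Next I would run an $\epsilon$-net argument on $\|\mathbf{B}\|$. Fixing a $\tfrac{1}{4}$-net $\mathcal{N}$ of $\mathbb{S}^{n-1}$ with $|\mathcal{N}|\leq 9^n$, one has $\|\mathbf{B}\|\leq 2\max_{x\in\mathcal{N}}|x^\top\mathbf{B}x|$. For a fixed $x\in\mathbb{S}^{n-1}$, set
\begin{equation*}
    Q_x\eqdef x^\top\mathbf{B}x=\|\mathbf{A}x\|_2^2-N=\sum_{i\neq j}x_ix_j\,\mathbf{A}_i^\top\mathbf{A}_j,
\end{equation*}
an off-diagonal quadratic chaos of order two in the independent vectors $\mathbf{A}_1,\dots,\mathbf{A}_n$. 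The goal for each fixed $x$ is the moment bound $\|Q_x\|_{L^p}\leq C(\sqrt{Np}+p)$ for all $p\geq 1$, equivalently the sub-exponential tail $\mathbb{P}(|Q_x|>u)\leq 2\exp(-c\min(u^2/N,\,u))$.

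Proving this concentration for $Q_x$ is the step I expect to be the main obstacle, because Hanson--Wright is not directly available: the coordinates \emph{within} a column $\mathbf{A}_i$ need not be independent — only the columns are independent (and isotropic, sub-Gaussian). I would circumvent this by decoupling. By the classical decoupling inequality for off-diagonal chaos (de la Pe\~na--Montgomery-Smith), applied with $F(\cdot)=|\cdot|^p$, one has $\|Q_x\|_{L^p}\leq 4\,\|\widetilde Q_x\|_{L^p}$ where $\widetilde Q_x=\sum_{i,j}x_ix_j\,\mathbf{A}_i^\top\mathbf{A}_j'=\langle\mathbf{A}x,\mathbf{A}'x\rangle$ and $\mathbf{A}'$ is an independent copy of $\mathbf{A}$. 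Conditioning on $\mathbf{A}$, the vector $\mathbf{A}'x=\sum_j x_j\mathbf{A}_j'$ is sub-Gaussian in $\R^N$ with $\psi_2$-norm at most a constant $K$ depending only on the sub-Gaussian norm of the columns (since $\|x\|_2=1$ and the $\mathbf{A}_j'$ are independent), so $\widetilde Q_x\mid\mathbf{A}$ is a sub-Gaussian scalar with parameter $\lesssim K\|\mathbf{A}x\|_2$. Writing $m_p\eqdef(\mathbb{E}|Q_x|^p)^{1/p}$, taking $p$-th moments first in $\mathbf{A}'$ and then in $\mathbf{A}$ gives $m_p\leq CK\sqrt{p}\,(\mathbb{E}\|\mathbf{A}x\|_2^p)^{1/p}$; and since $\|\mathbf{A}x\|_2^2=N+Q_x\leq N+|Q_x|$ forces $(\mathbb{E}\|\mathbf{A}x\|_2^p)^{1/p}\leq\sqrt{N}+\sqrt{m_p}$ (Minkowski plus monotonicity of moments), we arrive at the self-bounding inequality $m_p\leq a\sqrt{N}+a\sqrt{m_p}$ with $a\eqdef CK\sqrt{p}$. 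Solving the induced quadratic in $\sqrt{m_p}$ yields $m_p\lesssim a^2+a\sqrt{N}\lesssim p+\sqrt{Np}$, which is exactly the wanted bound.

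Finally I would assemble everything. Markov's inequality turns the moment bound into $\mathbb{P}(|Q_x|>e(\sqrt{Np}+p))\leq e^{-p}$ for all $p\geq 1$. Choosing $p=c_0n+s$ with $c_0$ large enough to absorb $\log|\mathcal{N}|\leq n\log 9$ and taking a union bound over $\mathcal{N}$ gives $\max_{x\in\mathcal{N}}|Q_x|\leq C_1(\sqrt{N(n+s)}+n+s)$ with probability at least $1-e^{-s}$; bounding $\sqrt{N(n+s)}\leq\sqrt{Nn}+\sqrt{Ns}$ and, in the range $s\leq N$, also $n+s\leq\sqrt{Nn}+\sqrt{Ns}$, this becomes $\|\mathbf{B}\|\leq 4C_1(\sqrt{Nn}+\sqrt{Ns})$. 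Dividing by $\sqrt{N}$, setting $C_9=4C_1$ and $s=(t/4C_1)^2$, adjusting the constant in the exponent, and absorbing the remaining $t\gtrsim\sqrt{N}$ regime into the spare factor $2$, we recover $\sqrt{N}-C_9\sqrt{n}-t\leq s_{\min}(\mathbf{A})\leq s_{\max}(\mathbf{A})\leq\sqrt{N}+C_9\sqrt{n}+t$ with probability at least $1-2e^{-C_8t^2}$. Everything apart from the chaos concentration (the net, the union bound, and the passage between the Gram matrix and the singular values) is routine.
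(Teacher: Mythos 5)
The paper itself does not prove this statement; it is quoted verbatim (as Theorem~5.58) from Vershynin's lecture notes, which is where the actual proof lives. Your argument follows essentially the same strategy as Vershynin's: reduce to bounding $\|\mathbf{A}^\top\mathbf{A}-N\mathbf{I}_n\|$, run a $\tfrac14$-net argument, observe that the normalization $\|\mathbf{A}_i\|_2=\sqrt{N}$ kills the diagonal so that only an off-diagonal chaos $Q_x$ survives, decouple it, and condition on one factor of the resulting bilinear form to get a conditionally sub-Gaussian scalar. The cosmetic difference is the decoupling device: Vershynin uses a random split of $[n]$ into $T$ and $T^c$ (his Lemma~5.60), whereas you invoke de la Pe\~na--Montgomery-Smith to introduce a fresh independent copy $\mathbf{A}'$; both are valid, and your self-bounding moment inequality $m_p\leq a\sqrt{N}+a\sqrt{m_p}$ is a clean way to break the apparent circularity in controlling $\mathbb{E}\|\mathbf{A}x\|_2^p$. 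Two small points to tighten. First, the decoupled chaos given by de la Pe\~na--Montgomery-Smith is $\sum_{i\neq j}x_ix_j\mathbf{A}_i^\top\mathbf{A}_j'$, not $\langle\mathbf{A}x,\mathbf{A}'x\rangle$; the two differ by the diagonal $\sum_i x_i^2\mathbf{A}_i^\top\mathbf{A}_i'$, which conditionally on $\mathbf{A}$ is sub-Gaussian of parameter $O(\sqrt{N})$ and hence of order $\sqrt{Np}$ in $L^p$, so it is absorbable into the target bound, but it should be said. Second, for the regime $t\gtrsim\sqrt{N}$ a crude column estimate gives only $s_{\max}(\mathbf{A})\leq\sqrt{nN}$, not $\leq C\sqrt{N}$, so the fallback you sketch does not by itself close that case; it is simpler to compare $s_{\max}(\mathbf{A})^2\leq N+\|\mathbf{B}\|$ directly with $(\sqrt{N}+C_9\sqrt{n}+t)^2$ term by term, which works uniformly over all $t\geq0$ and removes the case split entirely.
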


\begin{lemma}[Corollary 4.3.12 in \cite{horn2012matrix}] \label{lemma:horn}
    Let $\M_1,\M_2$ are symmetric matrix. If $\M_2$ is positive semi-definite, then
    \begin{equation*}
        s_{\max}(\M_1) \leq s_{\max}(\M_1+\M_2), \quad s_{\min}(\M_1) \leq s_{\min}(\M_1+\M_2).
    \end{equation*}
\end{lemma}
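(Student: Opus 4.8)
The plan is to deduce the lemma from the Courant--Fischer variational characterization of the extreme eigenvalues of a symmetric matrix, i.e.\ to treat it as an instance of Weyl's monotonicity theorem. Throughout, $s_{\max}(\cdot)$ and $s_{\min}(\cdot)$ are read as the largest and smallest \emph{eigenvalues} of a symmetric matrix; when the matrix is positive semi-definite these coincide with the extreme singular values, and that is the only regime in which the lemma is invoked in the paper (e.g.\ on the kernel sub-matrices $\A_1,\A_2,\De_1,\De_2$ in Proposition \ref{proposition:finite_rank}), so no ambiguity arises.

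First I would record the two Rayleigh-quotient identities
\begin{equation*}
    s_{\max}(\M) = \max_{\|v\| = 1} v^\top \M v, \qquad s_{\min}(\M) = \min_{\|v\| = 1} v^\top \M v,
\end{equation*}
valid for every symmetric $\M$, together with the elementary equivalence between $\M_2 \succeq 0$ and $v^\top \M_2 v \geq 0$ for all $v$.

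For the first inequality, take a unit vector $v^\star$ attaining the maximum in the Rayleigh quotient of $\M_1$; then
\begin{equation*}
    s_{\max}(\M_1 + \M_2) \;\geq\; (v^\star)^\top(\M_1+\M_2)v^\star \;=\; (v^\star)^\top \M_1 v^\star + (v^\star)^\top \M_2 v^\star \;\geq\; (v^\star)^\top \M_1 v^\star \;=\; s_{\max}(\M_1).
\end{equation*}
For the second, take a unit vector $w^\star$ attaining the minimum in the Rayleigh quotient of $\M_1 + \M_2$; then
\begin{equation*}
    s_{\min}(\M_1 + \M_2) \;=\; (w^\star)^\top \M_1 w^\star + (w^\star)^\top \M_2 w^\star \;\geq\; (w^\star)^\top \M_1 w^\star \;\geq\; \min_{\|v\|=1} v^\top \M_1 v \;=\; s_{\min}(\M_1).
\end{equation*}

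There is no genuine obstacle here: the statement is a textbook consequence of the min--max principle, and the only point worth a sentence is pinning down the meaning of $s_{\max}$ and $s_{\min}$ for symmetric matrices that need not be positive semi-definite, as done above. One could equivalently cite Weyl's eigenvalue inequality $\lambda_{\min}(\M_2) + \lambda_j(\M_1) \leq \lambda_j(\M_1+\M_2)$ and use $\lambda_{\min}(\M_2) \geq 0$, but the direct Rayleigh-quotient argument is self-contained.
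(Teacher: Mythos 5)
Your proof is correct and is the standard Rayleigh-quotient (Courant--Fischer) argument; it is the same route as the cited Corollary~4.3.12 in Horn and Johnson, which the paper invokes without reproducing a proof. Your preliminary remark that $s_{\max}$ and $s_{\min}$ must here be read as extreme eigenvalues rather than singular values for a general symmetric $\M_1$ is a genuine and worthwhile clarification: interpreted as singular values the second inequality fails (take $\M_1=-\I$, $\M_2=\I$, so $s_{\min}(\M_1)=1>0=s_{\min}(\M_1+\M_2)$), but, as you observe, every invocation of the lemma in the paper is on positive semi-definite matrices, where the two readings coincide.
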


\begin{lemma}[Variance expression] \label{lemma:variance:expression}
    Recall the definition of the variance $\V \eqdef \mathcal{R} - \B$. In the case of kernel ridgeless regression where $\lambda=0$ and the kernel matrix $\K$ can be written as $\K = \Ps^\top\La\Ps$ by Mercer decomposition, the variance admits the following expression:
    \begin{equation*}
        \V
        =
        \sigma^2 \tr\left[ (\Ps^\top\La^2\Ps)(\Ps^\top\La\Ps)^{-2} \right].
    \end{equation*}
\end{lemma}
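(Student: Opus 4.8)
The plan is a direct computation starting from the bias--variance decomposition in Definition \ref{definition:test_error}. Write the labels as $\y = f^*(\mathbf{X}) + \ep$, with $\ep$ the centered noise vector independent of the test point $x$ and of the (fixed) training data, and split the regressor accordingly:
\begin{equation*}
    \hat f(x) = \K_x^\top\K^{-1}\y = \K_x^\top\K^{-1}f^*(\mathbf{X}) + \K_x^\top\K^{-1}\ep .
\end{equation*}
Expanding the square in $\mathcal{R} = \Expect{x,\ep}{(f^*(x)-\hat f(x))^2}$ produces three terms: the first is exactly $\B = \Expect{x}{(f^*(x)-\K_x^\top\K^{-1}f^*(\mathbf{X}))^2}$; the cross term vanishes because $\ep$ is centered and independent of $x$, so that conditioning on $x$ the factor $\Expect{\ep}{\ep}=0$ pulls out; and the remaining term is $\Expect{x,\ep}{(\K_x^\top\K^{-1}\ep)^2}$. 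Hence $\V = \mathcal{R}-\B = \Expect{x,\ep}{(\K_x^\top\K^{-1}\ep)^2}$.

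Next I would integrate out the noise first. Since the $\epsilon_i$ are i.i.d.\ with variance $\sigma^2$, we have $\Expect{\ep}{\ep\ep^\top}=\sigma^2\I_N$, and therefore
\begin{equation*}
    \V = \sigma^2\,\Expect{x}{\K_x^\top\K^{-2}\K_x}.
\end{equation*}
Now invoke the Mercer factorization $\K = \Ps^\top\La\Ps$ together with $\K_x = \Ps^\top\La\ps(x)$, where $\ps(x) = (\psi_k(x))_{k=1}^M$ is the feature vector of the test point. Writing the scalar $\K_x^\top\K^{-2}\K_x$ as a trace and moving the expectation inside,
\begin{equation*}
    \Expect{x}{\K_x^\top\K^{-2}\K_x}
    = \tr\!\left[\La\Ps\K^{-2}\Ps^\top\La\,\Expect{x}{\ps(x)\ps(x)^\top}\right]
    = \tr\!\left[\La\Ps\K^{-2}\Ps^\top\La\right],
\end{equation*}
where $\Expect{x}{\ps(x)\ps(x)^\top}=\I_M$ by orthonormality of the Mercer eigenfunctions under $\mu$ (equivalently, by isotropy under Assumption \ref{assumption:sub_gaussian_design}). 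Finally, cyclicity of the trace together with $\K = \Ps^\top\La\Ps$ gives $\tr[\La\Ps\K^{-2}\Ps^\top\La] = \tr[(\Ps^\top\La^2\Ps)(\Ps^\top\La\Ps)^{-2}]$, which is the claimed identity.

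I do not anticipate a serious obstacle; the argument is a routine linear-algebra computation. The only points requiring care are: (i) matching the first term of the expansion with the stated definition of $\B$ (the display there abbreviates $\K_x^\top\K^{-1}$ as $\K_x^\top\K$); (ii) justifying that the cross term vanishes, which uses both the independence of $\ep$ from the test point and the centeredness of the noise; and (iii) invoking the isotropy $\Expect{x}{\ps(x)\ps(x)^\top}=\I_M$ of the test feature vector, which is precisely what Mercer's theorem (or the sub-Gaussian design assumption) supplies. One should also note that $\Ps$ is a fixed realization throughout, so the final expression is a function of the realized design matrix, consistent with the ``fixed sample'' framing.
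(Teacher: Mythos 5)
Your argument is correct and essentially identical to the paper's: both proofs reduce $\V$ to $\Expect{x,\ep}{(\K_x^\top\K^{-1}\ep)^2}$, integrate out $\ep$ via $\Expect{\ep}{\ep\ep^\top}=\sigma^2\I_N$ and $x$ via the Mercer orthonormality $\Expect{x}{\ps(x)\ps(x)^\top}=\I_M$, and finish by cyclicity of the trace. The only cosmetic difference is the order in which the two expectations are carried out (you do $\ep$ then $x$; the paper does $x$ then $\ep$), which changes nothing of substance.
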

\begin{proof}
    By definition,
    \begin{align*}
        \V
        &=
        \mathcal{R} - \B\\
        &=
        \Expect{x,\epsilon}{(f^*(x)-\hat{f}(x))^2}
        -
        \Expect{x,\epsilon}{(f^*(x)-\K_x^\top\K^{-1}f^*(\mathbf{X}))^2}\\
        &=
        \Expect{x,\epsilon}{(\K_x^\top\K^{-1}\ep)^2}\\
        &=
        \Expect{x, \epsilon}{\ep^\top\K^{-1}\K_x\K_x^\top\K^{-1}\ep}\\
        &=
        \Expect{\epsilon}{\ep^\top\K^{-1}\Ps^\top\La^2\Ps\K^{-1}\ep}\\
        &=
         \Expect{\epsilon}{\tr\left[\K^{-1}\Ps^\top\La^2\Ps\K^{-1}\ep\ep^\top\right]}\\
        &=
        \sigma^2 \tr\left[\K^{-1}\Ps^\top\La^2\Ps\K^{-1}\right]\\
        &=
        \sigma^2 \tr\left[(\Ps^\top\La^2\Ps)(\Ps^\top\La\Ps)^{-2}\right].
    \end{align*}
\end{proof}

\end{document}